\def\eqref#1{equation~\ref{#1}}
\def\1{\bm{1}}
\DeclareMathAlphabet{\mathsfit}{\encodingdefault}{\sfdefault}{m}{sl}
\SetMathAlphabet{\mathsfit}{bold}{\encodingdefault}{\sfdefault}{bx}{n}
\def\spl{\\[2mm]}
\setlist{nosep} 
\setlist[itemize,enumerate]{leftmargin=2em,labelsep=0.5em}
\renewcommand\fbox{\fcolorbox{red}{white}}
\title{Communication-Efficient  Federated Learning  over Wireless Channels via Gradient Sketching}
\author{\name Vineet Sunil Gattani \email       vgattan1@asu.edu \\
      \addr Department of ECEE\\
      Arizona State University
      \AND
      \name Junshan Zhang \email jazh@ucdavis.edu\\
      \addr Department of ECE \\
      University of California, Davis  
      \AND
      \name Gautam Dasarathy \email gautamd@asu.edu \\
      \addr Department of ECEE\\
      Arizona State University}
\newtheorem{theorem}{Theorem}
\newtheorem{lemma}{Lemma}
\newtheorem{definition}{Definition}
\newtheorem{assumption}{Assumption}
\newenvironment{proof}{{\bf Proof:}}{\hfill\rule{2mm}{2mm}}
\newcommand\Rb{\mathbb{R}}
\newcommand\Eb{\mathbb{E}}
\newcommand\bw{\mathbf{w}}
\newcommand\bX{\mathbf{X}}
\newcommand\bn{\mathbf{n}}
\newcommand\bg{\mathbf{g}}
\newcommand\calU{\mathcal{U}}
\begin{document}

\maketitle

\begin{abstract}
Large-scale federated learning (FL) over wireless multiple access channels (MACs) has emerged as a crucial learning paradigm with a wide range of applications. However, its widespread adoption is hindered by several major challenges, including limited bandwidth shared by many edge devices, noisy and erroneous wireless communications, and heterogeneous datasets with different distributions across edge devices. To overcome these fundamental challenges,  we propose Federated Proximal Sketching (FPS), tailored towards  band-limited wireless channels and handling data heterogeneity across edge devices. FPS uses a count sketch data structure to address the bandwidth bottleneck and enable efficient compression while maintaining accurate estimation of significant coordinates.  Additionally, we modify the loss function in FPS such that it is equipped to deal with varying degrees of data heterogeneity. We establish the convergence of the FPS algorithm under mild technical conditions and characterize how the bias induced due to factors like data heterogeneity and noisy wireless channels play a role in the overall result. We complement the proposed theoretical framework with numerical experiments that demonstrate the stability, accuracy, and efficiency of FPS in comparison to state-of-the-art methods on both synthetic and real-world datasets. Overall, our results show that FPS is a promising solution to tackling the above challenges of FL over wireless MACs.

\end{abstract}

\section{Introduction}
\label{sec:introduction}

\footnotetext{This work was supported in part by the National Science Foundation via the projects CNS-2003111 and CCF-2048233.}
In recent years, federated learning has emerged as an important paradigm for training high-dimensional machine learning models when the training data is distributed across several edge devices. However, when training is carried out over wireless channels in a federated setting, a number of challenges arise, including bandwidth limitations, unreliability and noise in  communication channels, and statistical heterogeneity (non-identical distribution) in data across edge devices 
\cite{kairouz2021}.
In what follows, we elaborate on three key challenges.
Firstly, with the size of real-world datasets and the machine learning model parameters scaling to the order of millions, communicating model parameters from edge devices to the server and back can become a major bottleneck in model training if not handled efficiently. Needless to say, the transmission of model parameters to the central server over wireless  channels is noisy and unreliable in nature. In practice, channel noise is inevitable during the training process and will induce bias in learning the global model parameters. Furthermore, the data collected and stored across edge devices is heterogeneous, which adds an extra layer of complexity due to diversity in local gradient updates. If statistical heterogeneity across edge devices is not handled properly, it can significantly extend the training time and cause the global model to diverge, resulting in poor and unstable performance. Therefore, it is of significant importance to design FL algorithms that are resilient to heterogeneous data distributions and reduce communication costs. 
While there exists siloed efforts  investigating the impacts of the above fundamental challenges separately, we devise a holistic approach - Federated Proximal Sketching (FPS) - that tackles these challenges in an integrated manner.    

To address the first key challenge of communication bottleneck, we propose the use of  count sketch (CS)~\cite{Charikar2002} as an efficient compression operator for model parameters, as illustrated in Figure \ref{fig:fps_illustration}.  The CS data structure is not only easy to implement but also comes with strong theoretical guarantees on the recovery of significant coordinates or heavy hitters. The CS data structure also enables us to apply the gradient updates easily so that at every time instant, we preserve information about the most important model parameters. 
With such a compressed representation of the model parameters,   over-the-air computing is employed to aggregate local information transmitted by each device. Specifically, over-the-air \cite{abari2016, goldenbaum2013} takes advantage of the superposition property of wireless multiple access channels, thereby scaling signal-to-noise ratio (SNR) well with an increasing number of edge devices. \sloppy


 To tackle the challenges due to  data heterogeneity,  we employ the proximal gradient method to `reshape' the loss function  by adding a regularization term. The regularization term is carefully designed such that it keeps the learned model parameters from diverging in the presence of data heterogeneity.  We  use experimental studies to demonstrate empirically that this modification to our loss function helps us reduce the number of communication rounds to the central server while maintaining high accuracy.
 
The regularization term also helps in curbing the effect of noise due to communication over wireless channels. There is an interesting line of literature highlighted in the Section \ref{sec:related_work}, which studied learning in the presence of noise by using regularization. 
In addition, we employ the count sketch data structure to produce reliable estimates of the ``heavy hitter'' (i.e., salient) coordinates. As the count sketch data structure uses multiple hashing functions, the process of sketching and unsketching provides denoising effect, and further the randomized nature of the hash functions produces a noise robust estimates of top-k coordinates. The usage of count sketching in conjunction with regularization forms the core of our strategy to tackle the challenge of FL under noisy wireless channel settings.

\begin{figure*}
\centering
\includegraphics[width=0.7\textwidth]{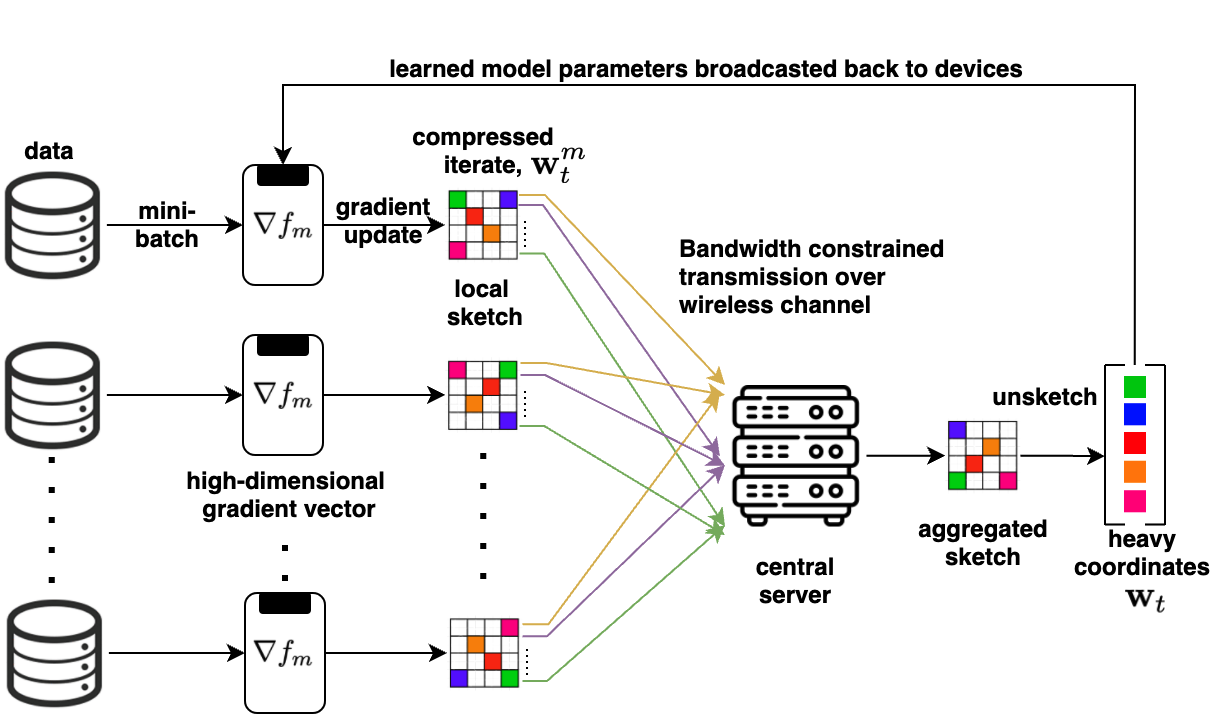}
\caption{Illustration of   Federated Proximal Sketching (FPS)  over wireless multi-access channels (MAC). }
\label{fig:fps_illustration} 
\end{figure*}
The main contributions of this paper can be summarized as follows:
\begin{itemize}
\item {\bf Federated Proximal Sketching. } We propose Federated Proximal Sketching (FPS), a novel and robust count-sketch based algorithm for federated learning in noisy wireless environments. FPS is designed to be highly communication-efficient and can effectively handle high-level data heterogeneity across edge devices. 


\item {\bf Impact of Gradient Estimation Errors. } As the communication of gradient updates  over noisy wireless channels may result in bias, we consider a general biased stochastic gradient structure and 
 quantify the impact of gradient estimation errors (including bias). We show that  in the presence of biased gradient updates, the FPS algorithm converges with high probability to a neighborhood of the desired global minimum, where the size of the neighborhood hinges upon the bias induced, under mild assumptions.  Note that the biased stochastic gradient structure here is more general than the existing line of works on FL \cite{Stich2018, Ivkin2019a, karimireddy20a}, which do not address the bias in the stochastic gradients, a key aspect in a large number of practical problems.

\item {\bf Statistical Heterogeneity. } We theoretically investigate the impact of varying degrees of statistical heterogeneity in data distributed across devices on the convergence. Our study is motivated by \cite{Li2018} to tackle data heterogeneity and extends it to the bandlimited noisy wireless channel setting. A key insight that  emerges from our analysis of FPS is that an interplay exists between the degree of data heterogeneity, rate of convergence, and choice of learning rate.
\item {\bf Experimental Studies:} 
We complement our theoretical studies with numerical experiments on both synthetic and real-world datasets. Our experimental results unequivocally demonstrate that FPS exhibits robust performance under noisy and bandlimited channel conditions. To evaluate the performance of our algorithm under varying degrees of class imbalance across edge devices, we investigate different data partitioning strategies. Our results show that, in practice, our algorithm achieves high compression rates on large-scale real-world datasets without significant loss in accuracy across various data distribution strategies. In fact, we observe an improved accuracy of more than 10 - 40\% over other competing FL algorithms in highly heterogeneous settings.
\end{itemize}

\section{Related Work}
\label{sec:related_work}
Our work looks at federated learning under three key challenges: 
(1) limited bandwidth across edge devices; (2) noisy wireless MACs; and (3) heterogeneous data distribution across devices. In what follows, we elaborate on different works which have addressed these three challenges. 
{\bf Communication Efficient Federated Learning. }Over the years, communication-efficient stochastic gradient descent (SGD) techniques have been developed to reduce transmission costs through various gradient compression methods such as quantization \cite{Bern2018,Wu2018a,Alistarh2017a} and sparsification \cite{Stich2018,Aji2017}. Sparsification methods like top$-k$ (in absolute value) and random$-k$ have demonstrated theoretical and empirical convergence. However, these approaches rely on the ability to store compression errors locally and reintroduce them in subsequent iterations to ensure convergence \cite{Karimireddy2019a}. A key limitation of top$-k$ sparsification is the additional communication rounds required between local edge devices to reach consensus on the global top$-k$ (heavy hitters) coordinates at each iteration. In bandlimited settings with a large number of edge devices, this is often impractical. Our work builds on the current research by applying sketching as a compression method in federated learning. \cite{Ivkin2019a} proposed a communication-efficient SGD algorithm that uses count-sketch to compress high-dimensional gradient vectors across edge devices. However, their algorithm requires a second round of communication between edge devices and the central server to estimate top$-k$ coordinates, which is often infeasible in practice due to latency and bandwidth limitations. Similarly, \cite{Rothchild2020a} introduced FetchSGD, which achieves convergence using sketching without the need for additional communication rounds. However, FetchSGD requires an additional error-accumulation structure at the central server to ensure convergence. Moreover, while the work claims that FetchSGD handles non-IID data distribution effectively, it lacks algorithmic details on addressing heterogeneous data distribution, and it does not provide a comprehensive theoretical or practical analysis in different data heterogeneity scenarios, which we address in our study. 

In contrast, our approach is inspired by \cite{aghazadeh2018}, who employed count-sketch to perform recursive SGD, thus eliminating the need for an additional count-sketch structure to store accumulated errors. In our method, the gradient updates are added to the count-sketch data structure at each time step, where they are aggregated with previous updates, providing a compressed representation of the model parameters. While \cite{aghazadeh2018} implemented their approach for a single device, we extend this framework to a federated learning setting over bandlimited noisy wireless channels.
{\bf Federated Learning over Wireless Channels. }In the previous section, we focused on communication-efficient FL under noiseless channels. In practice, transmitting gradient vectors over wireless channels introduces noise and bias. \cite{Ang2020} address this by using regularization-based optimization to mitigate communication-induced bias, inspired by the works of \cite{graves2011, goodfellow2016}, where training with noise was approximated through regularization to improve robustness. While many regularizers exist, we opt for $\ell_2$-regularization due to its simplicity and ease of implementation.

Related work also considers mitigating bias from channel noise under power constraints. \cite{Zhang2021a, amiri2020} propose adaptive power allocation strategies based on channel state information (CSI) and gradient magnitudes to reduce communication error effects (see also \cite{yang2020, zhu2019}). More recently, \cite{Wei2021} analyzed FedAvg \cite{McMahan2016a} under both uplink and downlink noise. While we do not consider power constraints or CSI, our approach can easily be extended to such settings.

{\bf Statistical Heterogeneity across Edge Devices. }One of the core challenges in federated learning, as outlined in Section \ref{sec:introduction}, is the statistical heterogeneity of data across edge devices. Recent developments have led to algorithms like FedProx \cite{Li2018}, FedNova \cite{Wang2020}, and SCAFFOLD \cite{karimireddy20a}, all designed to address this challenge by reducing the drift of local iterates from the global iterate maintained at the central server. The theoretical convergence properties of these algorithms have been well-studied under various assumptions that capture the dissimilarity in gradient computation caused by non-IID data distributions \cite{kairouz2021}. We adopt the bounded gradient dissimilarity assumption used in \cite{Li2018}, which has been shown to be analogous to other commonly used assumptions such as bounded inter-client variance \cite{li2021_bcgv}.
However, these algorithms have not been analyzed in the context of band-limited and noisy wireless communication channels. The strategy used in FedProx is particularly relevant to our work, as it addresses statistical heterogeneity by appending a proximal term to the loss function. In later sections, we demonstrate that this proximal term in our algorithm serves two key purposes: reducing the effects of channel noise and facilitating convergence in the presence of statistical heterogeneity.

On the practical side, a recent survey \cite{Li2021} conducted an extensive experimental study of these state-of-the-art algorithms across various data partitioning strategies and datasets. A data partitioning strategy of particular interest to us is label distribution skewness, where, for example, some hospitals specialize in certain diseases and thus have data specific to those diseases. An extreme form of label distribution skewness occurs when edge devices have access to only a subset of label classes \cite{Yu2020}. Another type of label skewness, referred to as class imbalance in modern machine learning literature, was studied in \cite{Wang2020}; \cite{Wang2020a, Yuro19}. We simulate different degrees of statistical heterogeneity by varying the level of class imbalance across edge devices. Our work uniquely intersects the analysis and resolution of the three major challenges in federated learning described above.
\section{Preliminaries}
\label{sec:pf}
\subsection{Federated Learning over Wireless MACs}
We consider a federated learning setup where there are $M$ edge devices and a central server. Only a fraction of the dataset $\mathcal{D}$ is available across each of the edge devices such that: $\mathcal{D} = \bigcup_{m=1}^{M}   \mathcal{D}_m$. 
The loss function at an edge device $m$ is defined as: $\ell_m(\bw;  \mathbf{x}_j, y_i)$, for a data sample $(\mathbf{x}_j,  y_j) \in \mathcal{D}_m$. For a mini-batch $\xi^m = \{ (\mathbf{x}_j,  y_j) : j \in |\xi^m| \}$ sampled at each device $m$, 
the loss function is denoted as:
\begin{align}
    f_m(\bw;  \xi^m) \triangleq\frac{\ell_m(\bw;  \xi^m)}{|\xi^m |} ,
\end{align}
where, $| \cdot |$ represents cardinality of a set.  The objective is to minimize the global loss function given by:
\begin{align}
\label{eq:FL_minimization}
    f(\bw) := \frac{1}{M}  \sum_{m=1}^{M}   \Eb_{\xi^m} \left[ f_m (\bw; \xi^m)\right] .
\end{align}
Here, the expectation is taken with respect to the random process that samples mini-batches at each edge device. The optimization of the loss function in \eqref{eq:FL_minimization} is performed iteratively. At time step \( t \), we denote the ML model parameters by \( \bw_t \). At each edge device \( m \) and time step \( t \), the stochastic gradient is computed using the sampled mini-batch \( \xi_t^m \) and expressed as \( \bg_t^m(\bw_t) := \nabla f_m(\bw_t; \xi_t^m) \). For simplicity, we denote \( \bg_t^m(\bw_t) \) as \( \bg_t^m \). These gradients are transmitted over noisy multiple subcarriers via an over-the-air protocol. We define the aggregated received gradient vector as:
\begin{align}
\bg_t := \frac{1}{M} \sum_{m=1}^{M} \bg_t^m + \mathbf{n}_t ,
\end{align}
where \( \mathbf{n}_t \in \mathbb{R}^d \) represents the channel noise. The gradient descent update rule at the central server is given by:
\begin{align}
\label{eq:sgd}
    \bw_{t+1} = \bw_t - \gamma  \bg_t ,
\end{align}
where \( \gamma \) is the fixed learning rate and \( \bw_{t+1} \) denotes the updated model parameter vector. The updated iterate \( \bw_{t+1} \) is then broadcasted back to all edge devices. The computation of local stochastic gradients, transmission to the central server, and broadcast of updated iterates is performed recursively.

In general, transmission over wireless channels is noisy, and the number of subcarriers is limited due to bandwidth constraints. Consequently, the received gradient vector \( \bg_t \) is biased. We discuss the biased structure of stochastic gradients further in Section \ref{sec:analysis}. Next, we elaborate on the count sketch compression operator, which plays a crucial role in bandlimited settings, and discuss its recovery guarantees.

\subsection{Count Sketch}
\label{subsec:cs}
A count sketch \( S \) is a randomized data structure utilizing a \( w \times b \) matrix of buckets, where \( b \) and \( w \) are chosen to achieve specific accuracy guarantees, typically \( \mathcal{O}(\log d) \). The algorithm employs \( w \) random hash functions \( h_j \) for \( j \in [w] \) to map the vector's coordinates to buckets, \( h_j: \{1, 2, \dots,d\} \rightarrow \{1, 2 \dots,b\} \), alongside \( w \) random sign functions \( s_j \) mapping coordinates to \( \{+1, -1\} \).

For a high-dimensional vector \( \bg \in \mathbb{R}^d \), the count sketch data structure sketches the \( i^{th} \) coordinate \( \bg(i) \) into the cell \( S(j, h_j(i)) \) by incrementing it with \( s_j(i)  \bg(i) \). This process is repeated for each \( j \in [w] \) and coordinate \( i \in [d] \). In a streaming context, for \( T \) updates to \( \bg \), the count sketch requires \( \mathcal{O}\left(\left(k + \frac{\|\bg_{\text{tail}}\|^2}{\varepsilon^2  \bg(k)}\right)\log d   T\right) \) memory to provide unbiased estimates of the top-\( k \) or heavy hitter (HH) coordinates with high probability:
\begin{align}
\label{eq:cs_guarantee}
    \left | \hat{\bg}(i) - \bg(i) \right| \leq \varepsilon   \| \bg \|,    \forall i \in \text{HH}  ,
\end{align}
where \( \text{HH} \) denotes the indices of the top-\( k \) coordinates. All norms \( \| \cdot \| \) refer to the \( \ell_2 \) norm in Euclidean space unless specified otherwise. Fundamental recovery guarantees for the count sketch can be found in \cite{Charikar2002}. It is essential to note that if a vector has too many heavy hitter coordinates, collisions in the count sketch may lead to inaccuracies in the resulting unsketched vector.
\section{Federated Proximal Sketching}
The key steps of the FPS algorithm are outlined in Algorithm \ref{alg:BLGS1}. Below, we elaborate on the main ideas.

In Steps 1 and 2 of Algorithm \ref{alg:BLGS1}, the count sketch (CS) data structures at each edge device and the central server are initialized to zero. The size of these CS data structures is determined by the available bandwidth (number of subcarriers, \(K\)). We proceed with a fixed learning rate at each iteration. The number of local epochs/iterations \(E\) to be performed before each global aggregation step is pre-determined. The selection of the number of local epochs is heuristic, and we discuss it in detail in Appendix \ref{app:hyperparameter}.

In Steps 5 and 6 of Algorithm \ref{alg:BLGS1}, the stochastic gradient is computed based on the mini-batch sampled at each edge device. The gradient update vector is formed as \(-\gamma   \mathbf{g}^m_t(\mathbf{w}^m_t)\) and sketched into the CS data structure \(S^m\) maintained at that device \(m\). Specifically, sketching the gradient update vector into the CS data structure is implemented by the following mathematical operation in Step 6:
\begin{align}
(-\gamma   \mathbf{g}^m_t) \to S^m(\mathbf{w}^m_t) \triangleq S^m(\mathbf{w}^m_t - \gamma   \mathbf{g}^m_t(\mathbf{w}^m_t)) = S^m(\mathbf{w}^m_{t+1}) .
\end{align}
This represents the gradient update rule, and implementing it recursively is straightforward due to the linearity property of count sketch (CS) data structures. Notably, this update rule, which compresses the computed gradient vector into a CS data structure, is reminiscent of the MISSION algorithm presented in \cite{agha2018a}. However, while MISSION was originally designed for a single device, FPS is a distributed algorithm that executes multiple instances of the MISSION algorithm in parallel. At each iteration in FPS, all edge devices maintain an efficient representation of the learned model parameter vector.

In Steps 8, 9, and 10 of Algorithm \ref{alg:BLGS1}, the CS data structure at each device is transmitted over noisy wireless multiple access channels based on the frequency of updates pushed to the server. The received sketches are then aggregated, and we perform top-$k$ coordinate extraction to obtain a $k$-sparse vector, \(\mathbf{w}_{t+1}\), which is subsequently broadcast back to the edge devices.

Steps 5-10 of Algorithm \ref{alg:BLGS1} are executed recursively until convergence is achieved. Given the statistical heterogeneity across devices, aggregating updates after a set number of local updates proves beneficial. In scenarios with high statistical heterogeneity, relying solely on local updates can lead to divergence, as empirically demonstrated by \cite{McMahan2016a}. To mitigate this issue, we restructure our loss function and outline the advantages of this modification.

{\bf Loss function design.} 
Our restructuring builds on the work of \cite{Li2018} while adding the benefit of mitigating channel noise effects. The new loss function at each device is given by:
\begin{align}
f(\mathbf{w}, \mathbf{w}^{gb}) = \ell(\mathbf{w}) + \frac{\mu}{2} \left\|\mathbf{w} - \mathbf{w}^{gb}\right\|^2 ,
\end{align}
where \(\ell(\mathbf{w})\) represents the application-specific loss function, such as cross-entropy loss for binary classification or mean-squared error for linear regression. We denote \(\mathbf{w}^{gb}\) as the last aggregated model parameter vector broadcasted by the central server.

With a non-zero proximal parameter \(\mu\), this new loss function provides the following benefits; 1) It controls the effects of statistical heterogeneity across devices by preventing the local updates \(\mathbf{w}\) from straying too far from the last global update \(\mathbf{w}^{gb}\), 2) For an improperly chosen number of local updates \(E\), the proximal term minimizes the potential divergence that may result and 3) It imposes a regularization effect on the global iterates, allowing us to bound the \(\ell_2\) norm by a positive constant, such that \(\|\mathbf{w}^{gb}\|^2 \leq W\).

\begin{algorithm}[]
   \caption{Federated Proximal Sketching (FPS) }
   \label{alg:BLGS1}
\begin{algorithmic}[1]
   \State {\bf Inputs:} Number of workers: $M$, mini-batches for each worker $m \in [M]$ at each time step: $\xi^m_{t}$, local epochs $E$.
   \State Initialize individual sketches at each worker $S^m$ with initial model parameters $\bw^m_0$: $\bw_0 \to S^m = S^m(\bw_0)$
   \For {$t = 1,2, \dots, T$}
        \For {$m = 1,2, \dots, M$}
		    \State Compute stochastic gradient using mini-batch $\xi^m_{t}$: $\bg_t^m(\bw^m_t)$
		    \State Sketch the gradient update vector $(-\gamma  \bg^m_t)$ at each worker: $(-\gamma  \bg^m_t) \to S^m(\bw^m_t) = S^m(\bw^m_{t+1})$ and broadcast it to the central server after $E$ local iterations / epochs
		\EndFor
		\State Receive aggregated sketches at the server: $S_t (\bw_{t+1}) = \frac{1}{M}   \sum_{m=1}^{M}   S^m(\bw^m_{t+1})+ n_t$ 
		\State Unsketch and extract top-k coordinates of parameter vector: $\bw_{t+1} = \mathcal{U}_k(S_t(\bw_{t+1}))$
		\State Broadcast $k$-sparse parameter vector to all edge devices: $\bw^m_{t+1} = \bw_{t+1}$ 
	\EndFor
\end{algorithmic}
\end{algorithm}

\section{Convergence Analysis}
\label{sec:analysis}
As is standard, the loss function $f_i$ at each edge device $i$ is assumed to be $L$-smooth non-convex objective function. 
\begin{assumption}{$($Smoothness$)$}
\label{as:smoothness} A function $f: \Rb^d \rightarrow \Rb$ is $L-$smooth if for all $x,y \in \Rb^d$, we have: $|f(y) - f(x) - \langle\nabla f(x), y-x \rangle| \leq \frac{L}{2} \|y-x \|^2 .$
\end{assumption}
In general, the received aggregate stochastic gradient $\bg_t$, is  biased, i.e., $(\Eb\left[ \bg_t\right] \neq \nabla f(\bw_t))$, and this can be  due to biased stochastic gradient estimation, data heterogeneity across devices and noisy channel conditions \cite{Zhang2021a, amiri2020}. In what follows,  we examine the  structure of stochastic gradient vector received at the central server.
\begin{definition}
\label{def:biased_grad}
Given a sequence of iterates $\{\bw_t\}_{t=1}^{T}$,  for all $t \in [T]$, the structure of biased stochastic gradient estimator  can be written as:
\begin{align}
\bg_t(\bw_t) = \nabla f (\bw_t) + \beta_t + \zeta_t ,
\end{align}
where $\beta_t$ is the biased estimation error and $\zeta_t$ is the martingale difference noise. The quantities $\beta_t$ and $\zeta_t$ are defined as:
\begin{equation}
\beta_t := \Eb_t[\bg_t(\bw_t)] - \nabla f(\bw_t), \quad \zeta_t := \bg_t(\bw_t) - \Eb_t[\bg_t(\bw_t)] .
\end{equation}
\end{definition}
Note that such a structure of the stochastic gradient estimator has been studied in \cite{Zhang2008, Aja2020a}. It directly follows from the above definition of bias and martingale difference noise that \(\mathbb{E}[\zeta_t] = 0\). Here, the expectation \(\mathbb{E}_t[\cdot]\) is taken with respect to \(\xi_t\), which represents a realization of a random variable corresponding to the choice of a single training sample in the case of vanilla SGD, or a set of samples in the case of mini-batch SGD, along with the channel noise \(n_t\). Furthermore, we assume that the bias and martingale noise terms satisfy the following assumptions.
\begin{assumption}  $($Zero mean, $(P_n, \sigma^2)$-bounded noise$)$ 
\label{as:bounded-noise}
There exists constants $P_n, \sigma^2 \geq 0$ such that: $\Eb_t \left [\|  \zeta_t \|^2\right]  \leq P_n  \| \nabla f(\bw_t)\|^2 + \sigma^2 .$
\end{assumption}
\begin{assumption}  {$($$(P_b, b^2)$-bounded bias$)$}
\label{as:bounded-bias}
There exists constants $P_b \in (0,1)$ and $b^2 \geq 0$ such that: $\|  \beta_t \|^2  \leq P_b  \| \nabla f(\bw_t)\|^2 + b^2 .$
\end{assumption}
These assumptions are   significantly mild as the second moment bounds of the bias and noise terms scales with true gradient norm and constants $b^2$ and $\sigma^2$ respectively. By setting the tuple $(P_b,   P_n,  b^2, \sigma^2) = \bar{0}$, we get the special case of unbiased gradient estimators. Convergence for this special case has been well studied in literature.  

Next, we turn our attention to the compressibility of gradients. Specifically, we assume that the stochastic gradients  are approximately sparse. This is formalized in the following assumption \cite{Cai2021}:
\begin{assumption}
\label{as:grad_comp}
The stochastic gradients  follow a power law distribution and there exists a $p \in (1, \infty)$ such that $\left|\bg_t(i)\right| =  i^{-p}   \| \bg_t \|$ .
\end{assumption}
In the Appendix, we show that some of the real-world dataset(s) considered in this paper follow Assumption \ref{as:grad_comp}. As the value of $p$ increases we infer that only a small number of coordinates in the vector $\bg$ are significant. Therefore by choosing an appropriate size of CS data structure we can ensure efficient compression and strong recovery guarantees of the significant coordinates. 

Even though the loss functions across all the devices are same, as the data is distributed in a non-IID manner, due to random sampling of mini-batches across devices there will be dissimilarities in computation of loss functions and their respective gradient estimators. To this end, we define a measure of dissimilarity between   gradient estimators across edge devices similar to \cite{Li2018} as follows:
\begin{definition}
\label{def:dissimilarity}
(B-local dissimilarity). The local functions $f_m$ are $B-$locally dissimilar at $\bw$ if $\|\Eb_{\xi_m}[ \nabla f_m(\bw; \xi_m)] \|^2 \leq \| \nabla f(\bw)\|^2  B^2$.  We further define  $B(\bw) = \sqrt{\frac{\Eb_{\xi_m}[\|\nabla f_m(\bw;\xi_m)\|^2] }{\| \nabla f(\bw)\|^2}}$, for $ \| \nabla f (\bw)\| \neq 0$. 
\end{definition}
Further, we have the following  assumption ensuring that the dissimilarity  $B(\bw)$ defined in Definition \ref{def:dissimilarity}  is uniformly bounded above.
\begin{assumption}
\label{as:dissimilarity}
For some $\epsilon > 0 $, there exists $B$ such that for all points $\bw \in S_\epsilon =\left  \{ \bw    \big |  \| \nabla f(\bw) \|^2 > \epsilon \right\}$, $B(\bw) \leq B$.  
\end{assumption}
If we assume the data is distributed in an IID manner, the same loss function across all devices and the ability to sample an infinitely large sample size, then, $B \to 1$. However, due to different sampling strategies, in practice, $B>1$. A larger value of $B$ would imply  higher statistical heterogeneity across devices. Other formulations of measuring dissimilarity have been studied in \cite{Ahmed2019,Li2019_non_iid, Wang2019_MATCHA}. 

Let us denote $H = \frac{1}{1 + 2  B^2(P_b + P_n)}$. Note that $H \leq 1$.  We now define the following quantity $\rho( \gamma)$ as: 
\begin{align}
 \label{eq:rho}
     \rho( \gamma) \triangleq \frac{1- P_b  (1+2 H)  E^2  B^2}{2} - \gamma  (2 + 2P_b  B^2 +(2(L+\mu) +1)  P_n  B^2 )\left (1+ 2 H \right)  E^2 ,
 \end{align}
where, $P_b, P_n,  L$ and $B$ are constants defined earlier; $\mu$ is the proximal parameter of our loss function and $E$ is number of local epochs carried out at each edge device before global aggregation of model parameters at the central server. Let $f(\bw^*)$ be the global minimum value of $f$.  The range of values of the fixed learning rate $\gamma$ which we consider, satisfies the following conditions: $  \rho(\gamma)> \frac{1}{4} $ and that is given by: 
\begin{align}
\label{eq:gamma_range}
    \gamma &\leq \frac{1- 6 P_b E^2 B^2}{12(1 + P_b B^2 +(L+\mu +1) P_n B^2 )  E^2 }.
\end{align}
The CS data structure size we consider scales like $\mathcal{O}\left(c k   \log \frac{d T}{\delta} \right)$. Here, $c$ is some positive scalar ($c>1$), $k$  denotes the number of heavy hitter coordinates we are extracting or unsketching from the CS data structure, $d$ is the ambient dimension, $T$ is the number of iterations and $\delta$ is probability of error. We bound the $\ell_2$ norm of the iterates by some  positive constant, $\| \bw \|^2 \leq W $. We have the following main theorem on
the iterates in the FPS algorithm.

\begin{theorem}
\label{thm:fps}
 Under Assumptions \ref{as:smoothness}, \ref{as:bounded-noise}, \ref{as:bounded-bias}, \ref{as:grad_comp} and \ref{as:dissimilarity},     the following result holds with probability at least $1-\delta$: 
\begin{align}
\label{eq:main_thm}
 \frac{1}{T+1}\sum_{t=0}^{T}  \|\nabla f(\bw_t) \|^2
    &\leq   \frac{ \left|f(\bw_0)-f(\bw^*) \right|}{\gamma  (T+1)}  +  \left(\frac{1}{c} + \frac{(k+1)^{1-2p} - d^{1-2p}}{2p - 1}\right) (L+\mu)^2 W \notag \spl
    &\hspace{2em}+  2 E^2  \left( 1 + 2 P_b B^2 + \gamma (3 + L+\mu 
 + 2 P_b B^2 + 2 P_n B^2)\right)  b^2 \notag \spl
    &\hspace{4em}+ 2 E^2 \left( 1  + \gamma  (L+\mu+1)(3 + 2 P_b  B^2 + 2 P_n  B^2)\right) \sigma^2.
\end{align}
\end{theorem}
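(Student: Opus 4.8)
The plan is a descent-lemma-plus-telescoping argument, carried out simultaneously over the three error sources: the $E$ local proximal-SGD steps between aggregations, the biased/noisy stochastic gradients, and the count-sketch compression. First I would condition on the event, of probability at least $1-\delta$, that the count-sketch recovery guarantee \eqref{eq:cs_guarantee} holds at every one of the $T$ iterations; this is where the $\mathcal{O}(ck\log(dT/\delta))$ sketch size and a union bound enter, and it is the only source of the ``with probability $1-\delta$'' in the statement. On this event I write the global update produced by Steps 8--10 of Algorithm~\ref{alg:BLGS1} as $\bw_{t+1} = \bw_t - \gamma\,\bar{\bg}_t + \mathbf{e}_t$, where $\bar{\bg}_t$ accumulates the $E$ local gradient steps transmitted through the channel and $\mathbf{e}_t$ collects the top-$k$ truncation and sketch-recovery error. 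Using Assumption~\ref{as:grad_comp}, the discarded tail $\sum_{i>k} i^{-2p}$ is bounded (up to constants) by $\int_{k}^{d} x^{-2p}\,dx = \big((k+1)^{1-2p}-d^{1-2p}\big)/(2p-1)$, while the sketch error on the $k$ heavy hitters contributes $\mathcal{O}(1/c)$; since $\|\bw\|^2\le W$ this gives $\|\mathbf{e}_t\|^2 \le \big(\tfrac1c + \tfrac{(k+1)^{1-2p}-d^{1-2p}}{2p-1}\big)W$, and the $(L+\mu)^2$ prefactor in the theorem appears when this compression error is pushed through the gradient-Lipschitz inequality (Assumption~\ref{as:smoothness}, with constant $L+\mu$ for the proximally regularized loss).

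Next I would apply $(L+\mu)$-smoothness to consecutive iterates, $f(\bw_{t+1}) \le f(\bw_t) + \langle \nabla f(\bw_t), \bw_{t+1}-\bw_t\rangle + \tfrac{L+\mu}{2}\|\bw_{t+1}-\bw_t\|^2$, substitute the decomposition above, and take the conditional expectation $\Eb_t[\cdot]$ over mini-batch sampling and channel noise. Via Definition~\ref{def:biased_grad} the inner-product term splits into a $-\gamma\|\nabla f(\bw_t)\|^2$-type ``signal'' contribution, a bias piece controlled by Assumption~\ref{as:bounded-bias} and Young's inequality, and martingale-noise pieces that vanish in expectation; the quadratic term is bounded using $\Eb_t\|\bar{\bg}_t\|^2 \lesssim (1+P_b+P_n)\|\nabla f(\bw_t)\|^2 + b^2 + \sigma^2$ together with the $B$-local dissimilarity (Definition~\ref{def:dissimilarity}, Assumption~\ref{as:dissimilarity}), which is what injects the $B^2$ factors. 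Unrolling the $E$ inner steps and bounding the drift of the local iterates from $\bw_t$ produces the $E^2$ factors multiplying $b^2$, $\sigma^2$, and the bias term, and the auxiliary constant $H = 1/(1+2B^2(P_b+P_n))$ arises from closing the recursion on these local-drift terms.

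Finally I would collect the coefficient of $\|\nabla f(\bw_t)\|^2$ on the right-hand side; by construction this coefficient is $\gamma$ times (essentially) $\rho(\gamma)$ from \eqref{eq:rho}, so the learning-rate range \eqref{eq:gamma_range}, which is exactly $\rho(\gamma) > \tfrac14$, makes it positive and bounded below by a constant multiple of $\gamma$. Moving that term to the left, telescoping $\sum_{t=0}^{T}\big(f(\bw_t)-f(\bw_{t+1})\big) = f(\bw_0)-f(\bw_{T+1}) \le f(\bw_0)-f(\bw^*)$, dividing by $(T+1)$ and by the coefficient, and invoking $\|\bw\|^2\le W$ and the assumed bounds yields \eqref{eq:main_thm}.

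\textbf{Main obstacle.} The crux is the joint bookkeeping: simultaneously (i) unrolling the $E$ local proximal steps and bounding client drift so that the $E^2 B^2$ terms come out with the stated constants, (ii) keeping the count-sketch error $\mathbf{e}_t$ from contaminating the $\|\nabla f(\bw_t)\|^2$ recursion so that it lands only in the $(L+\mu)^2 W$ term, and (iii) tracking the bias $\beta_t$ through every cross term so that the final coefficient of $\|\nabla f(\bw_t)\|^2$ is precisely $\gamma\rho(\gamma)$ and the $\rho(\gamma)>\tfrac14$ threshold is the operative condition. Each piece is individually routine, but fitting all three into one clean recursion — in particular matching the constants in \eqref{eq:rho} and \eqref{eq:gamma_range} — is the delicate part.
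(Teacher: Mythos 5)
Your proposal is correct and follows essentially the same route as the paper: a union bound over the $T$ count-sketch recoveries supplies the $1-\delta$ probability, an $(L+\mu)$-smoothness descent step with Young's inequality absorbs the compression error into the $(L+\mu)^2 W$ term, a local-drift recursion over the $E$ epochs (closed via the constant $H$) produces the $E^2 B^2$ factors, and the condition $\rho(\gamma)>\tfrac14$ from \eqref{eq:gamma_range} lets the $\|\nabla f(\bw_t)\|^2$ coefficient be moved to the left before telescoping. The only cosmetic difference is that you place the sketch error additively in the iterate update, whereas the paper evaluates the descent inequality at the unsketched iterates $\tilde{\bw}_t$ and controls $\langle \nabla f(\bw_t)-\nabla f(\tilde{\bw}_t),\gamma\,\Eb_t[\bg_t]\rangle$ by Lipschitzness of the gradient; both lead to the same residual term.
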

{\bf Remarks. } We have a few important observations in order.
\begin{itemize}
		
		 \item The first term on the right hand side of  \eqref{eq:main_thm} is a scaled version of the term $| f(\bw_0) - f(\bw^*) |$, and its effect diminishes as $T \to \infty$.

  \item The second term in \eqref{eq:main_thm} represents the error in unsketching the top-$k$ coordinates of iterates $\bw_t$, viewed as the residual error after extracting top-$k$ coordinates from the CS data structure. With fixed $k$, as the CS size increases, $c$ increases, and $1/c$ becomes smaller. This term also depends on $k$ and $p$: as sketch size grows, more heavy hitters can be extracted, leading to a larger $k$. The value of $p$ is dataset-dependent and reflects how well the input-output relation can be captured by a small feature subset. Fewer significant features result in higher $p$, and vice versa. Thus, as bandwidth increases, the CS size grows, reducing the impact of this term.
  \item The third and fourth terms in \eqref{eq:main_thm} capture the effects of bias $\beta_t$ and noise $\zeta_t$. The iterates will likely converge to a neighborhood scaling by $b^2$ and $\sigma^2$. Since $f$ is any non-convex smooth function, multiple contraction regions may exist, each corresponding to a stationary point. These terms grow with increasing local epochs $E$ and data heterogeneity $B$. No fixed $E$ guarantees convergence across varying heterogeneity. As data becomes more heterogeneous, a smaller $E$ reduces the $B^2$ terms, since larger $E$ aggregates bias and noise due to gradient dissimilarity across devices. Thus, more frequent communication with the central server is needed to ensure convergence when heterogeneity increases.
  \item Analyzing \eqref{eq:gamma_range} shows that the learning rate bound is crucial for convergence. When dissimilarity $B$ is large, a smaller learning rate should be used, as greater dissimilarity increases the likelihood of local models diverging from the global minimum. Thus, a lower learning rate and fewer local epochs help stabilize the algorithm. Moreover, a smaller learning rate reduces the size of the neighborhood scaled by bias and variance in the third and fourth terms of \eqref{eq:main_thm}.

\end{itemize}
\section{Experimental Studies}
\label{sec:sim}
We conduct experiments on synthetic and three real-world datasets with varying model and environmental parameters. Under a bandlimited, noisy wireless channel, we simulate our proposed FPS algorithm and compare it to FetchSGD \cite{Rothchild2020a} and bandlimited coordinate descent (BLCD) \cite{Zhang2021a}. For count sketch-based algorithms (e.g., FetchSGD, FPS), the number of subcarriers dictates the size of the CS structure. For BLCD, random sparsification selects gradient coordinates based on the number of subcarriers. We set the number of edge devices $M=10$, and model channel noise as zero-mean Gaussian, $\mathcal{N}(0, 1)$. FetchSGD and BLCD perform global aggregation at every epoch, while FPS aggregates every 5 local epochs (chosen heuristically; see Appendix \ref{app:hyperparameter}). We use a learning rate of 0.01 in all experiments. To simulate data heterogeneity, we consider four scenarios:

\textbf{Scenario 1.} IID distribution with equal class samples across all devices.\\
\textbf{Scenario 2.} Quantity-based label imbalance where each device has samples from only a fixed number of classes (e.g., one class in binary classification).\\
\textbf{Scenario 3.} Distribution-based label imbalance using a Dirichlet distribution $\text{Dir}_M(\mathbf{\alpha})$ to sample class proportions across devices. We set $\mathbf{\alpha} = 0.1$ to simulate high label skewness.\\
\textbf{Scenario 4.} Same as Scenario 3, but with $\mathbf{\alpha} = 1$ for a more uniform distribution.

For our experimental study, we consider one synthetic dataset and three real-world datasets (KDD12, KDD10, and MNIST). We present the results for the KDD12 and MNIST datasets in the main body of the paper and move the results for the other datasets (synthetic and KDD10) to Appendix \ref{app:additonal_experiments}. For all experiments, we report the average accuracy for each data partitioning scenario under both noisy and noise-free conditions. The optimal choice of the proximal parameter, selected from the set $\mu = \{0, 0.01, 0.1, 1\}$ for each scenario, is indicated in the legends below each plot.

\subsection{KDD12 - Click Prediction}
The KDD12 dataset involves a binary classification task where the model must determine if a user will accept \(\{1\}\) or reject \(\{0\}\) an item being recommended, which can include news, games, advertisements, or products. For more details on the dataset, see \cite{kdd12}. The dataset contains \(54,686,452\) features, and each edge device is allocated \(K = 1024\) subcarriers.
\begin{figure*}[h]
    \centering
    \begin{subfigure}[b]{0.4\textwidth}
        \includegraphics[width=\textwidth]{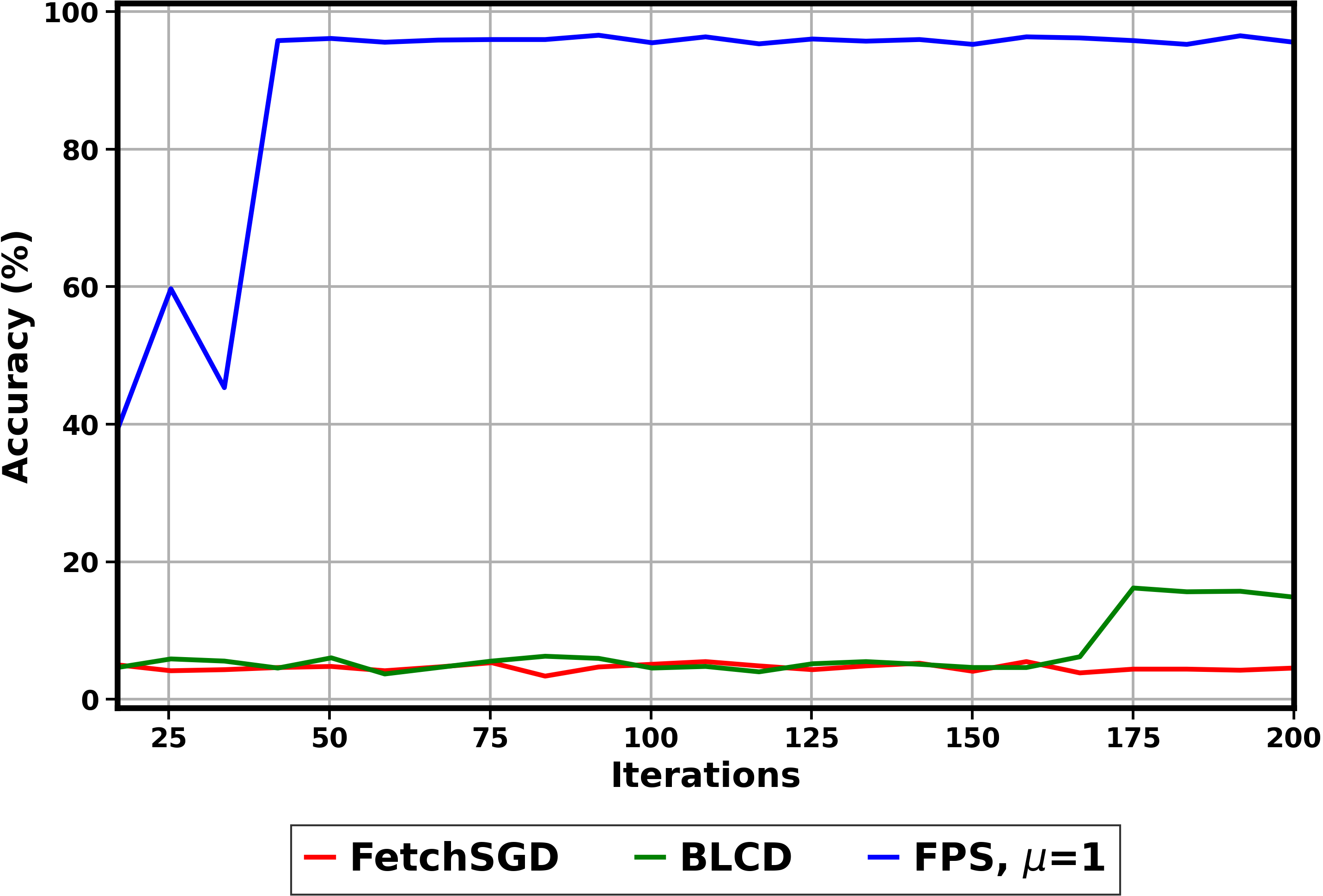}
        \caption{}
        \label{fig:kdd12_iid}
    \end{subfigure}
    \hfill
    \begin{subfigure}[b]{0.4\textwidth}
        \includegraphics[width=\textwidth]{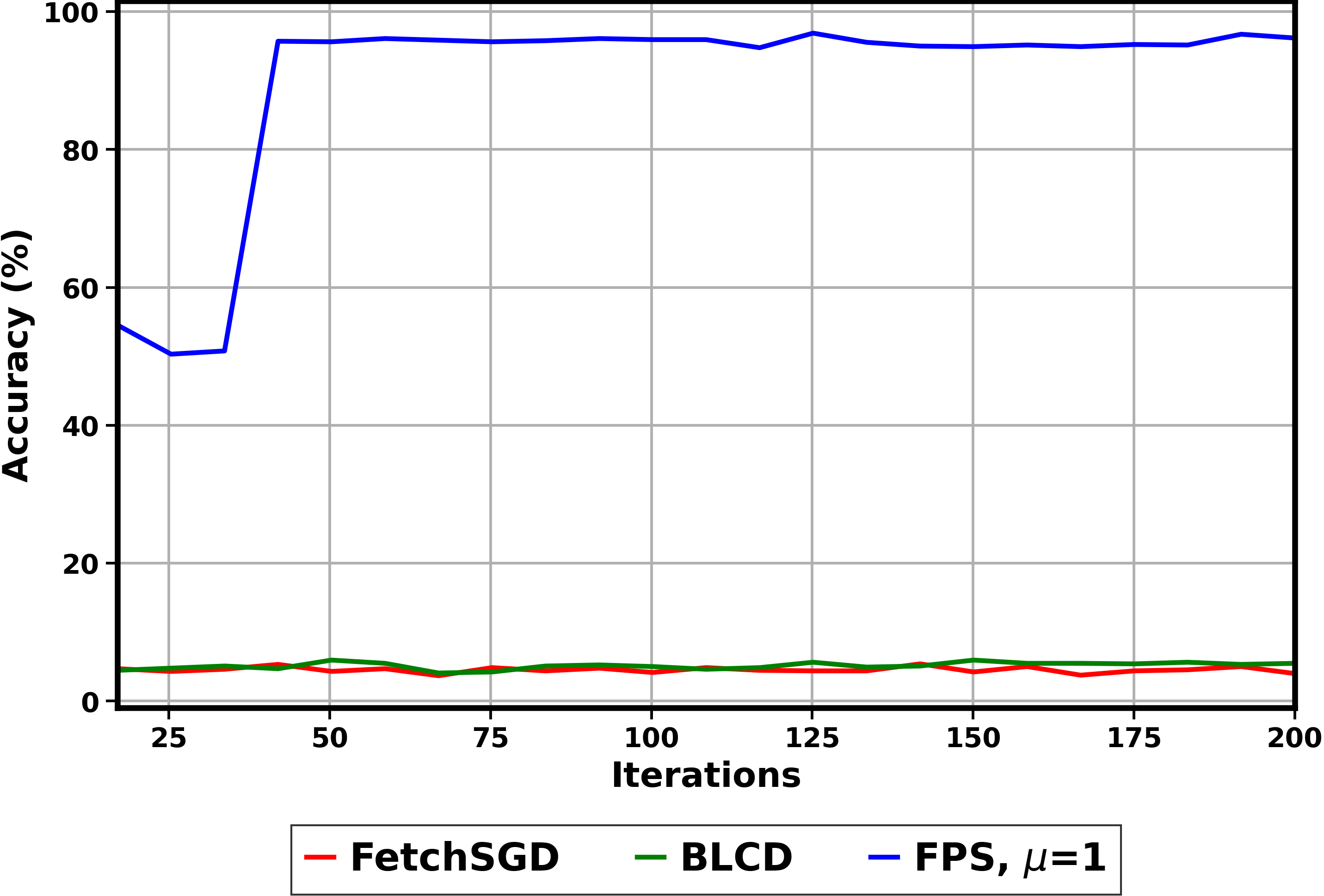}
        \caption{}
        \label{fig:kdd12_scenario_2}
    \end{subfigure}
    \vspace{0.5cm}
    \begin{subfigure}[b]{0.4\textwidth}
        \includegraphics[width=\textwidth]{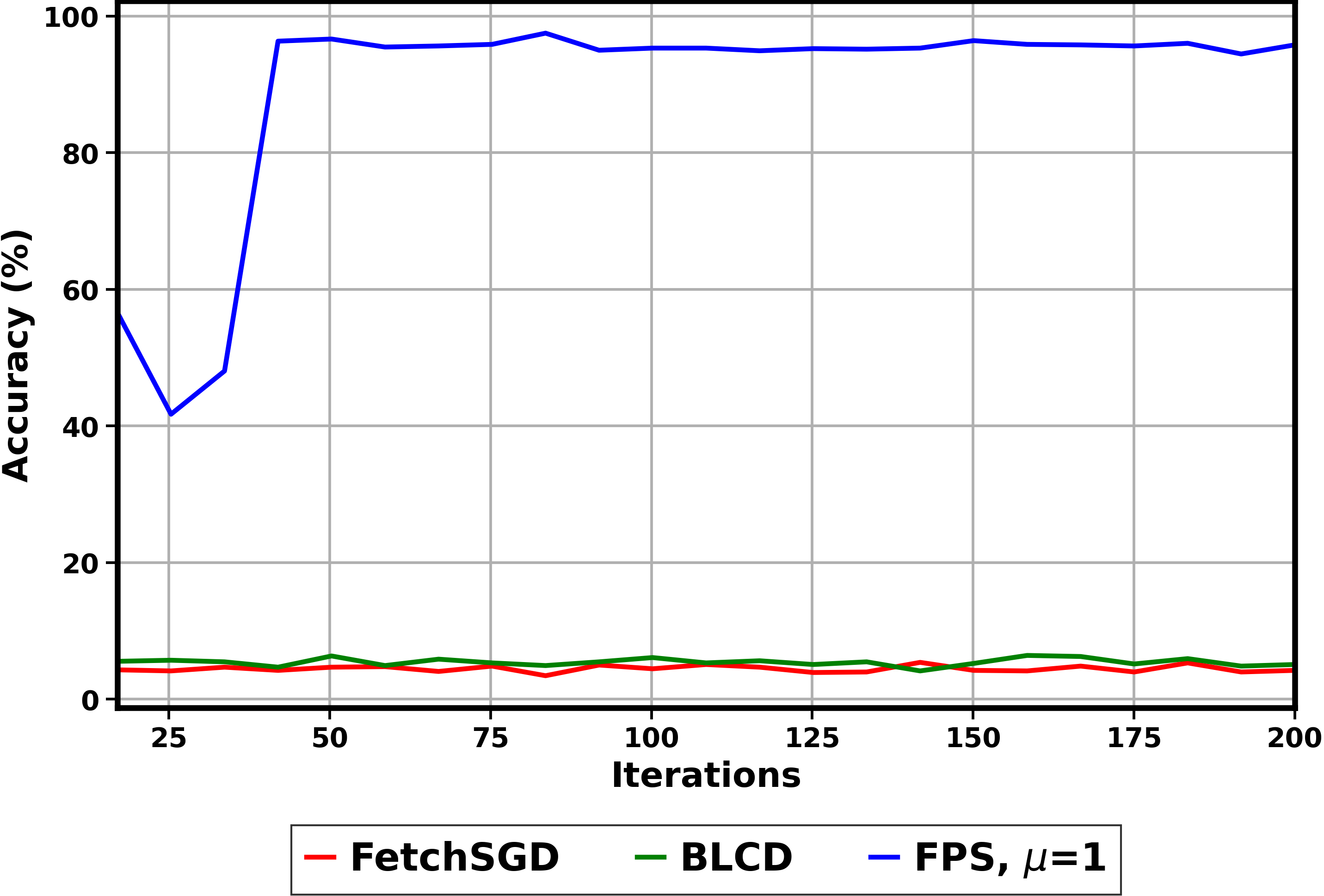}
        \caption{}
        \label{fig:kdd12_scenario_3}
    \end{subfigure}
    \hfill
    \begin{subfigure}[b]{0.4\textwidth}
        \includegraphics[width=\textwidth]{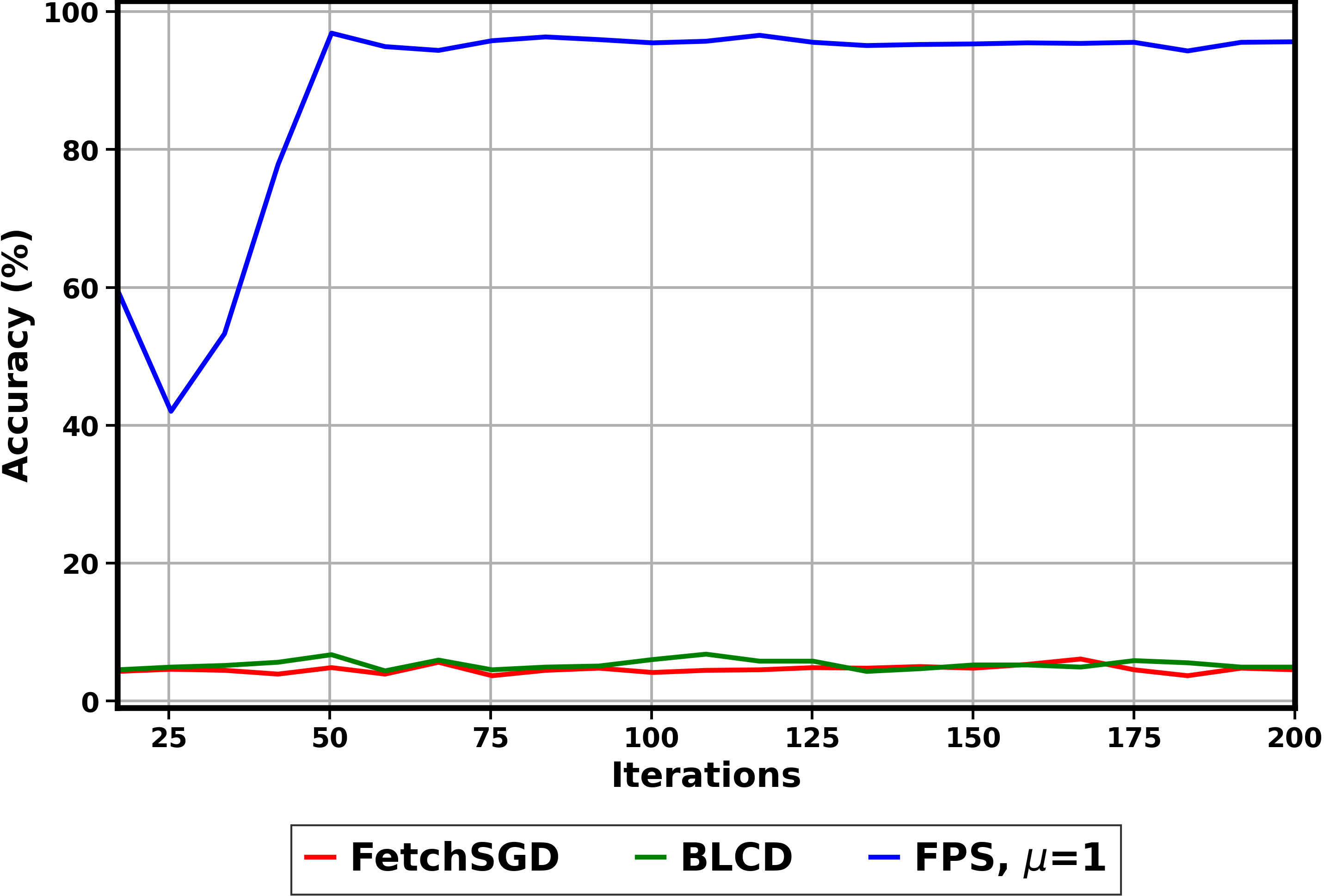}
        \caption{}
        \label{fig:kdd12_scenario_4}
    \end{subfigure}
    \caption{Plotting test accuracy for FPS, BLCD, FetchSGD on KDD12 dataset under noisy channel conditions. The figures correspond to different data partitioning strategies: (a) Scenario 1 (b) Scenario 2 (c) Scenario 3 (d) Scenario 4. }
     \label{fig:kdd12}
\end{figure*}

In Figure \ref{fig:kdd12}, we observe that FPS significantly outperforms FetchSGD and BLCD across all data partitioning strategies and under noisy channel conditions. Additionally, FPS converges more quickly compared to other competing bandlimited algorithms. In Table \ref{tab:kdd12}, we report the mean accuracy over 5 trials for various FL algorithms, including FPS, under different degrees of statistical heterogeneity and channel noise conditions.
Specifically, for KDD12, the number of significant coordinates in the gradient update vectors is extremely low compared to the ambient dimension of the dataset (see Appendix \ref{app:grad_compress}). In this scenario, algorithms like BLCD perform poorly because the probability of randomly selecting significant coordinates when the ambient dimension is high is very low. This poor performance of BLCD is evident in Figure \ref{fig:kdd12}. On the other hand, FetchSGD maintains an efficient representation of significant coordinates in the gradient update vectors, so one would expect it to perform well. However, FetchSGD lacks mechanisms to handle noisy wireless channels and data heterogeneity, resulting in subpar performance. The only instances where FetchSGD performs comparably to our algorithm, FPS, are when the data is distributed IID (scenario 1) and when the degree of statistical heterogeneity is low (scenario 4), particularly over noise-free channels (see Table \ref{tab:kdd12}). Accuracy plots for FetchSGD, BLCD, and FPS in the noise-free case over varying degrees of data heterogeneity are presented in Figure \ref{fig:kdd12_noise_free} in Appendix \ref{app:additonal_experiments}.

We further evaluate FPS against FedProx (\cite{Li2018}) and top-k FL algorithms, which are not inherently bandlimited. FedProx is a state-of-the-art algorithm designed to learn a global model in the presence of data heterogeneity across edge devices. It communicates the entire gradient update vector to the central server, while the top-k algorithm requires additional communication rounds between edge devices to achieve consensus on global top-k gradient coordinates. Detailed accuracy results are provided in Table \ref{tab:kdd12}. In scenarios of extreme heterogeneity (Scenario 2), both FedProx and top-k do not perform well under noisy and noise-free channel conditions. In contrast, under mild statistical heterogeneity (Scenario 4), FedProx and top-k perform comparably to our FPS algorithm in a noise-free channel setting but struggle in noisy conditions. We hypothesize that the poor performance of FedProx in noisy channels is due to the corruption of approximately sparse gradient update vectors by channel noise. As the less significant coordinates are corrupted, this leads to erroneous gradient updates. While one might argue that scaling the gradient coordinates above the noise floor could resolve this issue, this approach is often infeasible under power constraints.
\begin{table*}[h]
\centering
\begin{tabular}{|c|c|c|c|c|c|c|}
\hline
\multirow{2}{*}{\shortstack{Label  \\ skewness}} & \multirow{2}{*}{\shortstack{Noise 
\\ $\mathcal{N}(0, \sigma^2)$}} & \multicolumn{5}{c|}{Accuracy ($\%$)} \\ \cline{3-7}
 & & \multicolumn{1}{c|}{FPS}  & \multicolumn{1}{c|}{FetchSGD} & \multicolumn{1}{c|}{BLCD} & \multicolumn{1}{c|}{Top-k} & \multicolumn{1}{c|}{FedProx} \\
\hline
\multirow{2}{*}{Scenario 1} &  $\sigma = 0$& $96.44 \pm 0.81$ & $96.48 \pm 1.52$ & $8.51 \pm 2.67$ & { $ \bf 96.64 \pm 0.52 $} & $96.48 \pm 0.81$  \\
\cline{2-7}
 & $\sigma = 1$ & {  $ \bf 96.56 \pm 1.29$}  & $5.46 \pm 1.33$ & $16.17 \pm21.23$ & $68.82 \pm 16.66$& $57.42 \pm 13 $ \\
\hline  
\multirow{2}{*}{Scenario 2} &  $\sigma = 0$& {$\bf 97.03 \pm 1.14$}  & $48.12 \pm 1.26$ & $5.93 \pm 1.6$ & $51.09 \pm 2.93 $& $53.20 \pm 6.99$ \\
\cline{2-7}
 & $\sigma = 1$ & {$\bf  96.87 \pm 0.95$}  & $5.39 \pm 0.96$ & $5.93 \pm 1.85$ & $57.57 \pm 24.04$& $40.93 \pm 11.12$ \\
\hline  
\multirow{2}{*}{Scenario 3} &  $\sigma = 0$& $96.64 \pm 0.52$  & {$\bf 96.79 \pm 0.51$} & $6.56 \pm 1.38$ & $96.64 \pm 1.22$ & $96.56 \pm 0.67$\\
\cline{2-7}
 & $\sigma = 1$ & {$\bf 97.5 \pm 0.97$}  & $5.39 \pm 1.24$ & $6.4 \pm 1.27$ & $72.57 \pm 15.3$ & $54.60 \pm 17.26$ \\
\hline  
\multirow{2}{*}{Scenario 4} &  $\sigma = 0$& $ 96.25 \pm 0.76$ & $96.17 \pm 1.08$ & $17.18 \pm 19.55$ & { $\bf 96.71 \pm 0.46$}& $96.01 \pm 1.22$ \\
\cline{2-7}
 & $\sigma = 1$ & {$\bf 96.87 \pm 0.95$}  & $6.09 \pm 0.31$ & $6.79 \pm 1.93$ & $66.32 \pm 14.71$ & $46.32 \pm 10.79$ \\
\hline  
\end{tabular}  
\caption{Test accuracy of different distributed algorithms under varying channel conditions and statistical heterogeneity. For FPS and FedProx, we tune $\mu$ from $\{0, 0.01, 0.1, 1\}$ and report the best accuracy over KDD 12 dataset.}
\label{tab:kdd12}
\end{table*}
\subsection{MNIST Dataset}
 We now consider MNIST dataset to evaluate the performance of our algorithm. For this purpose, we utilize a simple 2-layer neural network with approximately 100,000 parameters (neurons). For communication-efficient algorithms (FPS, FetchSGD, BLCD), we vary the number of subcarriers as \( \{ 5000, 10000, 20000 \} \). The regularization parameter (\(\mu\)) for the proximal term takes values from the set \( \{ 0, 0.01, 0.1, 1 \} \). For count-sketch algorithms (FPS, FetchSGD), the number of top-k heavy hitters extracted varies from \( \{ 2000, 5000, 10000 \} \).

In Figure \ref{fig:mnist}, we plot the performance averaged over three trials of different band-limited algorithms, including FPS, under noisy wireless channel conditions across varying degrees of data heterogeneity. A detailed comparison is provided in Table \ref{tab:mnist}, where we also consider other competing federated learning algorithms.

\begin{figure*}[h]
    \centering
    \begin{subfigure}[b]{0.4\textwidth}
        \includegraphics[width=\textwidth]{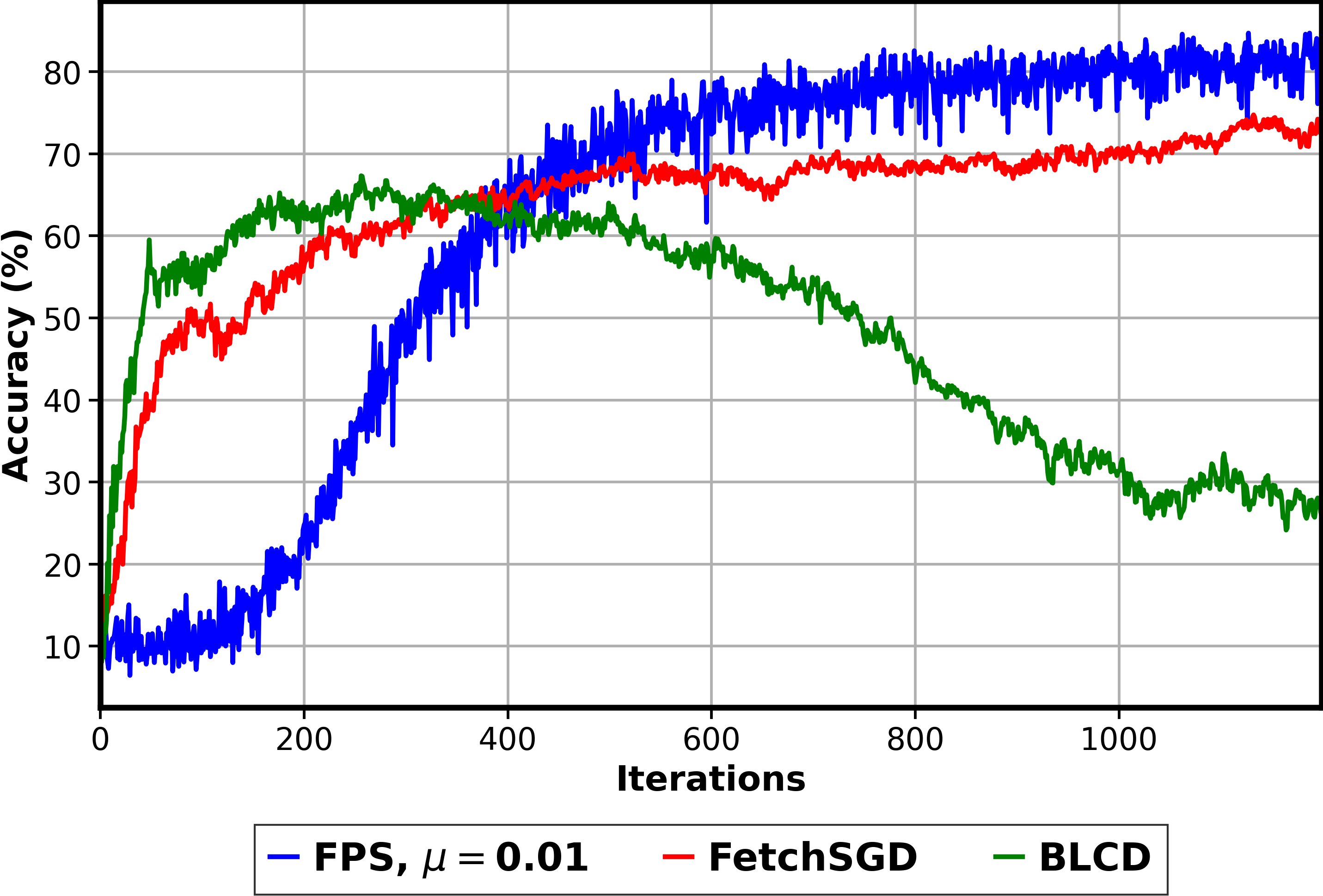}
        \caption{}
        \label{fig:mnist_scenario_1}
    \end{subfigure}
    \hfill
    \begin{subfigure}[b]{0.4\textwidth}
        \includegraphics[width=\textwidth]{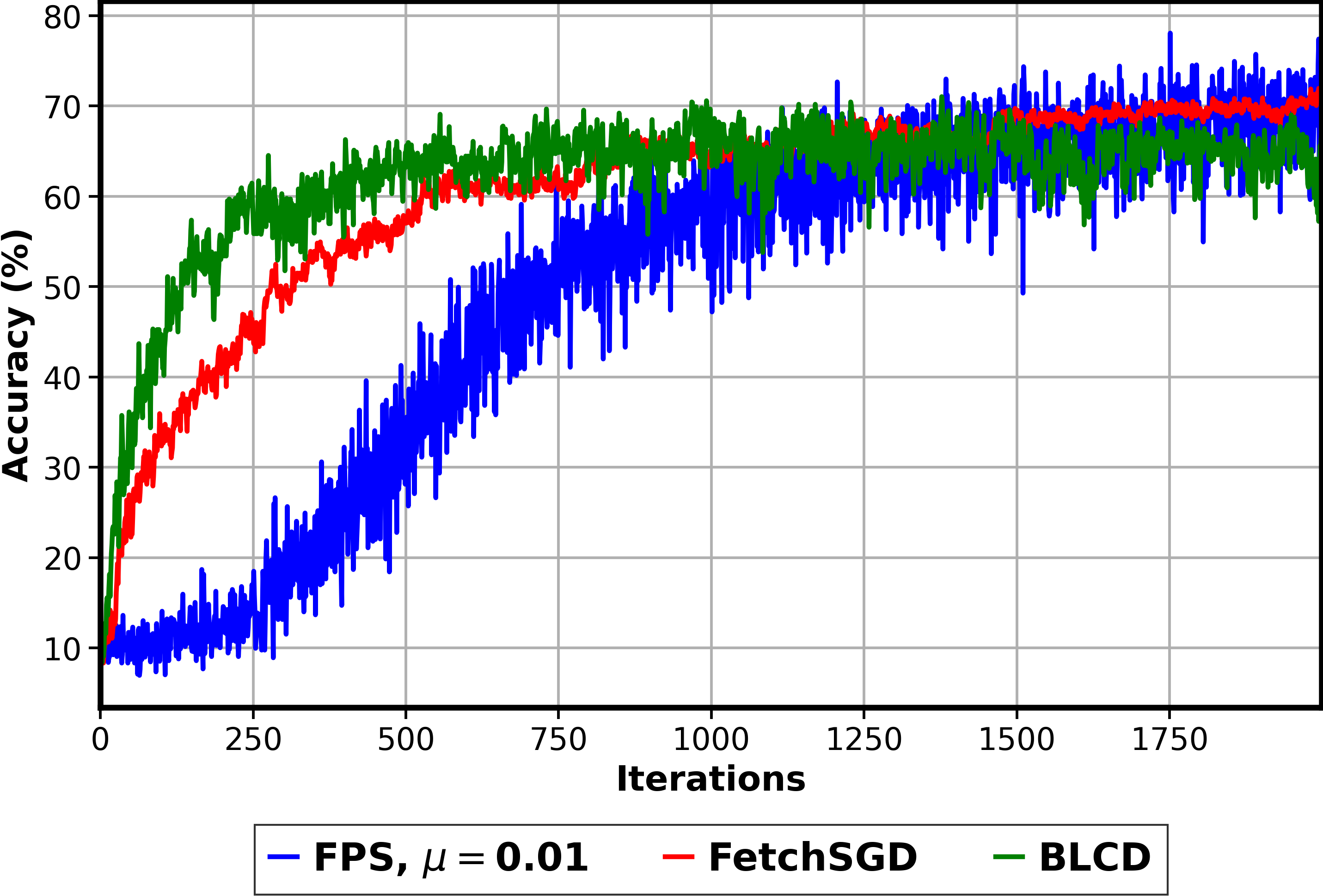}
        \caption{}
        \label{fig:mnist_scenario_2}
    \end{subfigure}
    \vspace{0.5cm}
    \begin{subfigure}[b]{0.4\textwidth}
        \includegraphics[width=\textwidth]{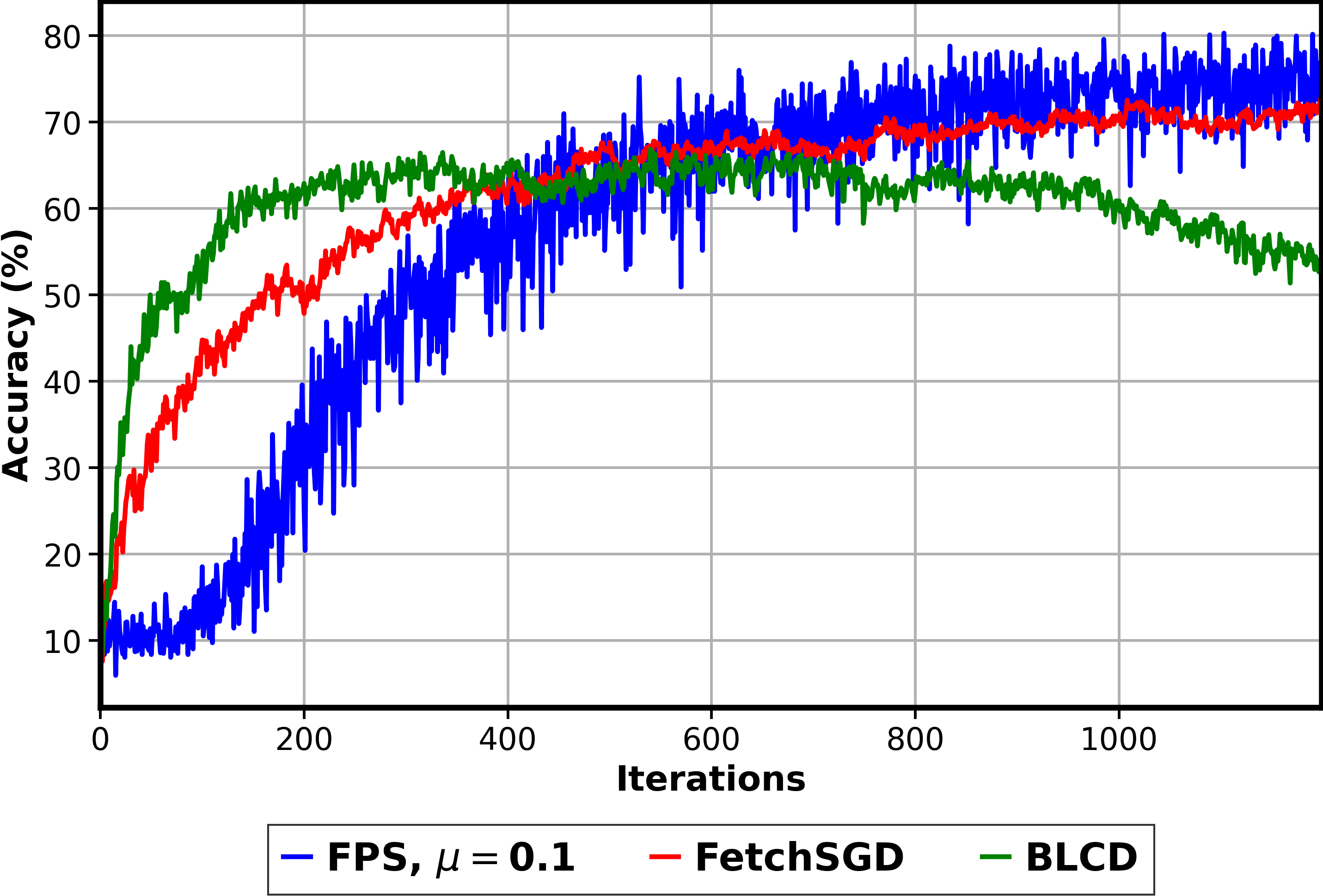}
        \caption{}
        \label{fig:mnist_scenario_3}
    \end{subfigure}
    \hfill
    \begin{subfigure}[b]{0.4\textwidth}
        \includegraphics[width=\textwidth]{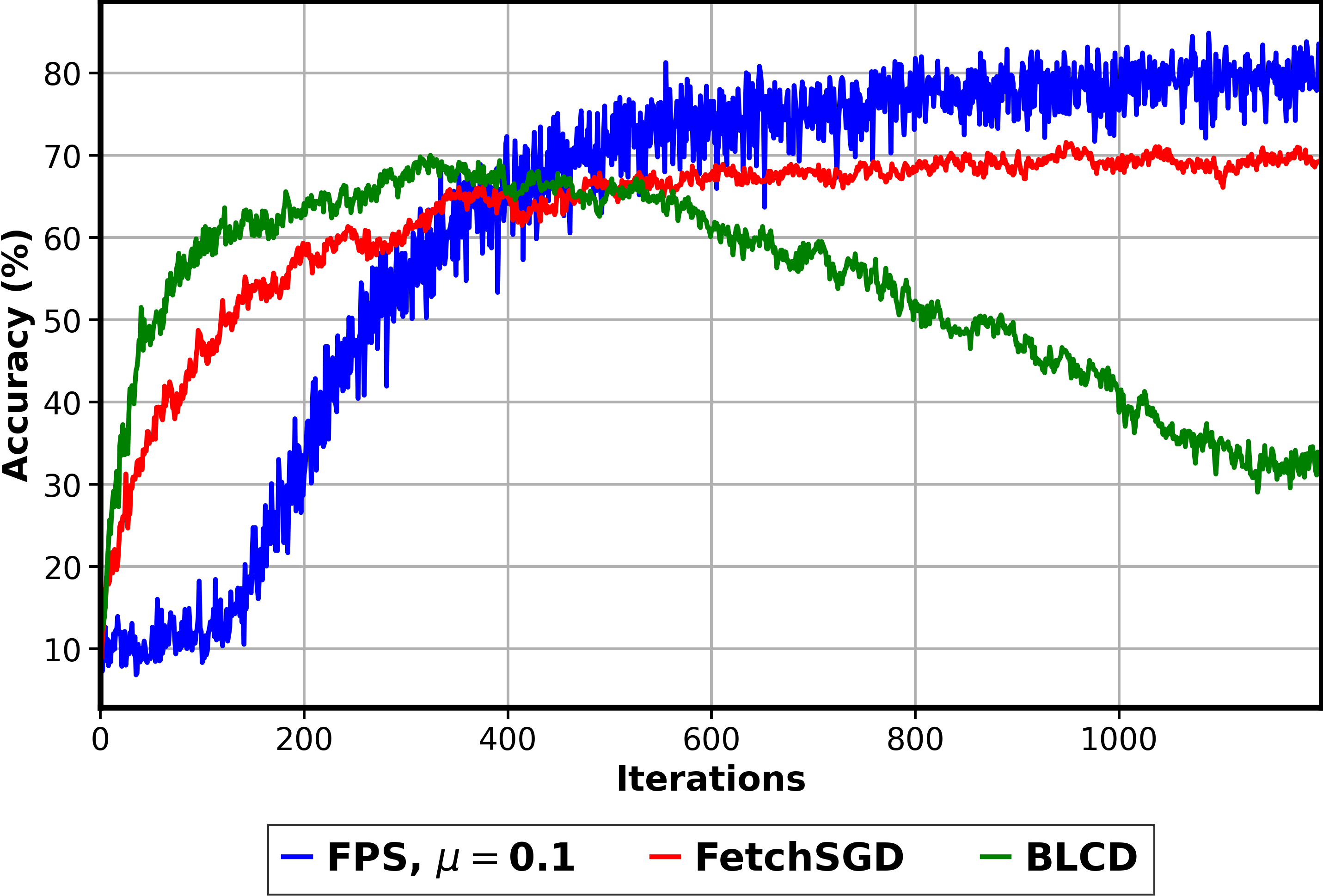}
        \caption{}
        \label{fig:mnist_scenario_4}
    \end{subfigure}
    \caption{Plotting test accuracy for FPS, BLCD, FetchSGD on MNIST dataset under noisy channel conditions. The figures correspond to different data partitioning strategies (a) Scenario 1 (b) Scenario 2 (c) Scenario 3 (d) Scenario 4. }
    \label{fig:mnist}
\end{figure*}

In the noisy IID case, as depicted in Figure \ref{fig:mnist_scenario_1}, both FPS and FetchSGD perform well and exhibit comparable accuracies. The slower convergence of FPS can be attributed to the sketching of gradient updates into the count sketch operator, which leads to cancellations and an efficient representation of model parameters at each epoch. By leveraging the existence of a low-dimensional representation of model parameters and consistently sketching gradient vectors in the count-sketch data structure (without re-initializing the CS data structure to zero at the start of each communication round), we can efficiently represent these low-dimensional model parameters after a few epochs. We also observe that the BLCD algorithm exhibits a sudden increase in accuracy followed by a rapid decline in performance. This behavior suggests that while BLCD quickly learns the optimal model parameters, the gradient updates become minimal. Due to noisy wireless channels, these gradient updates are corrupted, resulting in model divergence or a drop in accuracy.

In Scenario 2, for the MNIST dataset, which consists of 10 classes distributed across 10 edge devices such that each edge device has samples corresponding to only one class, as illustrated in Figure \ref{fig:mnist_scenario_2}, we find that our algorithm maintains robust performance despite extreme data heterogeneity. Interestingly, although BLCD and FetchSGD are not designed to handle data heterogeneity among clients, they still perform reasonably well. Analyzing this unexpected behavior presents an intriguing open question, which we leave for future exploration. In Scenarios 3 and 4, where data heterogeneity is less extreme, we observe in Figures \ref{fig:mnist_scenario_3} and \ref{fig:mnist_scenario_4} that FPS continues to outperform or match the performance of FetchSGD. Meanwhile, BLCD does not perform well, mirroring the behavior observed in the noisy IID case.

Referring to Table \ref{tab:mnist}, we conclude that FPS is generally more robust and consistently performs well across different scenarios. Notably, algorithms like FedProx and Top-k, which are not band-limited in nature, only excel in the IID setting or in Scenario 4, where data heterogeneity is minimal. Although our approach shares similarities with the FedProx algorithm in utilizing a proximal term, the results indicate that count-sketch data structures are resilient to additive noise in wireless channels, providing robust estimates of model parameters. We present the plots corresponding to the noise-free case in Appendix \ref{app:additonal_experiments}.

\begin{table*}[h]
\centering
\begin{tabular}{|c|c|c|c|c|c|c|}
\hline
\multirow{2}{*}{\shortstack{Label  \\ skewness}} & \multirow{2}{*}{\shortstack{Noise 
\\ $\mathcal{N}(0, \sigma^2)$}} & \multicolumn{5}{c|}{Accuracy ($\%$)} \\ \cline{3-7}
 & & \multicolumn{1}{c|}{FPS}  & \multicolumn{1}{c|}{FetchSGD} & \multicolumn{1}{c|}{BLCD} & \multicolumn{1}{c|}{Top-k}& \multicolumn{1}{c|}{FedProx} \\
\hline
\multirow{2}{*}{Scenario 1} &  $\sigma = 0$&  $90.23 \pm 1.79$ &$91.01 \pm 3.01$  &$92.38 \pm 0.57$  & $\bf{ 97.33 \pm 4.8 }$ & $91.53 \pm 1.26$  \\
\cline{2-7}
 & $\sigma = 0.8$ &${\bf 81.31 \pm 1.3}$  & $74.21 \pm 0.87$ & $28.05 \pm 1.29$ & $13.80 \pm 5.61$ & $66.47 \pm 3.22$\\
\hline  
\multirow{2}{*}{Scenario 2} &  $\sigma = 0$& ${\bf 84.5 \pm 0.82}$ & $13.2 \pm 1.33$  &$8.33 \pm 1.61$ & $10.48 \pm 3.82$ & $8.2 \pm 1.41$\\
\cline{2-7}
 & $\sigma = 0.8$ &${\bf 74.54 \pm 1.52}$  &$ 71.80 \pm 1.79$  &$66.40 \pm 1.26$  &$63.54 \pm 4.76$ & $8.39 \pm 0.36$   \\
\hline  
\multirow{2}{*}{Scenario 3} &  $\sigma = 0$&${\bf 87.63 \pm 0.57}$  &$65.69 \pm 0.97$  &$56.83 \pm 1.52$ & $64.97 \pm 3.91$  &$28.38 \pm 0.48$  \\
\cline{2-7}
& $\sigma = 0.8$ &  ${\bf78.38 \pm 0.64}$  & $72.72 \pm 3.53$ & $54.03 \pm 1.11$ & $15.8 \pm 6.09$ &$43.16 \pm 1.7$ \\
\cline{2-7}
\hline  
\multirow{2}{*}{Scenario 4} &  $\sigma = 0$& $90.36 \pm 0.54$ & $88.02 \pm 2.5$&$89.38 \pm 2.5$ & ${\bf 95.7 \pm 2.76}$& $89.84 \pm 0.15$\\
\cline{2-7}
 & $\sigma = 0.8$ & ${\bf 80.27 \pm 0.80}$ & $70.83 \pm 2.48$ & $33.9 \pm 1.7$ &$11.71 \pm 7.3$ & $67.57 \pm 0.9$  \\
\hline  
\end{tabular}  
\caption{Test accuracy of different distributed algorithms under varying channel conditions and statistical heterogeneity. For FPS and FedProx, we tune $\mu$ from $\{0, 0.01, 0.1, 1\}$ and report the best accuracy over MNIST dataset.}
\label{tab:mnist}
\end{table*}
 
 It is important to acknowledge the limitations of our approach. Our algorithm relies on the assumption of an approximately sparse gradient vector and low-dimensional model parameters, which reduces collisions in the count sketch data structure. In cases with dense gradient vectors, compression techniques like count sketch are ineffective, and algorithms such as top-$k$ and BLCD perform better. Each state-of-the-art method excels in specific scenarios, so the choice of FL algorithm depends on the application's requirements.

\section{Conclusion}
\label{sec:conc}
We propose Federated Proximal Sketching (FPS), a novel algorithm that learns a global model over bandlimited, noisy wireless channels with data heterogeneity across edge devices. This is the first work to provide both theoretical guarantees and empirical results using sketching as a compression operator under such conditions. We prove that the communication cost to the central server at any round is $\mathcal{O}(\log   d)$, significantly lower than the ambient dimension $d$, making FPS efficient for large-scale datasets. Our experiments confirm that FPS's count-sketch compression reduces communication cost with no significant loss in performance.

To simulate data heterogeneity, we employ partitioning strategies based on real-world scenarios. Our results show that adding a proximal term to the loss function stabilizes FPS, preventing divergence under varying heterogeneity and channel noise. We model the impact of data heterogeneity and bias due to noise, and provide convergence results that reveal how parameters like CS size, heterogeneity, bias, and convergence rate interact.

In summary, FPS addresses key challenges in federated learning: data heterogeneity, bandlimited and noisy channels. Our experiments on synthetic and real-world datasets validate the robustness, stability, and superior performance of FPS compared to state-of-the-art algorithms.



\newpage
\bibliography{annot}
\bibliographystyle{tmlr}
\newpage
\appendix
\section*{Appendix}
This appendix is organized as follows: 
   Section \ref{app:mission} outlines a core element of our paper, the MISSION algorithm. 
     Section \ref{app:cs} provides the main result for count sketch data structure.
     Section \ref{app:proofs} provides detailed proofs of main theorem and lemma. 
     Section \ref{app:exp} discusses the experimental setup and additional empirical results. Section \ref{app:grad_compress} discusses empirical results supporting the  gradient compressibility assumption (Assumption 4) made in the main paper.
\section{MISSION Algorithm}
\label{app:mission}
The algorithm proposed in \cite{agha2018a} is first initialized with a vector $\bw_0$ and initialize a count sketch data structure $S$ with zero entries. At iteration $t$, mini-batch stochastic gradient is computed using mini-batch $\xi_t$ and we denoted this as $\bg_t$. We form the the gradient update vector by multiplying it with the learning rate: $(-\gamma   \bg_t)$. We then add the non-zero entries of this computed gradient update vector to the count sketch $S$. Next, MISSION extracts top-$k$ heavy hitters from the sketch, $\bw_{t+1}$. The process computation of stochastic gradients and adding it to the sketch is run recursively until the number of iterations desired or until convergence. 
\begin{algorithm}
   \caption{MISSION}
   \label{alg:mission}
\begin{algorithmic}[1]
    \State Initialize initial vector $\bw_0$, Count Sketch $S$ and learning rate $\gamma$
    \For {$t = 1,2, \dots, T$}
        \State Compute stochastic gradient using mini-batch $\xi_{t}$: $\bg_t(\bw_t)$
		\State Sketch the local vector $(-\gamma  \bg_t)$ into $S(\bw_t)$: $S(\bw_t - \gamma   \bg_t)$
		\State Unsketch and extract parameter vector: $\bw_{t+1} = \mathcal{U}_k(S(\bw_{t+1}))$
    \EndFor
    \State {\bf Return:}  The top-$k$ heavy-hitters of parameter vector $\bw$ from the Count-Sketch
\end{algorithmic}
\end{algorithm}
\section{Count Sketch}
\label{app:cs}
We now state the main theorem of count sketch data structure. \spl
{\bf Theorem 2 (Count-sketch). } { \it  For a vector $\bg \in \mathbb{R}^d$, count sketch recovers the top-$k$ coordinates with error $\pm \varepsilon \| \bg\|_2$ with memory $\mathcal{O}\left( \left(k+ \frac{\|\bg^{tail} \|^2}{\varepsilon^2  \bg(k)^2}  \right) \log \frac{d}{\delta}\right)$; where $\| \bg^{tail}\|^2= \sum_{i \notin top-k} (\bg (i) )^2$ and $\bg(k)$ is
the $k$-th largest coordinate and this holds with probability at least $1-\delta$.}\spl
For a detailed proof, we refer to \cite{Charikar2002} . 
\section{Theoretical Proofs}
\label{app:proofs}
\subsection{Proof of Lemma 1}
Here we state a lemma that upper bounds the residual error after unsketching top-$k$ coordinates of the iterates. This lemma follows directly from the initial recovery guarantees derived in \cite{Charikar2002}. We uniformly bound the iterates above by a positive constant $W$ such that: $\Eb \left [\|\bw \|^2\right ] \leq W$. Though this might seem like a bold assumption, we empirically validate that this is true in Section \ref{app:grad_compress}. We denote the unsketched top-$k$ coordinates of the iterate $\bw_t$ as $\tilde{\bw_t}$. Here, the subscript $t$ denotes the time index. Under Assumption 4 and the recovery guarantees stated in Theorem 2 we state the following lemma.   
\begin{lemma}
\label{lem:res_term}
 If the Count Sketch algorithm recovers the top-$k$ coordinates with error $\varepsilon = \frac{1}{\sqrt{c k}}$ and sketch size scaling like $\mathcal{O}\left( c k \log \frac{dT}{\delta} \right)$, the following holds for any iterate $\bw \in \Rb^d$ with probability at least $1-\frac{\delta}{T}$:
\begin{align}
\Eb \left [\|\bw_t - \tilde{\bw}_t \|^2\right]    \leq \left(\frac{1}{c} + \frac{(k+1)^{1-2p} - d^{1-2p}}{2p - 1}\right)   W 
\end{align}
\end{lemma}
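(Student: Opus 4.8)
The plan is to split the residual $\bw_t-\tilde{\bw}_t$ into the part living on the recovered top-$k$ coordinates and the discarded tail, control the first with the count-sketch accuracy guarantee and the second with the power-law decay of Assumption~\ref{as:grad_comp}, and only then take expectations. Write $\mathrm{HH}$ for the index set of the $k$ largest-magnitude coordinates of $\bw_t$, ordered so that $|\bw_t(i)|=i^{-p}\|\bw_t\|$ as in Assumption~\ref{as:grad_comp}, and let $\widehat{\bw}_t(i)$ denote the count-sketch estimate of $\bw_t(i)$, so that $\tilde{\bw}_t$ coincides with $\widehat{\bw}_t$ on $\mathrm{HH}$ and vanishes off it. Then
\begin{align}
\|\bw_t-\tilde{\bw}_t\|^2 \;=\; \sum_{i\in\mathrm{HH}}\big(\bw_t(i)-\widehat{\bw}_t(i)\big)^2 \;+\; \sum_{i\notin\mathrm{HH}}\bw_t(i)^2 .
\end{align}

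For the first sum I would invoke Theorem~2 (equivalently the recovery bound \eqref{eq:cs_guarantee}) with accuracy $\varepsilon=1/\sqrt{ck}$, which is afforded by a sketch of size $\mathcal{O}(ck\log\tfrac{dT}{\delta})$; taking the failure probability in Theorem~2 to be $\delta/T$ makes the estimate $|\bw_t(i)-\widehat{\bw}_t(i)|\le\varepsilon\|\bw_t\|$ valid for this fixed iterate with probability at least $1-\delta/T$, whence $\sum_{i\in\mathrm{HH}}(\bw_t(i)-\widehat{\bw}_t(i))^2\le k\varepsilon^2\|\bw_t\|^2=\|\bw_t\|^2/c$. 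For the tail sum, Assumption~\ref{as:grad_comp} gives $\bw_t(i)^2=i^{-2p}\|\bw_t\|^2$, so $\sum_{i\notin\mathrm{HH}}\bw_t(i)^2=\|\bw_t\|^2\sum_{i=k+1}^{d}i^{-2p}$, and comparing this sum to the integral of the decreasing function $x\mapsto x^{-2p}$ yields $\sum_{i=k+1}^{d}i^{-2p}\le\int_{k}^{d}x^{-2p}\,dx=\tfrac{k^{1-2p}-d^{1-2p}}{2p-1}$, which is of the claimed form up to the harmless off-by-one in the lower limit. Adding the two bounds gives $\|\bw_t-\tilde{\bw}_t\|^2\le\big(\tfrac1c+\tfrac{(k+1)^{1-2p}-d^{1-2p}}{2p-1}\big)\|\bw_t\|^2$ on the high-probability event; taking expectations and using $\Eb[\|\bw_t\|^2]\le W$ closes the argument.

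The step I expect to be the main obstacle is not the arithmetic but justifying that the \emph{iterate} $\bw_t$, rather than merely the stochastic gradient, follows the power-law profile of Assumption~\ref{as:grad_comp}: in FPS the iterate is exactly the top-$k$ read-out of a count sketch into which scaled gradients are recursively accumulated, so one has to argue that this structure preserves approximate sparsity (or, more weakly, that the bound on $\|\bw_t^{\mathrm{tail}}\|^2$ it implies still holds). A secondary, more technical point is that the support returned by $\mathcal{U}_k$ need not be exactly the true top-$k$ set $\mathrm{HH}$; the standard remedy is to observe that the recovered set contains every coordinate of magnitude exceeding $\varepsilon\|\bw_t\|$, so swapping a true heavy hitter for a recovered one perturbs the tail mass by at most an $O(\varepsilon^2\|\bw_t\|^2)$ term that is absorbed into the $1/c$ contribution. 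Everything else is a direct substitution into Theorem~2 and Assumption~\ref{as:grad_comp}.
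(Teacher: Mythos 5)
Your proof follows essentially the same route as the paper's: split the error into the top-$k$ recovery part (bounded by $k\varepsilon^2\|\bw_t\|^2 = \|\bw_t\|^2/c$ via the count-sketch guarantee with failure probability $\delta/T$) and the discarded tail (bounded by the power-law sum $\sum_{i>k} i^{-2p}$ via integral comparison), then take expectations against $\Eb[\|\bw_t\|^2]\le W$. The two obstacles you flag --- transferring Assumption~\ref{as:grad_comp} from gradients to the accumulated iterate, and identifying the recovered support with the true top-$k$ set --- are precisely the points the paper itself glosses over (it writes $|\bw_t(i)| = i^{-p}\sum_{j\le t}\|{-\gamma\bg_j}\|$ and silently equates the two supports), so your treatment is, if anything, the more careful one.
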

\begin{proof}
\begin{align}
\label{eq:w_1}
\Eb\left[\|\bw_t - \tilde{\bw}_t   \|^2\right] &= \Eb\left[\|\bw_t - \calU_k(S({\bw}_t))   \|^2\right] \notag\spl
&= \Eb\left[\sum_{i = 1}^{k}   \mid \bw_t(i) - \tilde{\bw}_t(i) \mid^2 + \sum_{i= k+1}^{d}  ( \bw_t(i))^2\right]\notag\spl
&= \Eb\left[\varepsilon^2  k   \| \bw_t \|^2 + \sum_{i= k+1}^{d}  ( \bw_t(i))^2\right]\notag\spl
&= \Eb\left[\varepsilon^2  k   \| \bw_t \|^2  + \sum_{i=k+1}^{d}  i^{-2p}  \left(\sum_{j=1}^{t}\left |\left| -\gamma  \bg_j \right|\right|\right)^2\right]\notag\spl
&\leq\Eb\left[ \varepsilon^2  k   \| \bw_t \|^2  + \sum_{i=k+1}^{d}  i^{-2p}  \left(\left |\left| \sum_{j=1}^{t} - \gamma   \bg_j \right|\right|\right)^2\right]\notag\spl
&=  \left(\varepsilon^2 k  + \sum_{i=k+1}^{d}  i^{-2p}\right)   \Eb\left[  \|\bw_t\|^2\right] \notag\spl
&\leq \left(\frac{1}{c} + \frac{(k+1)^{1-2p} - d^{1-2p}}{2p - 1}\right)   \Eb\left[  \|\bw_t\|^2 \right] \notag \spl 
&\leq \left(\frac{1}{c} + \frac{(k+1)^{1-2p} - d^{1-2p}}{2p - 1}\right)   W  .
\end{align}
\end{proof}

Note that, the larger the sketch size gets; the number of coordinates that we can unsketch increases with higher accuracy ($\varepsilon$ decreases). 

\subsection{Proof of Lemma 2}
\begin{lemma}
\label{lemma2}
    For a step size $\gamma \leq \frac{1}{4  E  (L+\mu)  (1 + 2  B^2(P_b + P_n))} $ , we can bound the drift for any $e \in \{0, \dots, E-1\}$ as,
    \begin{align}
        \Eb \left [ \| \bw^e_t - \bw_t\|^2 \right] \leq  30  E^2   \gamma^2   \left( (1 + 2  B^2  (P_b + P_n))\|\nabla f(\bw_t) \|^2 + b^2 + \sigma^2 \right)
    \end{align}
\end{lemma}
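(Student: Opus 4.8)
The plan is to unroll the $E$ inner iterations of a communication round, express the drift $\bw_t^e-\bw_t$ as a sum of scaled stochastic gradients, and then derive a \emph{self-referential} bound on the expected squared drift that can be closed using the stated step-size restriction. Fix one device and write its inner iterates during round $t$ as $\bw_t^0=\bw_t$ and $\bw_t^{e+1}=\bw_t^e-\gamma\,\bg_t^e$, where $\bg_t^e$ is the (stochastic, possibly biased) gradient of the proximal loss, so that $\bw_t^e-\bw_t=-\gamma\sum_{j=0}^{e-1}\bg_t^j$. Cauchy--Schwarz, $\big\|\sum_{j=0}^{e-1}a_j\big\|^2\le E\sum_{j=0}^{e-1}\|a_j\|^2$, followed by taking expectations gives $\Eb[\|\bw_t^e-\bw_t\|^2]\le\gamma^2 E\sum_{j=0}^{e-1}\Eb[\|\bg_t^j\|^2]$.

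The next step is to control each $\Eb[\|\bg_t^j\|^2]$. Decomposing $\bg_t^j=\nabla f(\bw_t^j)+\beta_t^j+\zeta_t^j$ as in Definition~\ref{def:biased_grad} (the $\zeta_t^j$ cross-terms vanishing in conditional expectation), and combining the bounded-noise Assumption~\ref{as:bounded-noise}, the bounded-bias Assumption~\ref{as:bounded-bias}, and the $B$-local dissimilarity (Definition~\ref{def:dissimilarity}, Assumption~\ref{as:dissimilarity}) to absorb the cross-device heterogeneity, one obtains a bound of the form $\Eb[\|\bg_t^j\|^2]\le\big(1+2B^2(P_b+P_n)\big)\,\Eb[\|\nabla f(\bw_t^j)\|^2]+O(b^2+\sigma^2)$, i.e.\ with the coefficient $1/H$ appearing in the definition of $\rho(\gamma)$. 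Since this is evaluated at $\bw_t^j$ rather than at $\bw_t$, I then invoke $(L+\mu)$-smoothness of the proximal objective in the form $\|\nabla f(\bw_t^j)\|^2\le 2\|\nabla f(\bw_t)\|^2+2(L+\mu)^2\|\bw_t^j-\bw_t\|^2$ and substitute back, which turns the previous estimate into a recursion for $D_e:=\Eb[\|\bw_t^e-\bw_t\|^2]$ of the shape $D_e\le\gamma^2 E\sum_{j=0}^{e-1}\big(c\,(L+\mu)^2 D_j+c\,\|\nabla f(\bw_t)\|^2+c'(b^2+\sigma^2)\big)$ with $c$ proportional to $1+2B^2(P_b+P_n)$.

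To close the recursion, I set $D_{\max}:=\max_{0\le e\le E}D_e$, bound $\sum_{j=0}^{e-1}D_j\le E\,D_{\max}$, and maximize over $e$; the coefficient of $D_{\max}$ on the right is then a constant times $\gamma^2E^2(L+\mu)^2(1+2B^2(P_b+P_n))$, and precisely the hypothesis $\gamma\le\tfrac{1}{4E(L+\mu)(1+2B^2(P_b+P_n))}$ forces this to be at most $\tfrac18$, so $D_{\max}$ can be moved to the left-hand side. Solving for $D_{\max}$ and tracking the factors of $2$ and $3$ coming from Cauchy--Schwarz, from the triangle/Young inequalities applied to $\nabla f+\beta+\zeta$, and from the geometric factor $(1-\tfrac18)^{-1}$, yields the stated constant $30$ together with the claimed bound (which then holds in particular for every $e\in\{0,\dots,E-1\}$).

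The main obstacle is exactly this feedback loop: $\Eb[\|\bg_t^j\|^2]$ is controlled only via $\|\nabla f(\bw_t^j)\|^2$, which is in turn controlled only up to the very drift $\|\bw_t^j-\bw_t\|^2$ that we are trying to bound, so the step size must be chosen so the induced loop gain is strictly below one -- this is the origin of the $\tfrac{1}{4E(L+\mu)(1+2B^2(P_b+P_n))}$ threshold and of the factor $E$ inside it. A secondary, routine caveat is that Assumption~\ref{as:dissimilarity} only guarantees $B(\bw)\le B$ on $S_\epsilon=\{\bw:\|\nabla f(\bw)\|^2>\epsilon\}$, so the dissimilarity step needs either that the inner iterates remain in $S_\epsilon$ across a round (reasonable for the small $\gamma$ and $E$ considered) or a separate trivial bound in the near-stationary regime $\|\nabla f(\bw_t)\|^2\le\epsilon$.
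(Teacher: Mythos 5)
Your proposal is correct and rests on the same three pillars as the paper's proof: the decomposition $\bg_t^j=\nabla f(\bw_t^j)+\beta_t^j+\zeta_t^j$ controlled via Assumptions \ref{as:bounded-noise} and \ref{as:bounded-bias} together with the dissimilarity constant $B$; the $(L+\mu)$-smoothness step $\|\nabla f(\bw_t^j)\|^2\le 2\|\nabla f(\bw_t)\|^2+2(L+\mu)^2\|\bw_t^j-\bw_t\|^2$ that creates the feedback loop you identify; and the step-size restriction that makes the loop gain small. Where you genuinely differ is in how the recursion is closed. The paper keeps a one-step recursion, applying the weighted Young inequality $\|a+b\|^2\le\left(1+\tfrac{1}{2E-1}\right)\|a\|^2+2E\|b\|^2$ to obtain $D_e\le\left(1+\tfrac{1}{E-1}\right)D_{e-1}+6E\gamma^2(\cdots)$ and then unrolling geometrically via $\left(1+\tfrac{1}{E-1}\right)^E\le 5$ (the source of the constant $30=6\cdot 5$); you instead unroll $\bw_t^e-\bw_t=-\gamma\sum_{j<e}\bg_t^j$ up front, bound $\sum_{j<e}D_j\le E\,D_{\max}$, and solve the resulting fixed-point inequality $D_{\max}\le\tfrac18 D_{\max}+(\cdots)$. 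Both routes are standard and both succeed; yours is arguably more elementary (no geometric-growth numerology, and it yields an absolute constant well below $30$, so the stated bound follows a fortiori), while the paper's per-step contraction form is what it reuses directly in the Theorem~\ref{thm:fps} computation. Two bookkeeping caveats on your side: the direct computation gives $\Eb[\|\bg_t^j\|^2]\le(2+2P_bB^2+P_nB^2)\,\Eb[\|\nabla f(\bw_t^j)\|^2]+2b^2+\sigma^2$ rather than a leading constant of $1+2B^2(P_b+P_n)$ (the factor $2$ from splitting $\nabla f+\beta$ cannot be avoided), which only perturbs the absolute constants; and your observation that Assumption~\ref{as:dissimilarity} controls $B(\bw)$ only on $S_\epsilon$ is a fair point that the paper's own proof also passes over silently.
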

\begin{proof}
    Now let us concentrate on the term $\| \bw^e_t - \bw_t\|^2$, we get: 
\begin{align}
    \Eb \left [ \| \bw^e_t - \bw_t\|^2 \right] &= \Eb \left[ \| \bw^{e-1}_t - \gamma   \bg^{e-1}_t - \bw_t\|^2 \right ]\notag \\ 
    &= \Eb\left [\| \bw^{e-1}_t  - \bw_t - \gamma   \left(\bg^{e-1}_t - \nabla f(\bw^{e-1}_t) + \nabla f(\bw^{e-1}_t) - \nabla f(\bw_t) + \nabla f(\bw_t) \right)\|^2\right] \notag \\ 
    &\leq \left( 1+ \frac{1}{2E-1}\right)  \Eb \left [\|\bw^{e-1}_{t} - \bw_t \|^2\right] \notag \\ 
    &\hspace{10em} + 2 E   \gamma^2   \Eb \left [\|  \nabla f(\bw^{e-1}_t)- \bg^{e-1}_t + \nabla f(\bw^{e-1}_t) - \nabla f(\bw_t) + \nabla f(\bw_t) \|^2 \right] \notag \\ 
    &\leq \left( 1+ \frac{1}{2E-1}\right)   \Eb \left [\|\bw^{e-1}_{t} - \bw_t \|^2\right] + 6  E  \gamma^2   \Eb \left[ \| \nabla f(\bw^{e-1}_t) - \bg^{e-1}_t \|^2 \right ] \notag \\
    & \hspace{10em} + 6  E  \gamma^2   \Eb \left [ \| \nabla f(\bw^{e-1}_t) - \nabla f(\bw_t) \|^2 \right] + 6  E  \gamma^2  \| \nabla f(\bw_t)\|^2\notag \\ 
    &\leq \left( 1+ \frac{1}{2E-1} + 6  E   (L+\mu)^2   \gamma^2 \right) \Eb \left [\|\bw^{e-1}_{t} - \bw_t \|^2 \right]\notag \\
    &\hspace{10em} + 6  E   \gamma^2   \left( \Eb \left [ \| \beta^{e-1}_t + \zeta^{e-1}_t \|^2 \right] + \|\nabla f(\bw_t) \|^2\right) \notag \\ 
    &\leq \left( 1+ \frac{1}{2E-1} + 6  E   (L+\mu)^2   \gamma^2 \right) \Eb \left [\|\bw^{e-1}_{t} - \bw_t \|^2 \right]\notag \\
    &\hspace{5em} + 6  E   \gamma^2   \left(  B^2  (P_b + P_n)  \Eb \left[\|\nabla f(\bw^{e-1}_t)\|^2\right] + b^2 + \sigma^2 + \|\nabla f(\bw_t) \|^2\right) \notag \\ 
    &\leq \left( 1+ \frac{1}{2E-1} + 6  E   (L+\mu)^2   \gamma^2 \right) \Eb \left [\|\bw^{e-1}_{t} - \bw_t \|^2 \right]\notag \\
    &\hspace{0em} + 6  E   \gamma^2   \left(  B^2  (P_b + P_n)  \Eb \left[\|\nabla f(\bw^{e-1}_t) - \nabla f(\bw_t) + \nabla f(\bw_t)\|^2\right] + b^2 + \sigma^2 + \|\nabla f(\bw_t) \|^2\right) \notag \\ 
    &\leq \left( 1+ \frac{1}{2E-1} + 6  E   (L+\mu)^2   \gamma^2  (1 + 2  B^2  (P_b+P_n)) \right) \Eb \left [\|\bw^{e-1}_{t} - \bw_t \|^2 \right]\notag \\
    &\hspace{10em} + 6  E   \gamma^2   \left( (1 + 2  B^2  (P_b + P_n))\|\nabla f(\bw_t) \|^2 + b^2 + \sigma^2 \right) \notag  .
\end{align}
We assume $\gamma \leq \frac{1}{4  E  (L+\mu)  (1 + 2  B^2(P_b + P_n))}$ and using this in our analysis so far we get,  
\begin{align}
     \Eb \left [ \| \bw^e_t - \bw_t\|^2 \right] &\leq \left( 1+ \frac{1}{2E-1} + \frac{6}{16 (1 + 2  B^2  (P_b+P_n)) E} \right) \Eb \left [\|\bw^{e-1}_{t} - \bw_t \|^2 \right]\notag \\
    &\hspace{10em} + 6  E   \gamma^2   \left( (1 + 2  B^2  (P_b + P_n))\|\nabla f(\bw_t) \|^2 + b^2 + \sigma^2 \right) \notag \\
    &\leq \left( 1+ \frac{1}{2E-1} + \frac{1}{2 E} \right) \Eb \left [\|\bw^{e-1}_{t} - \bw_t \|^2 \right]\notag \\
    &\hspace{10em} + 6  E   \gamma^2   \left( (1 + 2  B^2  (P_b + P_n))\|\nabla f(\bw_t) \|^2 + b^2 + \sigma^2 \right) \notag \\
    &\leq \left( 1+ \frac{1}{E-1} \right) \Eb \left [\|\bw^{e-1}_{t} - \bw_t \|^2 \right]\notag \\
    &\hspace{10em} + 6  E   \gamma^2   \left( (1 + 2  B^2  (P_b + P_n))\|\nabla f(\bw_t) \|^2 + b^2 + \sigma^2 \right) \notag  .
\end{align}
Going recursively, 
\begin{align}
\label{eq:epoch_recursive}
   \Eb \left [ \| \bw^e_t - \bw_t\|^2 \right] &\leq
   \sum_{e = 0}^{E-1}  \left(1 + \frac{1}{E-1}\right)^e  6  E   \gamma^2   \left( (1 + 2  B^2  (P_b + P_n))\|\nabla f(\bw_t) \|^2 + b^2 + \sigma^2 \right) \notag \\ 
   &= (E-1) \left(\left( 1 + \frac{1}{E-1}\right)^E - 1 \right)  6  E   \gamma^2   \left( (1 + 2  B^2  (P_b + P_n))\|\nabla f(\bw_t) \|^2 + b^2 + \sigma^2 \right) \notag \\ 
   &\leq 30  E^2   \gamma^2   \left( (1 + 2  B^2  (P_b + P_n))\|\nabla f(\bw_t) \|^2 + b^2 + \sigma^2 \right)
\end{align}
\end{proof}

The last inequality follows from the fact that $\left(1+\frac{1}{E-1}\right)^E \leq 5 $ for all $E>1$. The proof of the above Lemma loosely follows the proof of Lemma 3 in \cite{reddi2021adaptive}.
Let us now bound the second moment bounds of computed stochastic gradient, bias and noise terms. 
\begin{align}
    \bg_t(\bw_t) = \sum_{e = 0}^{E-1}  \bg_t(\bw^e_t) .
\end{align}
Keeping the representation simple, we write $\bg_t(\bw^e_t) = \bg_t^e$. Extending this representation, we can expand the computed gradient based on the general structure as, $\bg^e_t = \nabla f(\bw^e_t) + \beta^e_t + \zeta^e_t$.  
\begin{align}
\label{eq:grad_penal}
    \Eb\left [ \| \bg_t \|^2 \right] &= \Eb \left [\left | \left|\sum_{e=0}^{E-1}  \nabla f(\bw_t^e) +  \beta^e_t + \zeta^e_t \right | \right|^2 \right] \notag \\ 
    & \leq E \left ( \sum_e \left( (2 + 2P_b  B^2 + P_n  B^2 ) \| \nabla f(\bw^e_t) \|^2 + 2b^2 + \sigma^2 \right) \right) \notag \\
    &= (2 + 2P_b  B^2 + P_n  B^2 )   E \left ( \sum_e   \| \nabla f(\bw^e_t) \|^2 \right) + 2  E^2  b^2 + E^2 \sigma^2   \notag \\
    &= (2 + 2P_b  B^2 + P_n  B^2 )   E \left ( \sum_e   \| \nabla f(\bw^e_t) - \nabla f(\bw_t) + \nabla f(\bw_t)\|^2 \right) + 2  E^2  b^2 + E^2 \sigma^2 \notag \\ 
    &\leq 2  (2 + 2P_b  B^2 + P_n  B^2 )   \underbrace{E \left ( \sum_e    \| \nabla f(\bw^e_t) - \nabla f(\bw_t) \|^2  \right)}_{\text{term 1}} \notag \spl
    &\hspace{4em}+ 2  (2 + 2P_b  B^2 + P_n  B^2 )   E^2  \| \nabla f(\bw_t)\|^2  + 2  E^2  b^2 + E^2 \sigma^2  .
\end{align}
Focusing on bounding term 1 in the above equation, we get: 
\begin{align}
\label{eq:grad_epoch}
    E \left ( \sum_e   \Eb   [ \| \nabla f(\bw^e_t) - \nabla f(\bw_t) \|^2 ] \right) &\leq  (L+\mu)^2  E  \sum_e  \Eb [\| \bw^e_t - \bw_t\|^2]  .
\end{align}
Using the result derived in Lemma \ref{lemma2} we get, 
\begin{align}
\label{eq:grad_epoch_all}
     E \left ( \sum_e   \Eb   [ \| \nabla f(\bw^e_t) - \nabla f(\bw_t) \|^2 ] \right) &\leq 30  E^4 (L+\mu)^2   \gamma^2   \left( (1 + 2  B^2  (P_b + P_n))\|\nabla f(\bw_t) \|^2 + b^2 + \sigma^2 \right) \notag \\ 
     &\leq \frac{2  E^2}{(1 + 2  B^2  (P_b + P_n))^2}  \left( (1 + 2  B^2  (P_b + P_n))\|\nabla f(\bw_t) \|^2 + b^2 + \sigma^2 \right) .
\end{align}
Now, using \eqref{eq:grad_epoch_all} in  \eqref{eq:grad_penal},
\begin{align}
\label{eq:sec_grad}
    \Eb\left [ \| \bg_t \|^2 \right] &\leq 2  E^2  (2 + 2P_b  B^2 + P_n  B^2 )\left (1+ \frac{2}{(1 + 2  B^2  (P_b + P_n))} \right)  \|\nabla f(\bw_t) \|^2  \notag \spl
    &\hspace{4em} +2  \left(1 + \frac{(2 + 2P_b  B^2 + P_n  B^2 )}{(1 + 2  B^2  (P_b + P_n))^2}\right)  E^2  b^2 + 2  \left(\frac{1}{2} + \frac{(2 + 2P_b  B^2 + P_n  B^2 )}{(1 + 2  B^2  (P_b + P_n))^2}\right)  E^2  \sigma^2  .
\end{align}
Similarly, 
\begin{align}
     \| \beta_t \|^2  &= \left | \left|\sum_{e=0}^{E-1}  \nabla  \beta^e_t  \right | \right|^2  \notag \\ 
    & \leq E \left ( \sum_e \left(P_b  B^2   \| \nabla f(\bw^e_t) \|^2 + b^2 \right) \right)\notag \\
    &= P_b  B^2    E \left ( \sum_e   \| \nabla f(\bw^e_t) \|^2 \right) +  E^2  b^2   \notag \\
    &= P_b  B^2     E \left ( \sum_e   \| \nabla f(\bw^e_t) - \nabla f(\bw_t) + \nabla f(\bw_t)\|^2 \right) +E^2  b^2 \notag \\ 
    &\leq  2P_b  B^2   E \left ( \sum_e    \| \nabla f(\bw^e_t) - \nabla f(\bw_t) \|^2  \right) 
    +2P_b  B^2   E^2  \| \nabla f(\bw_t)\|^2  +  E^2  b^2  \notag .
\end{align}
Taking expectation on both sides and using the result derived in \eqref{eq:grad_epoch_all} we get, 
\begin{align}
\label{eq:beta}
\| \beta_t \|^2  &\leq    2 P_b B^2  E^2  \left(1+\frac{2}{(1 + 2  B^2  (P_b + P_n))} \right)  \|\nabla f(\bw_t) \|^2 + \frac{2  E^2}{(1 + 2  B^2  (P_b + P_n))^2}  \left( \left(\frac{1}{2}+2 P_b B^2\right)  b^2 + \sigma^2 \right)  .
\end{align}
Similarly the upper bound on the second moment of noise $\zeta_t$, we have
\begin{align}
\label{eq:sec_zeta}
    \Eb\left[\| \zeta_t \|^2\right]  &\leq    2 P_n B^2  E^2  \left(1+\frac{2}{(1 + 2  B^2  (P_b + P_n))} \right)  \|\nabla f(\bw_t) \|^2 + \frac{2  E^2}{(1 + 2  B^2  (P_b + P_n))^2}  \left(  b^2 +\left(\frac{1}{2} + 2P_n B^2 \right)   \sigma^2 \right)  .
\end{align}
\subsection{Proof of Theorem 1}
In this section, we begin by defining some quantities and notations. We define the quantity:  $\tilde{\bw}_{t+1} = \mathcal{U}_k(S(\bw_{t+1}))$. Here, $\mathcal{U}_k(S(\cdot))$ represents the unsketching operation. The subscript $k$ denotes the number of top-$k$ coordinates extracted.  

As defined in Assumption 1 of the paper, the application specific loss function is $L-$smooth. We denote this application specific loss function as $\ell(\cdot)$. For instance, for a binary classification task, the loss function can be log-loss. Now, our restructured loss function which is formulated by appending a proximal or regularizer term with the leading constant denoted as: $\mu$. This is given by: 
\begin{align}
		f(\bw, \bw^{gb}) = \ell(\bw) + \frac{\mu}{2}   \left |\left| \bw - \bw^{gb} \right|\right|^2 ,
\end{align}
where, the iterate $\bw^{gb}$ as the last aggregated model parameter vector that was broadcasted by the central server.
To simplify, we reduce the notation of $f(\bw, \bw^{gb})$ to $f(\bw)$. Here, $\bw$ is the current iterate at which the function is being evaluated. Appending  such a proximal term preserves the smoothness of the function. Therefore, this new restructured loss function $f(\cdot)$ is $(L+\mu)-$smooth. 

 We assume that $\gamma \leq \frac{1}{2(L+\mu)}$.  Given that $f(\cdot)$ is  $(L+\mu)-$smooth,we have that:
\begin{align}
\label{eq:step1_fps}
\Eb_t[f(\tilde{\bw}_{t+1})] &\leq f(\tilde{\bw}_t) + \langle \nabla f(\tilde{\bw}_t), \Eb_t[\tilde{\bw}_{t+1} - \tilde{\bw}_{t}] \rangle + \frac{(L+\mu)}{2}   \Eb_t\left[\| \tilde{\bw}_{t+1} - \tilde{\bw}_{t}\|^2\right]\notag \spl
&=  f(\tilde{\bw}_t) - \langle \nabla f(\tilde{\bw}_t), \gamma  \Eb_{t} [\bg_{t}]  \rangle + \frac{(L+\mu)}{2}   \Eb_t\left[\| \gamma  \bg_{t} \|^2\right]\notag \spl
&=f(\tilde{\bw}_t) -\gamma   \langle \nabla f(\bw_t),\Eb_{t} [\bg_{t}] \rangle  + \langle \nabla f(\bw_t)-\nabla f(\tilde{\bw}_t), \gamma   \Eb_{t} [\bg_{t}] \rangle  
 +\frac{(L+\mu)}{2}   \gamma^2   \Eb_{t}\left[\| \bg_{t} \|^2\right]  \notag \spl
&\stackrel{(a)}{\leq} f(\tilde{\bw}_t) -\gamma   \langle \nabla f(\bw_t),\nabla f(\bw_t) + \beta_t \rangle  + \langle \nabla f(\bw_t)-\nabla f(\tilde{\bw}_t), \gamma   \Eb_{t} [\bg_{t}] \rangle   \notag\spl
&\hspace{5em}+\gamma^2  (L+\mu)   \left( \|\nabla f(\bw_t) + \beta_t\|^2+\Eb_{t}\left[\| \zeta_t \|^2\right]\right)  \notag \spl
&\stackrel{(b)}{\leq} f(\tilde{\bw}_t) +\frac{\gamma}{2}   \left( -2   \langle \nabla f(\bw_t),\nabla f(\bw_t) + \beta_t \rangle +  \|\nabla f(\bw_t) + \beta_t\|^2\right) \notag \spl
&\hspace{5em}+ \langle \nabla f(\bw_t)-\nabla f(\tilde{\bw}_t), \gamma   \Eb_{t} [\bg_{t}] \rangle  + \gamma^2  (L+\mu)   \left( \Eb_{t}\left[\| \zeta_t \|^2\right]\right)  \notag \spl
&= f(\tilde{\bw}_t) +\frac{\gamma}{2}   \left( -  \| \nabla f(\bw_t)\|^2 + \|\beta_t\|^2\right) + \langle \nabla f(\bw_t)-\nabla f(\tilde{\bw}_t), \gamma   \Eb_{t} [\bg_{t}] \rangle  \notag \spl
&\hspace{5em}+ \gamma^2  (L+\mu)   \left( \Eb_{t}\left[\| \zeta_t \|^2\right]\right)  ,
\end{align}
where, inequality $(a)$ is a consequence of using Young's inequality. Inequality $(b)$ is a direct consequence of using the assumption $\gamma \leq \frac{1}{2 (L+\mu)}$ from Lemma \ref{lemma2}. To keep our analysis visually easy to follow we abbreviate the quantity $\frac{1}{1 + 2 B^2 (P_b + P_n)}$ as $H$.\\

Continuing on with our proof from \eqref{eq:step1_fps} and utilizing the second moment bounds from \eqref{eq:sec_grad} , \eqref{eq:beta} and \eqref{eq:sec_zeta} we get:
\begin{align}
   \label{eq:step2_fps}
\Eb_t[f(\tilde{\bw}_{t+1})] &\leq  f(\tilde{\bw}_t) - \left(\frac{\gamma}{2}- \frac{\gamma  P_b  (1+2 H)  E^2  B^2}{2} - 2 P_n (L+\mu) (1 + 2 H) \gamma^2  E^2  B^2 \right)   \|\nabla f(\bw_t) \|^2   \notag \spl
&\hspace{0em}+ 2 E^2  H^2 \left( \left(\frac{1}{2}+2 P_b B^2\right)  \gamma + \gamma^2  (L+\mu)\right)  b^2+ 2 E^2  H^2 \left( \gamma + \left(\frac{1}{2} + 2 P_n B^2 \right)  \gamma^2  (L+\mu)\right)  \sigma^2\notag \spl
&\hspace{10em}+  \Eb_t[\langle (L+\mu) ( \bw_t - \tilde{\bw}_t ),  \gamma   \bg_{t}  \rangle] 
\notag \spl
&\stackrel{(d)}{\leq} f(\tilde{\bw}_t) - \left(\frac{\gamma}{2}- \frac{\gamma   P_b  (1+2 H)  E^2  B^2}{2} - 2 P_n (L+\mu) (1 + 2 H) \gamma^2  E^2  B^2 \right)   \|\nabla f(\bw_t) \|^2   \notag \spl
&\hspace{5em}+ 2 E^2  H^2 \left( \left(\frac{1}{2}+2 P_b B^2\right)  \gamma + \gamma^2  (L+\mu)\right)  b^2+ 2 E^2  H^2 \left( \gamma + \left(\frac{1}{2} + 2P_n B^2 \right)  \gamma^2  (L+\mu)\right)  \sigma^2\notag \spl
&\hspace{10em}+  \frac{(L+\mu)^2}{2}  \Eb_{t}\left[\|\bw_t - \tilde{\bw}_t   \|^2\right]+ \frac{\gamma^2}{2} \Eb_t \left[\|  \bg_{t} \|^2\right]  \notag \spl
&\leq f(\tilde{\bw}_t) +  \frac{(L+\mu)^2}{2} \fbox{$\Eb_{t}\left[\|\bw_t - \tilde{\bw}_t   \|^2\right]$} \notag \spl 
&\hspace{-5em}- \left(\frac{\gamma}{2}- \frac{\gamma  P_b  (1+2 H)  E^2  B^2}{2} - 2 P_n (L+\mu) (1 + 2 H) \gamma^2  E^2  B^2 - \gamma^2  E^2  (2 + 2P_b  B^2 + P_n  B^2 )\left (1+ 2 H \right)\right)   \|\nabla f(\bw_t) \|^2   \notag \spl
&\hspace{5em}+ 2 E^2  H^2 \left( \left(\frac{1}{2}+2 P_b B^2\right)  \gamma + \gamma^2  (L+\mu) + \left (\frac{1}{H^2} + (2 + 2P_b  B^2 + P_n  B^2 )\right) \gamma^2 \right)  b^2\notag \spl 
&\hspace{5em}+ 2 E^2  H^2 \left( \gamma + \left(\frac{1}{2} + 2P_n B^2 \right)  \gamma^2  (L+\mu) + \left((2 + 2P_b  B^2 + P_n  B^2 ) + \frac{1}{2  H^2}\right)  \gamma^2\right)  \sigma^2  .
\end{align}
Let us define the quantity: 
\begin{align}
    \rho(\gamma) &= \frac{1- P_b  (1+2 H)  E^2  B^2}{2} - 2 P_n (L+\mu) (1 + 2 H) \gamma E^2  B^2 - \gamma  E^2  (2 + 2P_b  B^2 + P_n  B^2 )\left (1+ 2 H \right)  \notag \\
    &=  \frac{1- P_b  (1+2 H)  E^2  B^2}{2} - \gamma  (2 + 2P_b  B^2 +(2(L+\mu) +1)  P_n  B^2 )\left (1+ 2 H \right)   E^2   .
\end{align}
We want the above defined quantity $\rho(\gamma)$ to be greater than zero. This provides us with a bound on the learning rate and it is given by: 
\begin{align}
    \gamma &< \frac{1- P_b  (1+2 H)  E^2  B^2}{2(2 + 2P_b  B^2 +(2(L+\mu) +1)  P_n  B^2 )\left (1+ 2 H \right)   E^2 } \notag 
\end{align}
Since $H \leq 1$  we get: 
\begin{align}
    \gamma &\leq \frac{1- 6  P_b  E^2  B^2}{12(1 + P_b  B^2 +(L+\mu +1)  P_n  B^2 )   E^2 } .
\end{align}

Now averaging from $0$ to $T$ on both sides and plugging the bound for residual term (highlighted in red in  \eqref{eq:step2_fps}) by Lemma \ref{lem:res_term} the following holds with probability $1-\delta$: 
\begin{align}
    \frac{1}{T+1}\sum_{t=0}^{T}  \gamma  \rho(\gamma)   \|\nabla f(\bw_t) \|^2& \leq \frac{ \left|f(\bw_0)-f(\bw^*) \right|}{(T+1)}  +  \left(\frac{1}{c} + \frac{(k+1)^{1-2p} - d^{1-2p}}{2p - 1}\right)  (L+\mu)^2 W \notag \spl
    &\hspace{-10em}  + 2 E^2  H^2 \left( \left(\frac{1}{2} + 2P_b B^2 \right)  \gamma + \gamma^2  (L+\mu) + \left (\frac{1}{H^2} + (2 + 2P_b  B^2 + P_n  B^2 )\right) \gamma^2 \right)  b^2\notag \spl 
    &\hspace{-5em}+ 2 E^2  H^2 \left( \gamma + \left(\frac{1}{2} + 2P_n B^2 \right)  \gamma^2  (L+\mu) + \left((2 + 2P_b  B^2 + P_n  B^2 ) + \frac{1}{2  H^2}\right)  \gamma^2\right)  \sigma^2 \notag  .
\end{align}
Then using the fact that $H \leq 1$ and rearranging terms,
\begin{align}
    \frac{1}{T+1}\sum_{t=0}^{T}  \rho(\gamma)   \|\nabla f(\bw_t) \|^2 &\leq  \frac{ \left|f(\bw_0)-f(\bw^*) \right|}{\gamma   (T+1)}  +  \left(\frac{1}{c} + \frac{(k+1)^{1-2p} - d^{1-2p}}{2p - 1}\right)  (L+\mu)^2 W \notag \spl
    &\hspace{3em}  + 2 E^2 \left( \left(\frac{1}{2} + 2P_b B^2 \right) + \gamma  (L+\mu) + \left (1 + (2 + 2P_b  B^2 + P_n  B^2 )\right) \gamma \right)  b^2 \notag \spl
    &\hspace{6em}+ 2 E^2  \left( 1 + \left(\frac{1}{2} + 2P_n B^2 \right)  \gamma  (L+\mu) + \left((2 + 2P_b  B^2 + P_n  B^2 ) + \frac{1}{2}\right)  \gamma\right)  \sigma^2 \notag \spl 
    &\leq  \frac{ \left|f(\bw_0)-f(\bw^*) \right|}{\gamma   (T+1)}  +  \left(\frac{1}{c} + \frac{(k+1)^{1-2p} - d^{1-2p}}{2p - 1}\right)  (L+\mu)^2 W \notag \spl
    &\hspace{2em}+ 2 E^2  \left( 1  + \gamma   (L+\mu+1)(3 + 2  P_b  B^2 + 2   P_n   B^2)\right)   \sigma^2 \notag \spl
    &\hspace{4em}+  2  E^2   \left( 1 + 2  P_b  B^2 + \gamma  (3 + L+\mu 
 + 2  P_b  B^2 + 2  P_n  B^2)\right)  b^2 .
\end{align}
\section{Experimental Details}
\label{app:exp}
The primary motivation for selecting the KDD10 and KDD12 datasets is to address the problem of feature selection in machine learning. Feature selection has numerous applications across various fields, including natural language processing, genomics, and chemistry. The goal of feature selection is to identify a small subset of features that best models the relationship between the input data and output. Therefore, efficiently learning a small subset of features in high-dimensional problems requires effective compression techniques. Consequently, the gradient update vectors satisfy the approximately sparse assumption defined in Assumption \ref{as:grad_comp}. For the real-world datasets considered in this paper (KDD10 and KDD12), we demonstrate that the computed stochastic gradient vector at each iteration meets the approximately sparse gradient assumption (Assumption \ref{as:grad_comp}) detailed in Appendix \ref{app:grad_compress}.  

\label{app:additonal_experiments}
\subsection{Synthetic Dataset}
\textbf{Data generation.} For the synthetic regression task in Scenario 1, we generate observations as $\mathbf{y} = \mathbf{X} \bw + 0.01 \bn$, where $\bw \in \Rb^d$ is the parameter vector and $\bn \in \Rb^d$ is additive Gaussian noise with each $n_i \sim \mathcal{N}(0,1)$. The design matrix $\bX \in \Rb^{N \times d}$ has rows $\bX_i \sim \mathcal{N}(\bar{\mathbf{0}}, \Sigma)$, with $\Sigma_{ii} = i^{-p}$ for $i \in [d]$. 

In Scenarios 2-4, we generate observations from two distributions: $\bX_i \sim \mathcal{N}(\bar{\mathbf{0}}, \Sigma_1)$ and $\bX_i \sim \mathcal{N}(\bar{\mathbf{0}}, \Sigma_2)$. Here, $\Sigma_1 = \Sigma$ as in Scenario 1, and $\Sigma_2$ is diagonal with elements $\Sigma_{ii} = j^{-p}$, where $j$ is a random permutation of $\{1, 2, \dots, d\}$.

\textbf{Experimental setup.} Each device is allocated 256 subcarriers. For FPS, the proximal parameter $\mu$ takes values $\{0, 0.001, 0.01, 0.1\}$. We set the dimension $d=10000$ and power law degree $p=5$.

Figure \ref{fig:syn_p_5} shows the average log test loss over 10 trials for FPS, FetchSGD, and BLCD under noisy bandlimited settings. From left to right, the figures correspond to Scenarios 1-4. FPS achieves the lowest test loss across all scenarios, with BLCD comparable in Scenario 2 and slightly weaker elsewhere. FetchSGD performs poorly in all cases. The optimal $\mu$ for FPS is indicated in the plot legends.
\begin{figure*}[h]
    \centering
    \begin{subfigure}[b]{0.4\textwidth}
        \includegraphics[width=\textwidth]{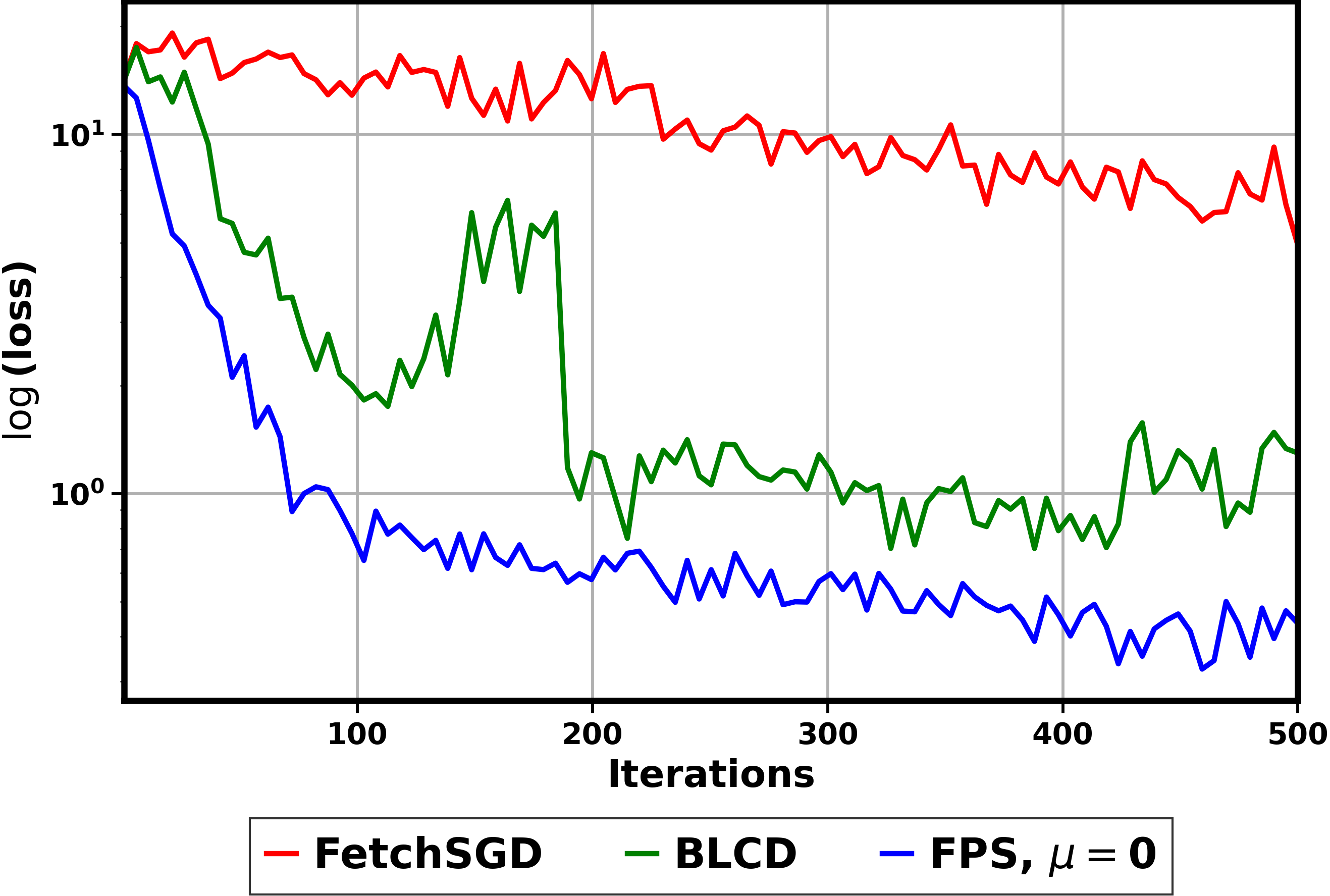}
        \caption{}
        \label{fig:syn_scene_1_pl_5}
    \end{subfigure}
    \hfill
    \begin{subfigure}[b]{0.4\textwidth}
        \includegraphics[width=\textwidth]{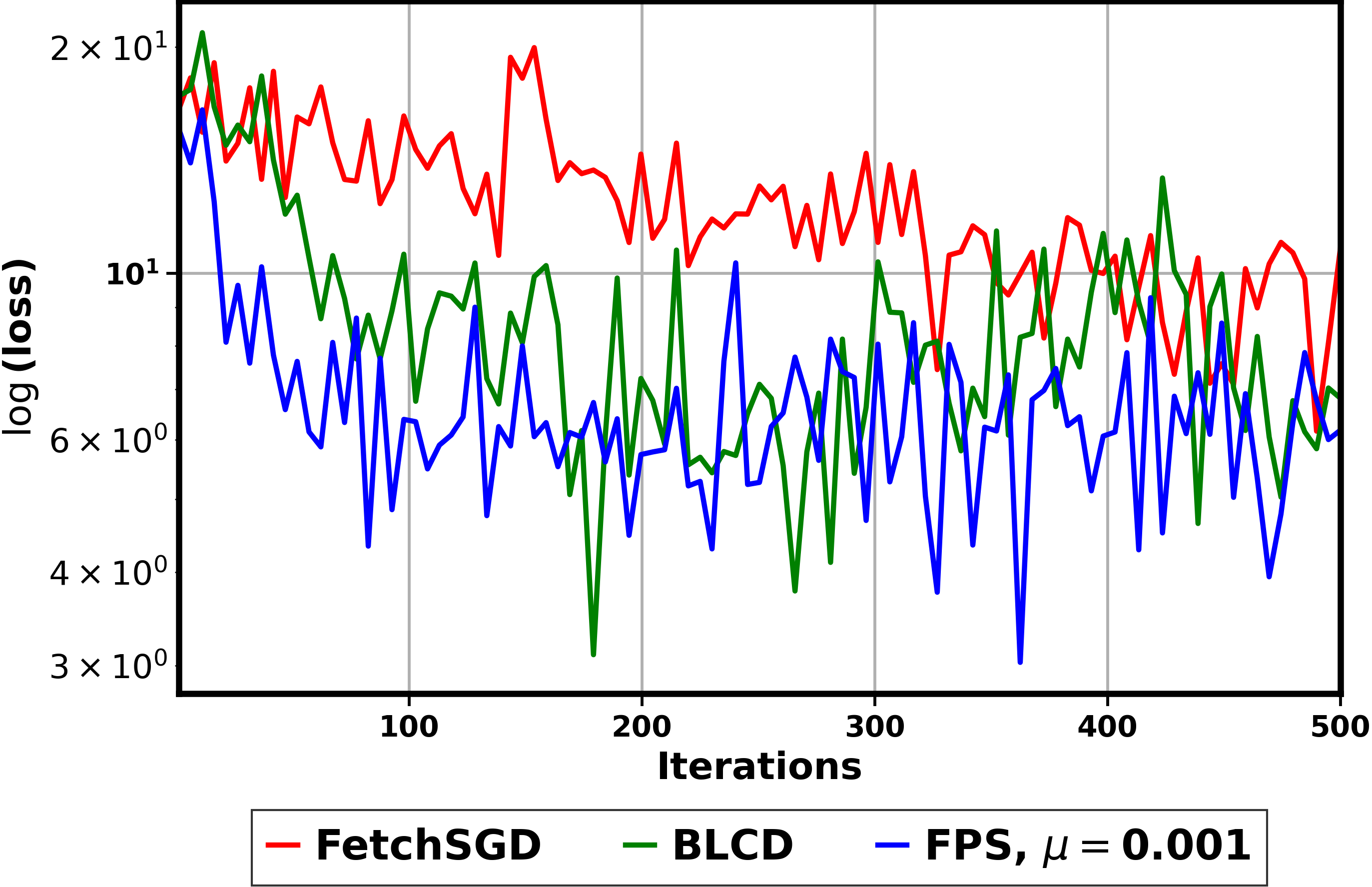}
        \caption{}
        \label{fig:syn_scene_2_pl_5}
    \end{subfigure}
    \vspace{0.5cm}
    \begin{subfigure}[b]{0.4\textwidth}
        \includegraphics[width=\textwidth]{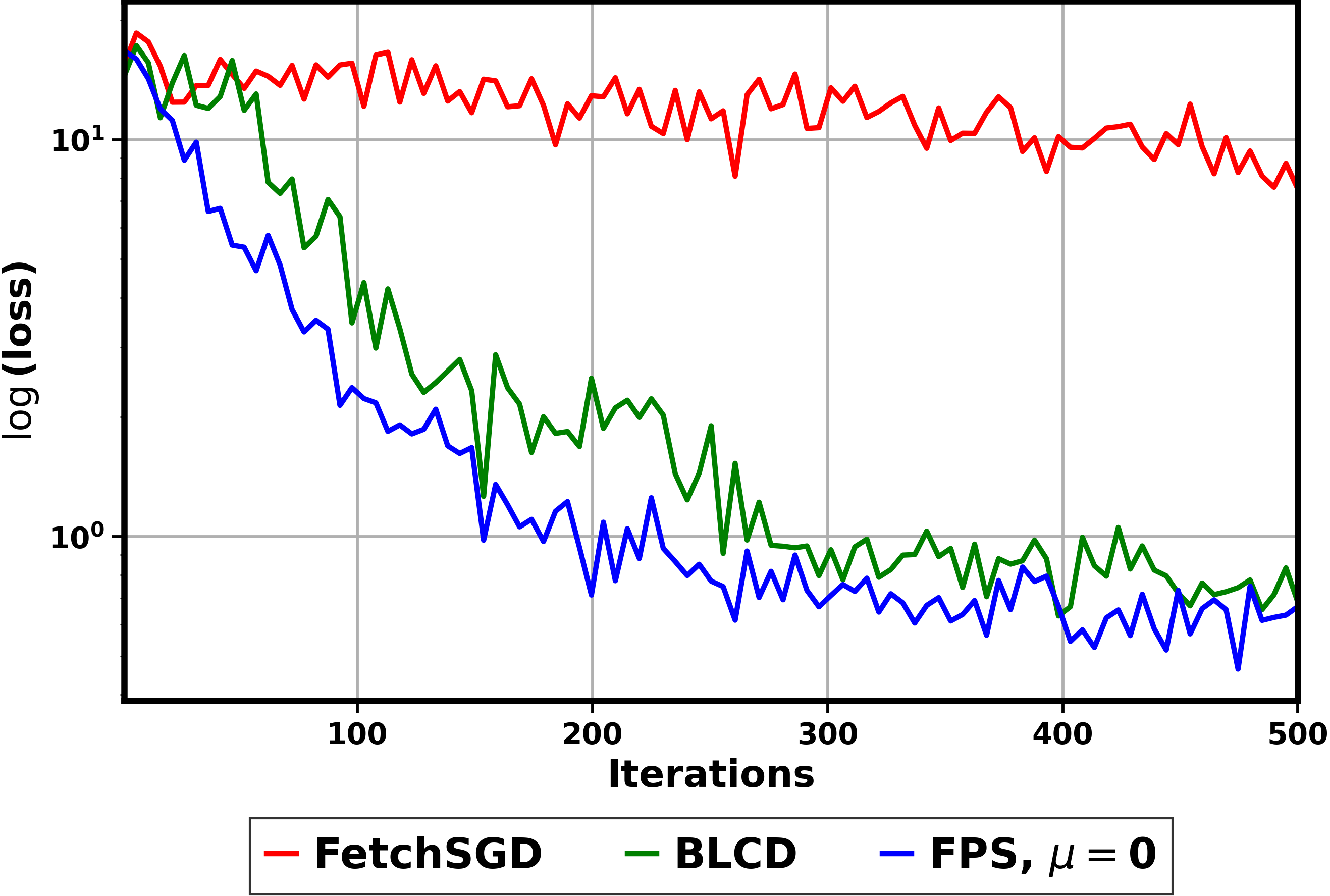}
        \caption{}
        \label{fig:syn_scene_3_pl_5}
    \end{subfigure}
        \hfill
    \begin{subfigure}[b]{0.4\textwidth}
        \includegraphics[width=\textwidth]{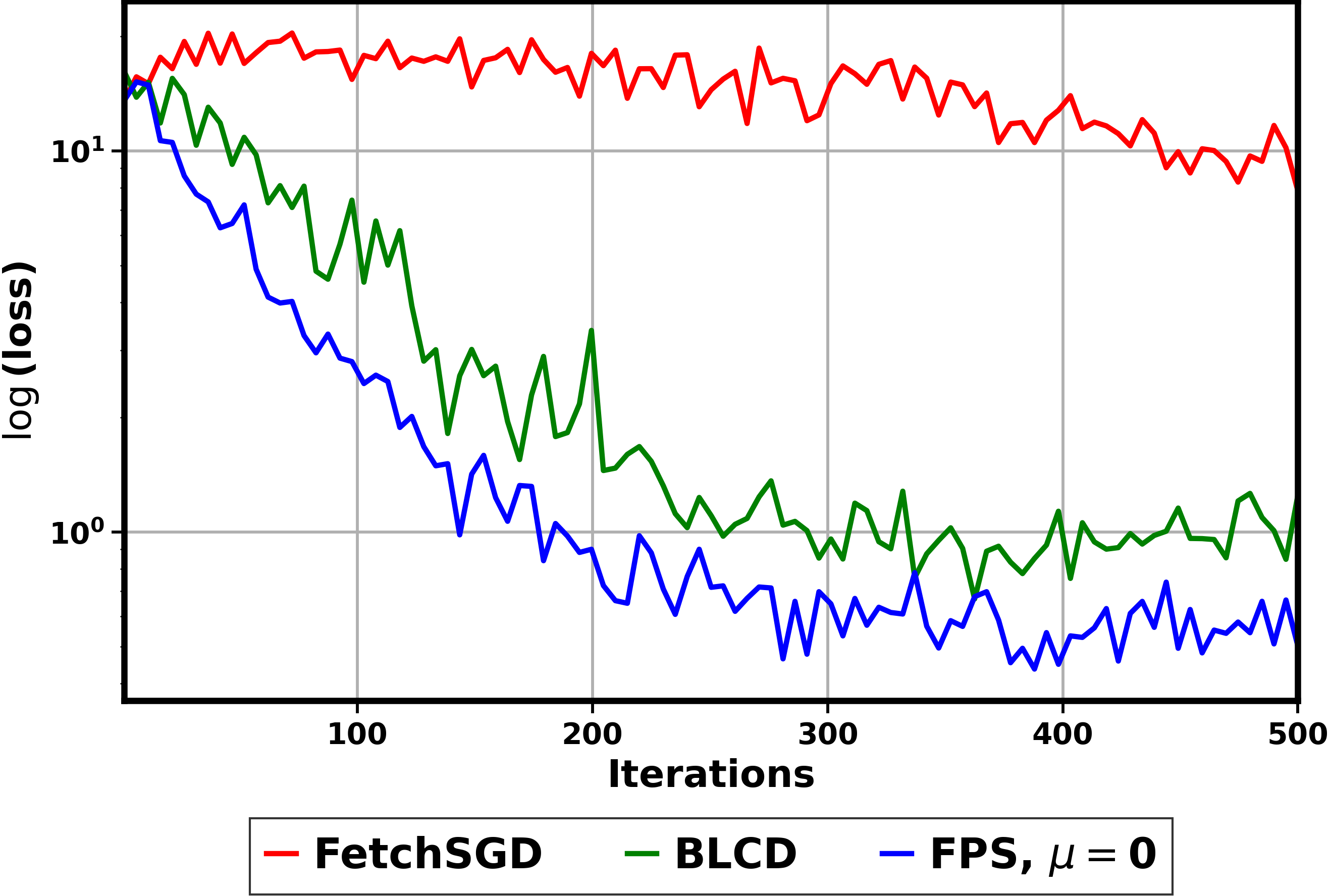}
        \caption{}
        \label{fig:syn_scene_4_pl_5}
    \end{subfigure}
    \caption{Plotting log of test loss computed for FPS, BLCD, FetchSGD over 5 trials under noisy channel conditions with the gradients following Assumption \ref{as:grad_comp} and power law degree $p = 5$. The figures correspond to different data partitioning strategies: (a) Scenario 1 (b) Scenario 2 (c) Scenario 3 (d) Scenario 4.}
    \label{fig:syn_p_5}
\end{figure*}  

\subsubsection{KDD10  Dataset - Predicting Student Performance}
 The dataset contains $20,216,830$ features. For more details, see \cite{kdd10}. Each edge device is allocated $K = 4096$ subcarriers. The number of rows for CS data structure are 5 and the number of  columns are 820. The number of top-$k$ significant coordinates that we are extracting are 1000. 
 In Figure \ref{fig:kdd10}, we observe that FPS significantly outperforms FetchSGD and BLCD across all data partitioning strategies under bandlimited noisy channel conditions. In Table \ref{tab:kdd10}, we report the mean accuracy over 5 trials for various FL algorithms, including FPS, under different degrees of statistical heterogeneity and channel noise conditions.
 
In Figure \ref{fig:kdd10_noise_free}, we plot the performance of FPS, FetchSGD and BLCD for different data partitioning strategies mentioned in the main paper under noise-free channel conditions on KDD10 dataset. Across all data partitioning scenarios we see that BLCD and FPS perform equally well and better than FetchSGD.

\begin{figure*}[h]
    \centering
    \begin{subfigure}[b]{0.4\textwidth}
        \includegraphics[width=\textwidth]{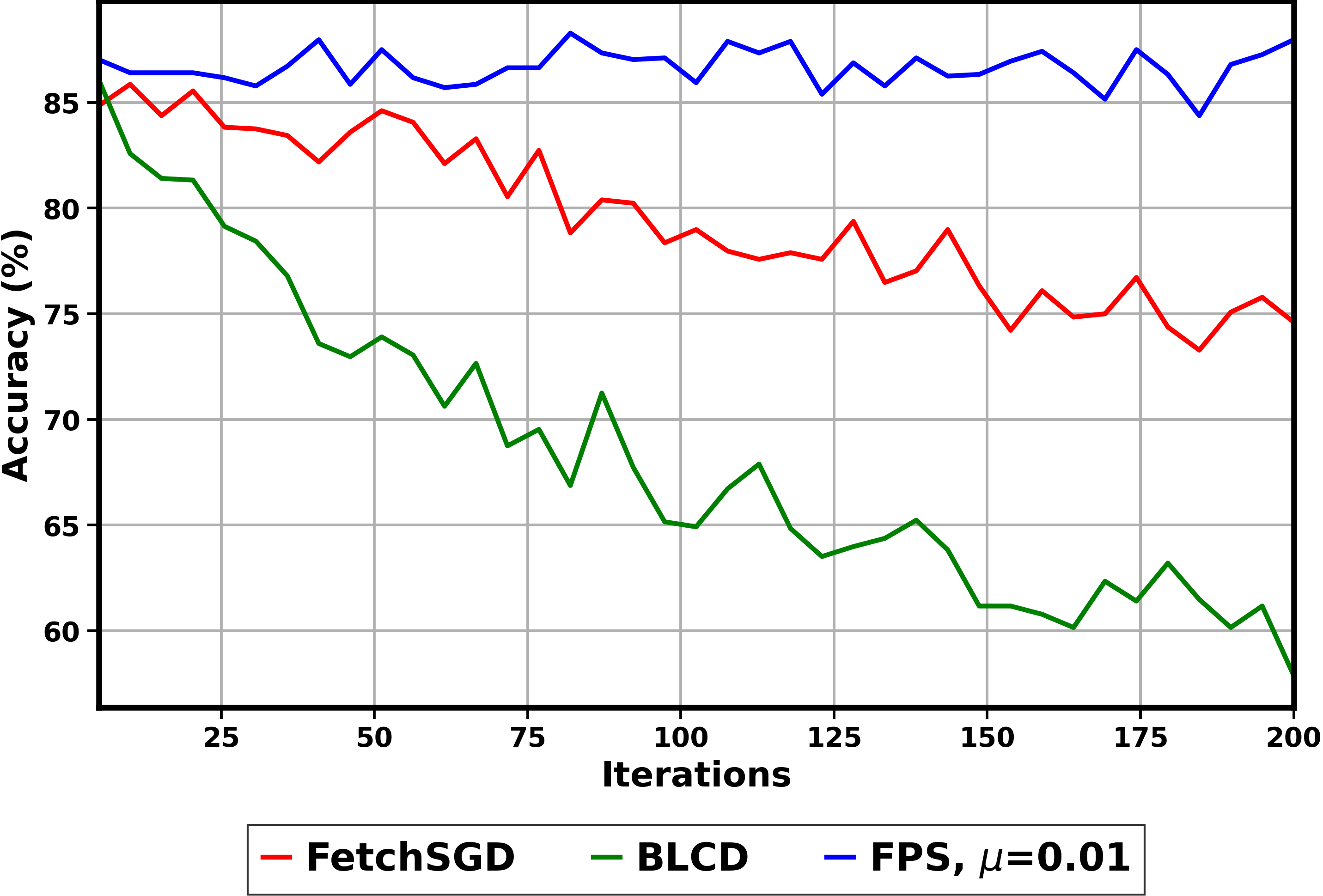}
        \caption{}
        \label{fig:kdd10_iid}
    \end{subfigure}
    \hfill
    \begin{subfigure}[b]{0.4\textwidth}
        \includegraphics[width=\textwidth]{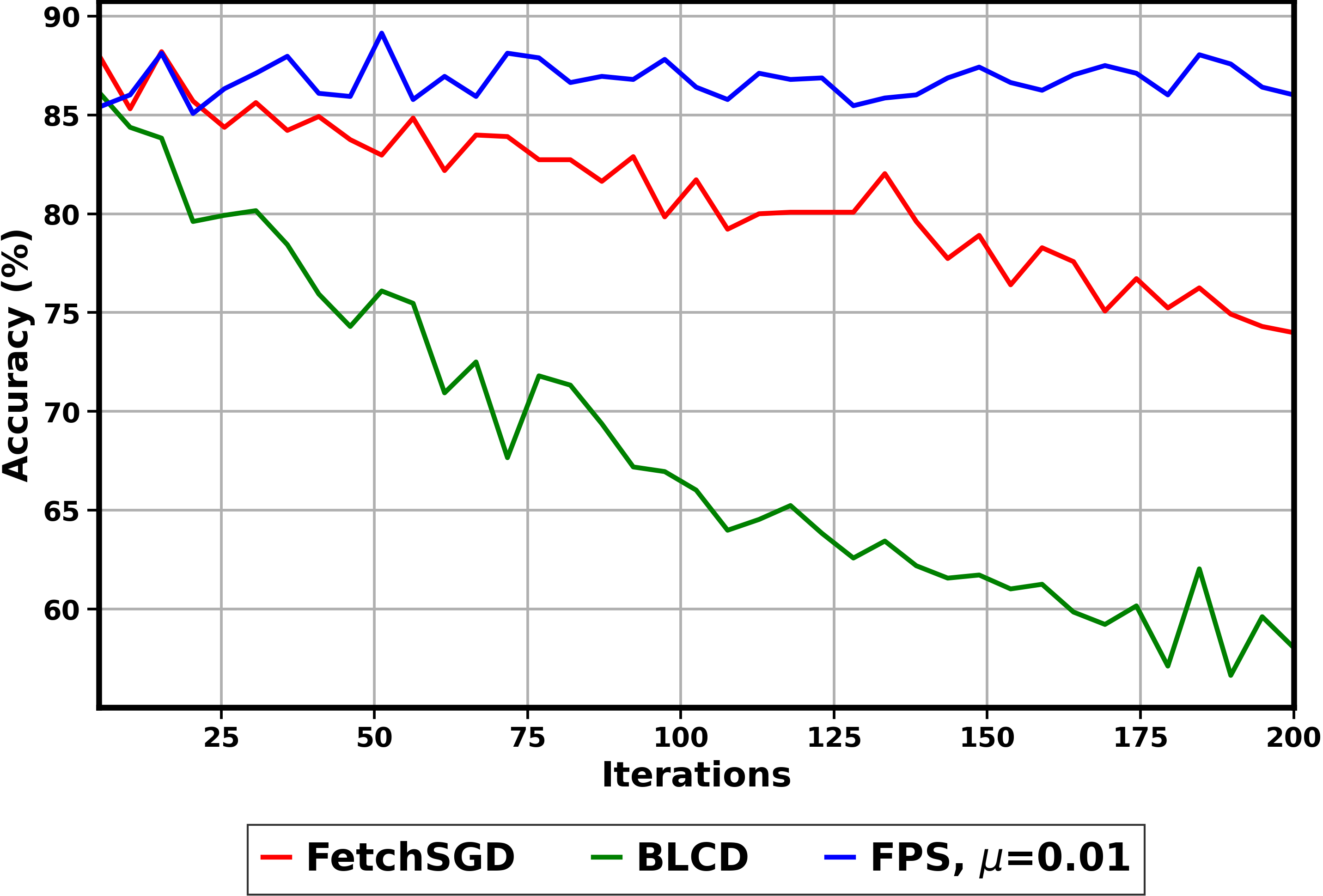}
        \caption{}
        \label{fig:kdd10_scenario_2}
    \end{subfigure}
    \vspace{0.5cm}
    \begin{subfigure}[b]{0.4\textwidth}
        \includegraphics[width=\textwidth]{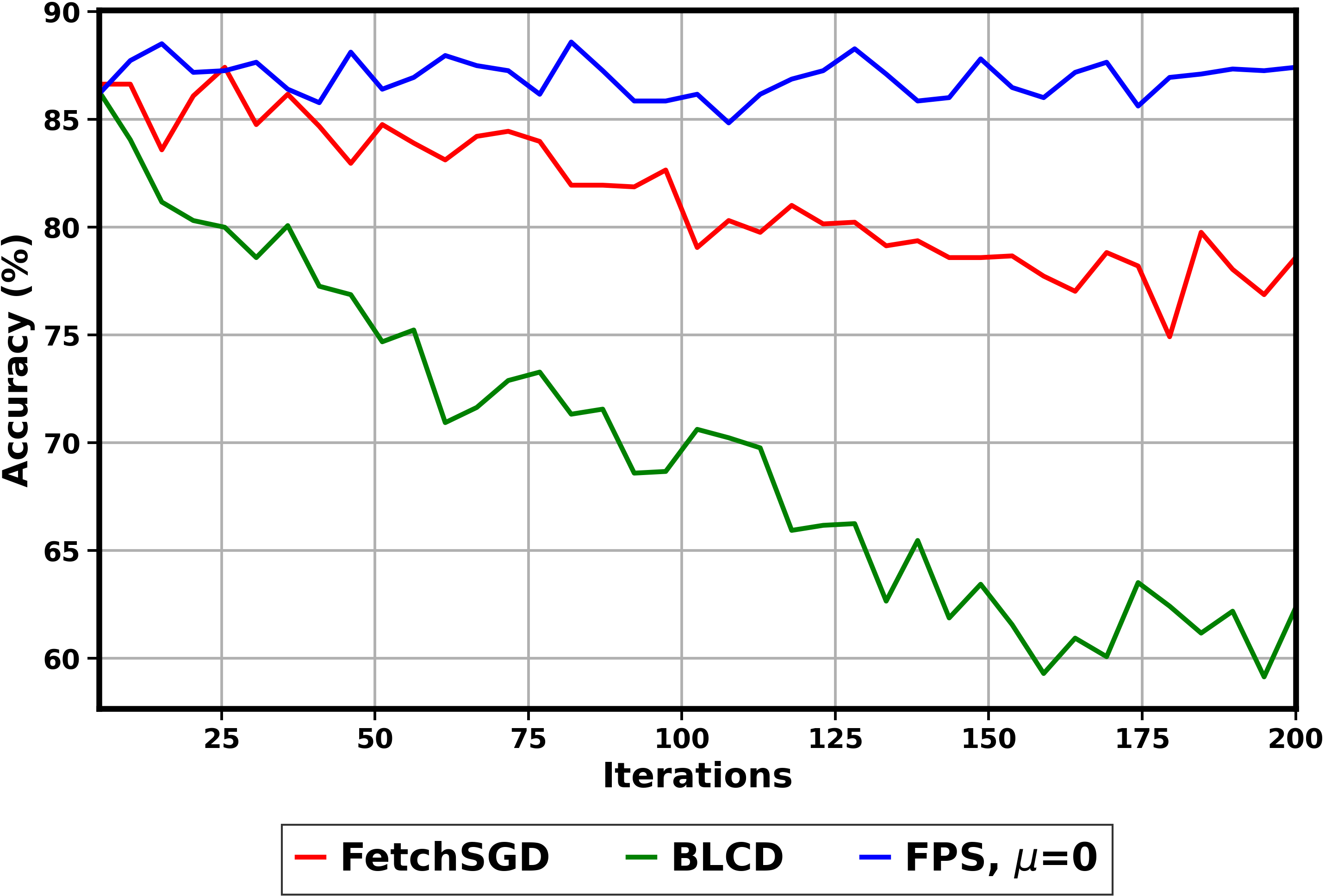}
        \caption{}
        \label{fig:kdd10_scenario_3}
    \end{subfigure}
    \hfill
    \begin{subfigure}[b]{0.4\textwidth}
        \includegraphics[width=\textwidth]{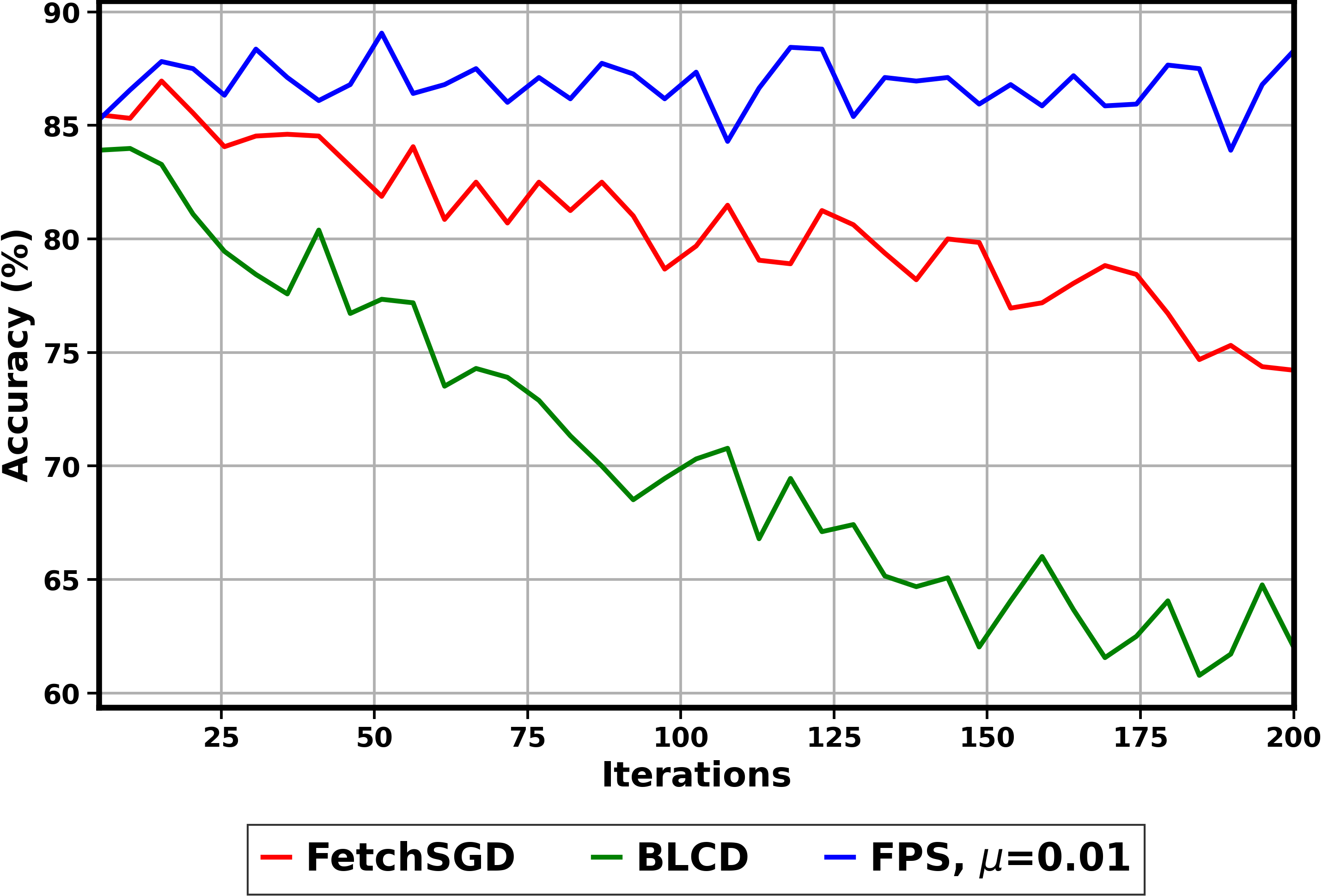}
        \caption{}
        \label{fig:kdd10_scenario_4}
    \end{subfigure}
    \caption{Plotting test accuracy for FPS, BLCD, FetchSGD on KDD10 dataset under noisy channel conditions. The figures correspond to different data partitioning strategies: (a) Scenario 1 (b) Scenario 2 (c) Scenario 3 (d) Scenario 4. We can see that FPS is stable under noisy channel conditions and consistently performs better than other competing bandlimited algorithms. }
    \label{fig:kdd10}
\end{figure*}

\begin{figure*}[h]
    \centering
    \begin{subfigure}[b]{0.4\textwidth}
        \includegraphics[width=\textwidth]{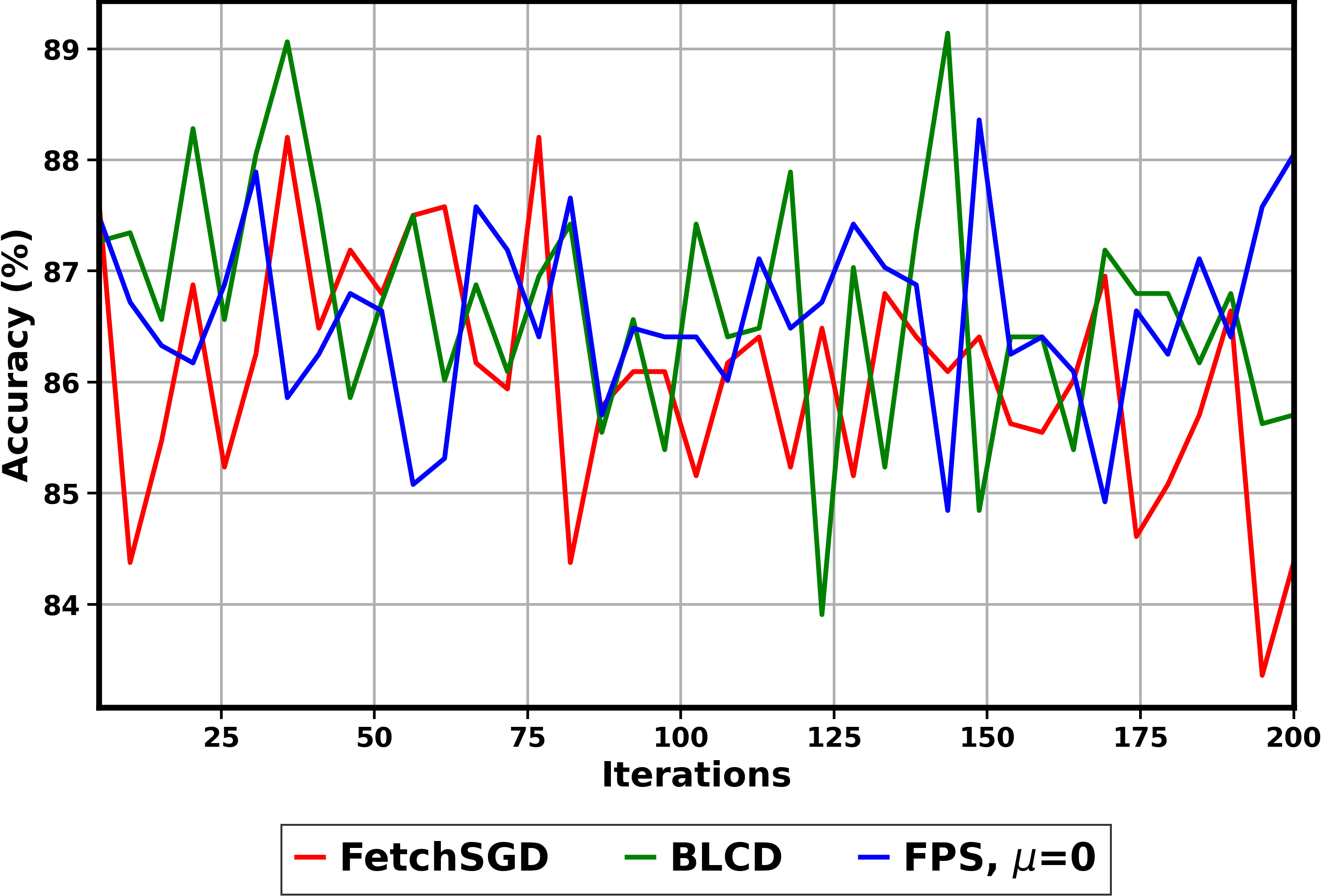}
        \caption{}
        \label{fig:kdd10_iid_noise_free}
    \end{subfigure}
    \hfill
    \begin{subfigure}[b]{0.4\textwidth}
        \includegraphics[width=\textwidth]{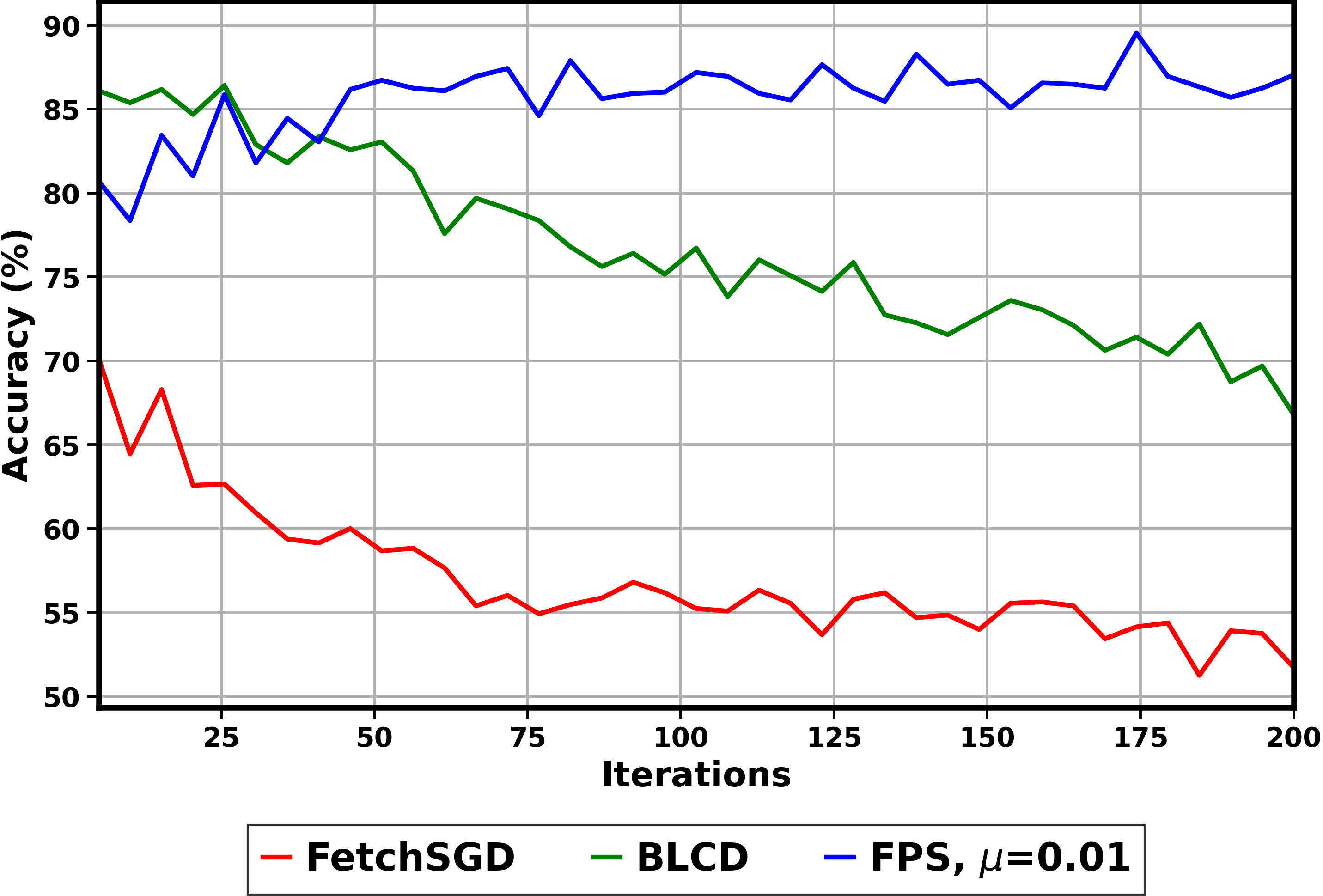}
        \caption{}
        \label{fig:kdd10_scenario_2_noise_free}
    \end{subfigure}
        \vspace{0.5cm}
    \begin{subfigure}[b]{0.4\textwidth}
        \includegraphics[width=\textwidth]{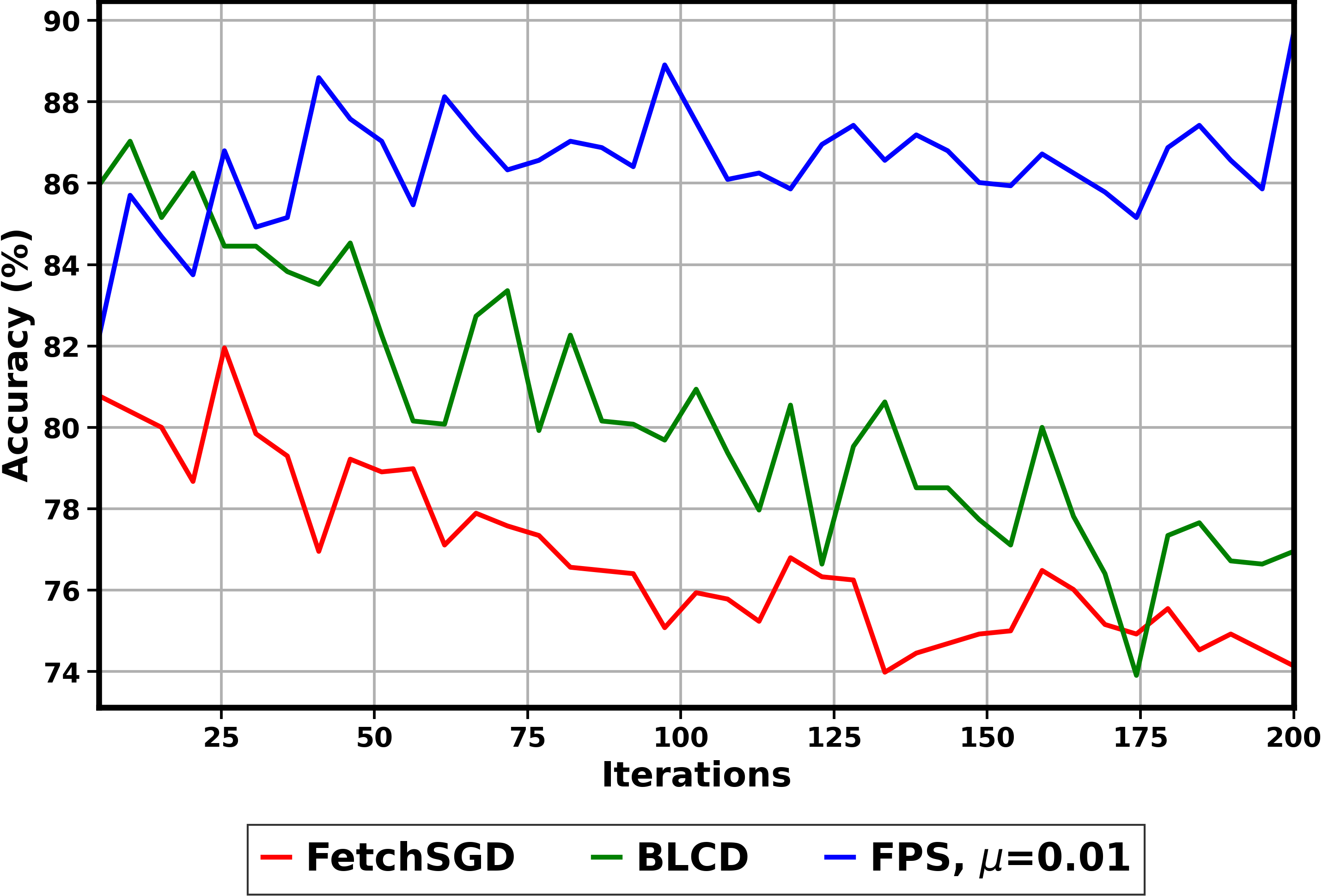}
        \caption{}
        \label{fig:kdd10_scenario_3_noise_free}
    \end{subfigure}
    \hfill
    \begin{subfigure}[b]{0.4\textwidth}
        \includegraphics[width=\textwidth]{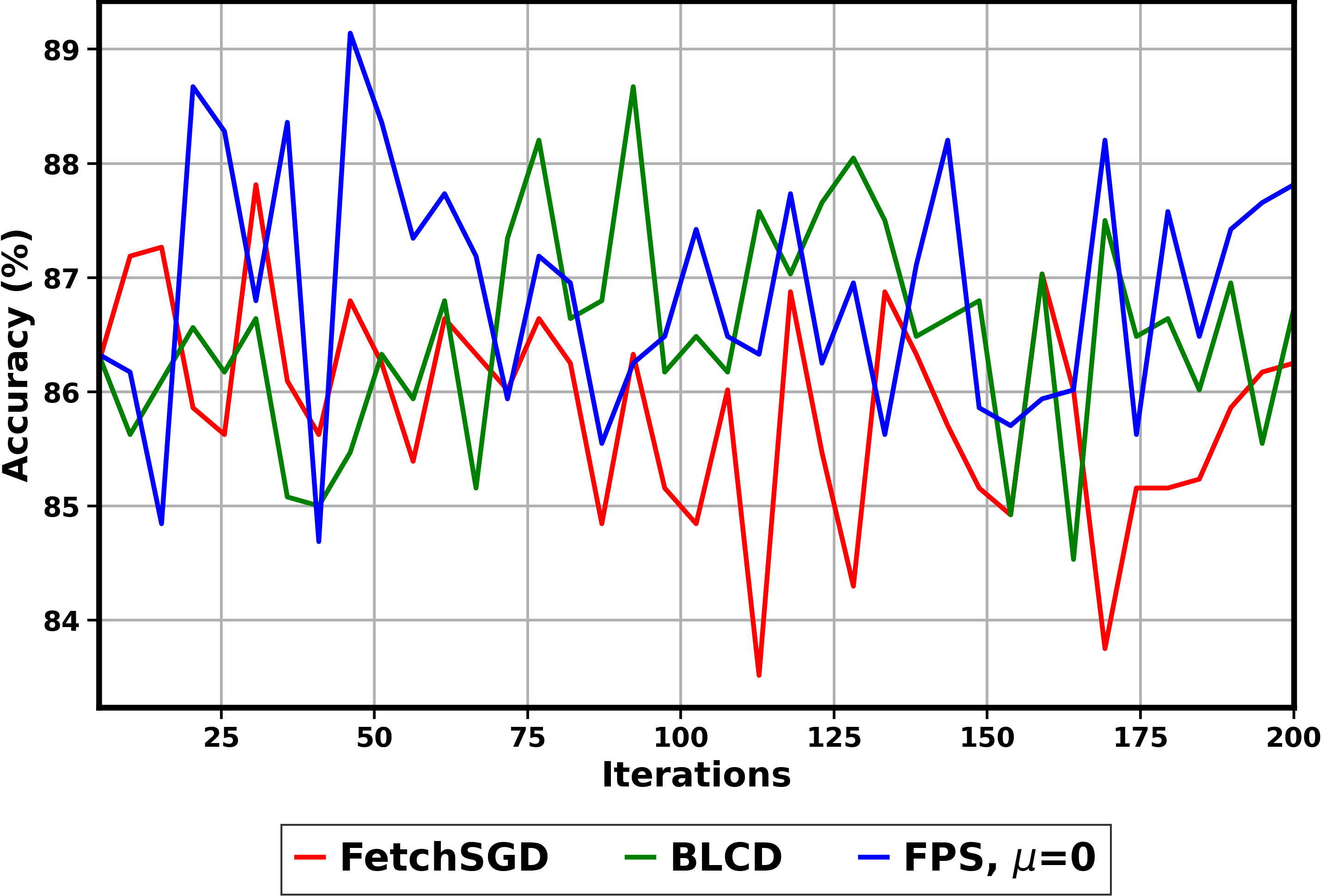}
        \caption{}
        \label{fig:kdd10_scenario_4_noise_free}
    \end{subfigure}
    \caption{Plotting test accuracy for FPS, BLCD, FetchSGD on KDD10 dataset under noise-free channel conditions. The figures correspond to different data partitioning strategies: (a) Scenario 1 (b) Scenario 2 (c) Scenario 3 (d) Scenario 4.}
    \label{fig:kdd10_noise_free}
\end{figure*}
\begin{table*}[h]
\centering
\begin{tabular}{|c|c|c|c|c|c|c|}
\hline
\multirow{2}{*}{\shortstack{Label  \\ skewness}} & \multirow{2}{*}{\shortstack{Noise 
\\ $\mathcal{N}(0, \sigma^2)$}} & \multicolumn{5}{c|}{Accuracy ($\%$)} \\ \cline{3-7}
 & & \multicolumn{1}{c|}{FPS}  & \multicolumn{1}{c|}{FetchSGD} & \multicolumn{1}{c|}{BLCD} & \multicolumn{1}{c|}{Top-k}& \multicolumn{1}{c|}{FedProx} \\
\hline
\multirow{2}{*}{Scenario 1} &  $\sigma = 0$& $88.04 \pm 1.53$  & $86.64 \pm 1.19$  & $86.79 \pm 2.45$ & $87.10 \pm 1.54$ &  ${\bf 88.12 \pm 2.35}$\\
\cline{2-7}
 & $\sigma = 1$ &  {$\bf 87.96 \pm 1.36$}  & $75.78 \pm 3.84$ & $63.20 \pm 4.15$ & $ 55.85 \pm 6.15$ & $55.46 \pm 1.69$\\
\hline  
\multirow{2}{*}{Scenario 2} &  $\sigma = 0$& ${\bf 87.03 \pm 1.66}$  & $54.37 \pm 2.6$ & $72.18 \pm 4.02$ & $54.06 \pm 3.64$ & $55 \pm 1.73$\\
\cline{2-7}
 & $\sigma = 1$ &  ${\bf 88.12 \pm 1.75}$ & $76.25 \pm 3.18$ & $62.03 \pm 2.81$ & $50.07 \pm 3.089$&  $56.71 \pm 3.39$ \\
\hline  
\multirow{2}{*}{Scenario 3} &  $\sigma = 0$& ${\bf 89.68 \pm 1.75}$ & $75.54 \pm 1.68$ & $77.65 \pm 3.21$ & $78.35 \pm 3.11$ & $80.46 \pm 2.26$  \\
\cline{2-7}
& $\sigma = 1$ &  {$\bf 87.42 \pm 2.05$} & $79.76 \pm 3.40$ &  $62.42 \pm 3.37$ & $52.03 \pm 6.01$ & $54.14 \pm 3.86$ \\
\cline{2-7}
\hline  
\multirow{2}{*}{Scenario 4} &  $\sigma = 0$& $87.81 \pm 1.96$  & $86.25 \pm 1.44$ & $86.95 \pm 1.72$ & $88.28 \pm 1.71$ & ${\bf 88.43 \pm 1.12}$\\
\cline{2-7}
 & $\sigma = 1$ & {$\bf 88.28 \pm 2.06$}  & $76.71 \pm 7.15$ & $64.76 \pm 2.11$ & $59.37 \pm 5.78$& $56.32 \pm 3.6$ \\
\hline  
\end{tabular}  
\caption{Test accuracy of different distributed algorithms under varying channel conditions and statistical heterogeneity. For FPS and FedProx, we tune $\mu$ from $\{0, 0.01, 0.1, 1\}$ and report the best accuracy over KDD 10 dataset.}
\label{tab:kdd10}
\end{table*}

\subsection{KDD12 Dataset}
The dataset contains \(54,686,452\) features. The number of subchannels we consider are 1024. The number of rows for CS data structure are 5 and the number of columns are 204. The number of top-$k$ significant coordinates that we are extracting are 200.
In Figure \ref{fig:kdd12_noise_free}, we plot the performance of FPS, FetchSGD and BLCD for different data partitioning strategies mentioned in the main paper under noise-free channel conditions on KDD12 dataset. When the data is distributed in an IID manner (scenario 1), we see that FetchSGD performs slightly better than FPS. In scenario 2 where the data is highly heterogeneous, we see that FPS outperforms other competing bandlimited algorithms. In case of scenarios 3 and 4, we see that FPS matches the performance of FetchSGD. 

\begin{figure*}[h]
    \centering
    \begin{subfigure}[b]{0.4\textwidth}
        \includegraphics[width=\textwidth]{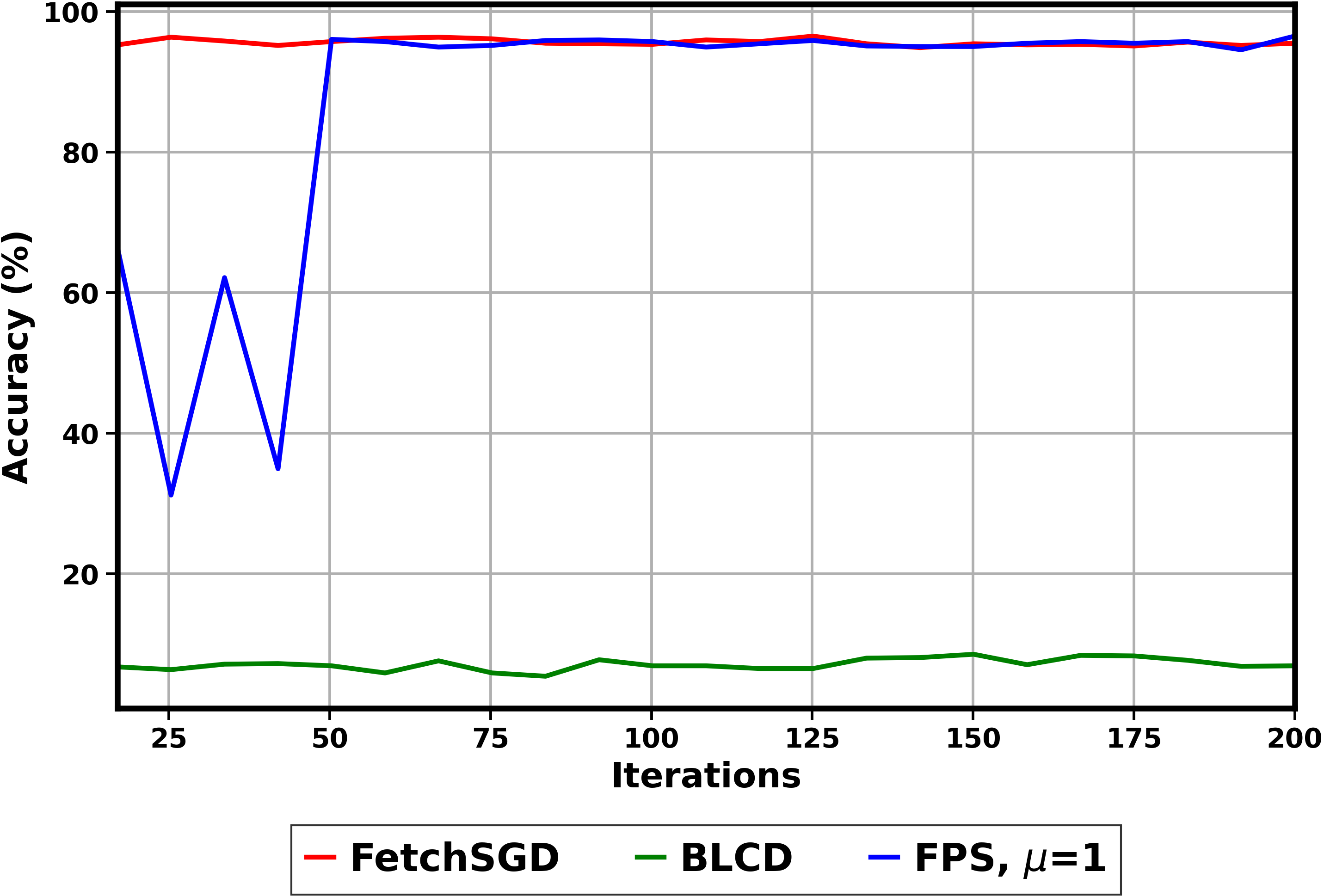}
        \caption{}
        \label{fig:kdd12_iid_noise_free}
    \end{subfigure}
    \hfill
    \begin{subfigure}[b]{0.4\textwidth}
        \includegraphics[width=\textwidth]{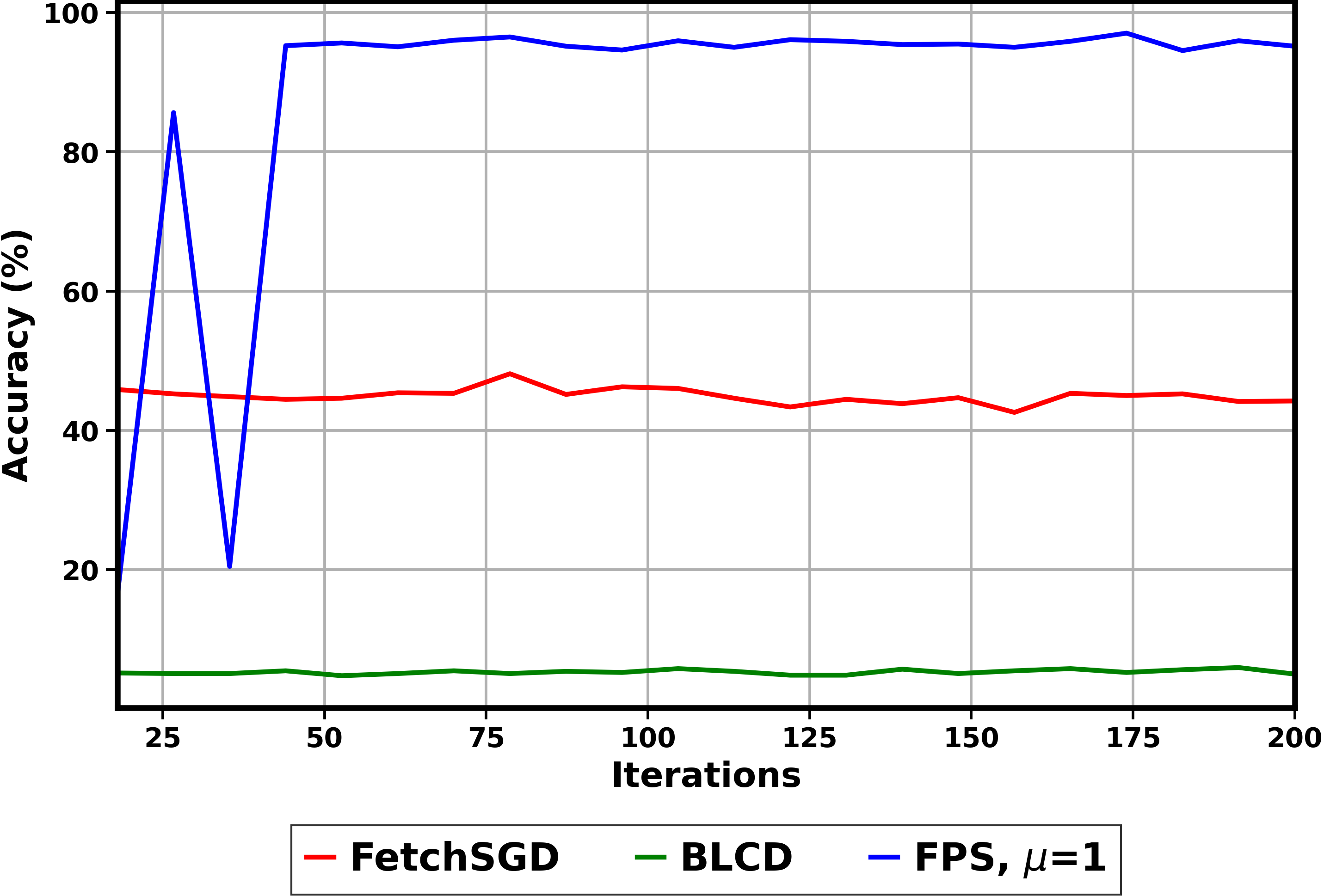}
        \caption{}        \label{fig:kdd12_scenario_2_noise_free}
    \end{subfigure}
        \vspace{0.5cm}
    \begin{subfigure}[b]{0.4\textwidth}
        \includegraphics[width=\textwidth]{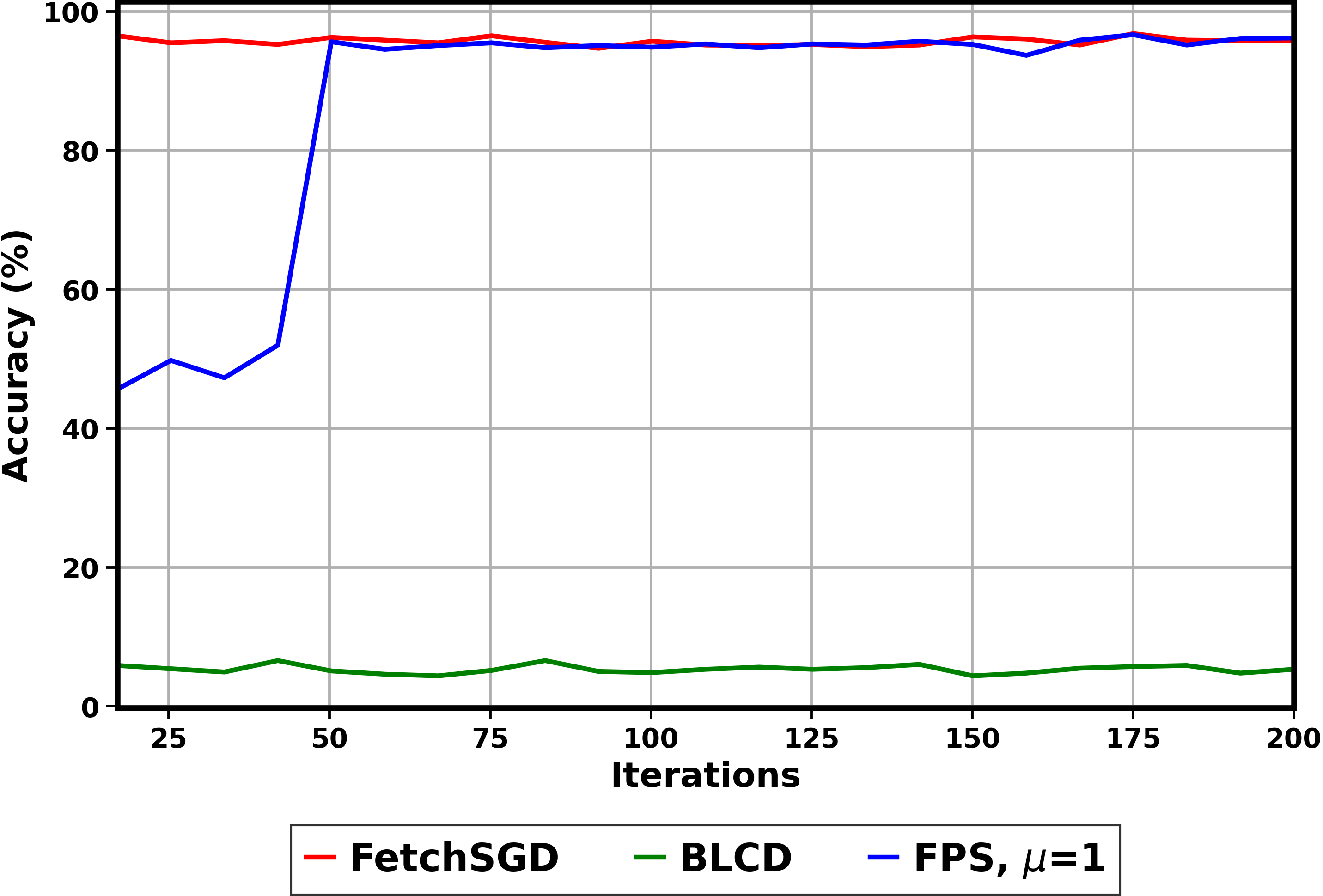}
        \caption{}
        \label{fig:kdd12_scenario_3_noise_free}
    \end{subfigure}
    \hfill
    \begin{subfigure}[b]{0.4\textwidth}
        \includegraphics[width=\textwidth]{revised_figures_2024/kdd12_accuracy_non_iid_dir_0.1_budget_1024_noise_0.0_avg_5_epoch_5.png}
        \caption{}
        \label{fig:kdd12_scenario_4_noise_free}
    \end{subfigure}
    \caption{Plotting test accuracy for FPS, BLCD, FetchSGD on KDD12 dataset under noise-free channel conditions. The figures correspond to different data partitioning strategies: (a) Scenario 1 (b) Scenario 2 (c) Scenario 3 (d) Scenario 4.}
     \label{fig:kdd12_noise_free}
\end{figure*}

\subsection{MNIST Dataset}
For MNIST dataset, we we utilize a simple 2-layer neural network with approximately 100,000 parameters (neurons). For communication-efficient algorithms (FPS, FetchSGD, BLCD), we vary the number of subcarriers as \( \{ 5000, 10000, 20000 \} \). The regularization parameter (\(\mu\)) for the proximal term takes values from the set \( \{ 0, 0.01, 0.1, 1 \} \). For count-sketch algorithms (FPS, FetchSGD), the number of top-k heavy hitters extracted varies from \( \{ 2000, 5000, 10000 \} \). We report the best accuracy plot over various choices of hyperparameters.

In Figure \ref{fig:mnist_noise_free}, we plot the performance of FPS, FetchSGD and BLCD for different data heterogenity scenarios mentioned earlier under noise-free channel conditions on MNIST dataset. Across all scenarios we see that FPS performs well against its band-limited competitors FetchSGD and BLCD. 
\begin{figure*}[h]
    \centering
    \begin{subfigure}[b]{0.4\textwidth}
        \includegraphics[width=\textwidth]{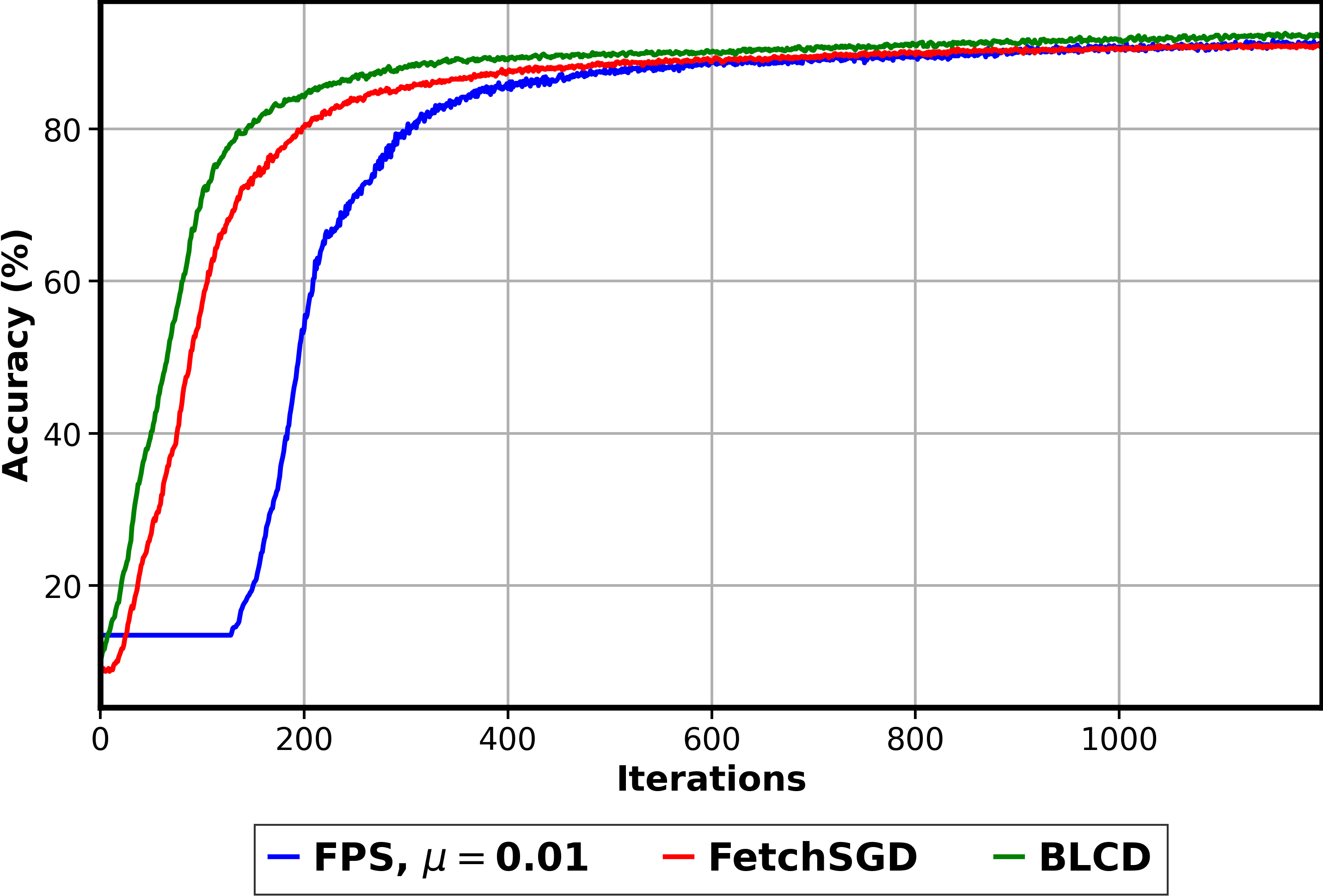}
        \caption{}
        \label{fig:mnist_iid_noise_free}
    \end{subfigure}
    \hfill
    \begin{subfigure}[b]{0.4\textwidth}
        \includegraphics[width=\textwidth]{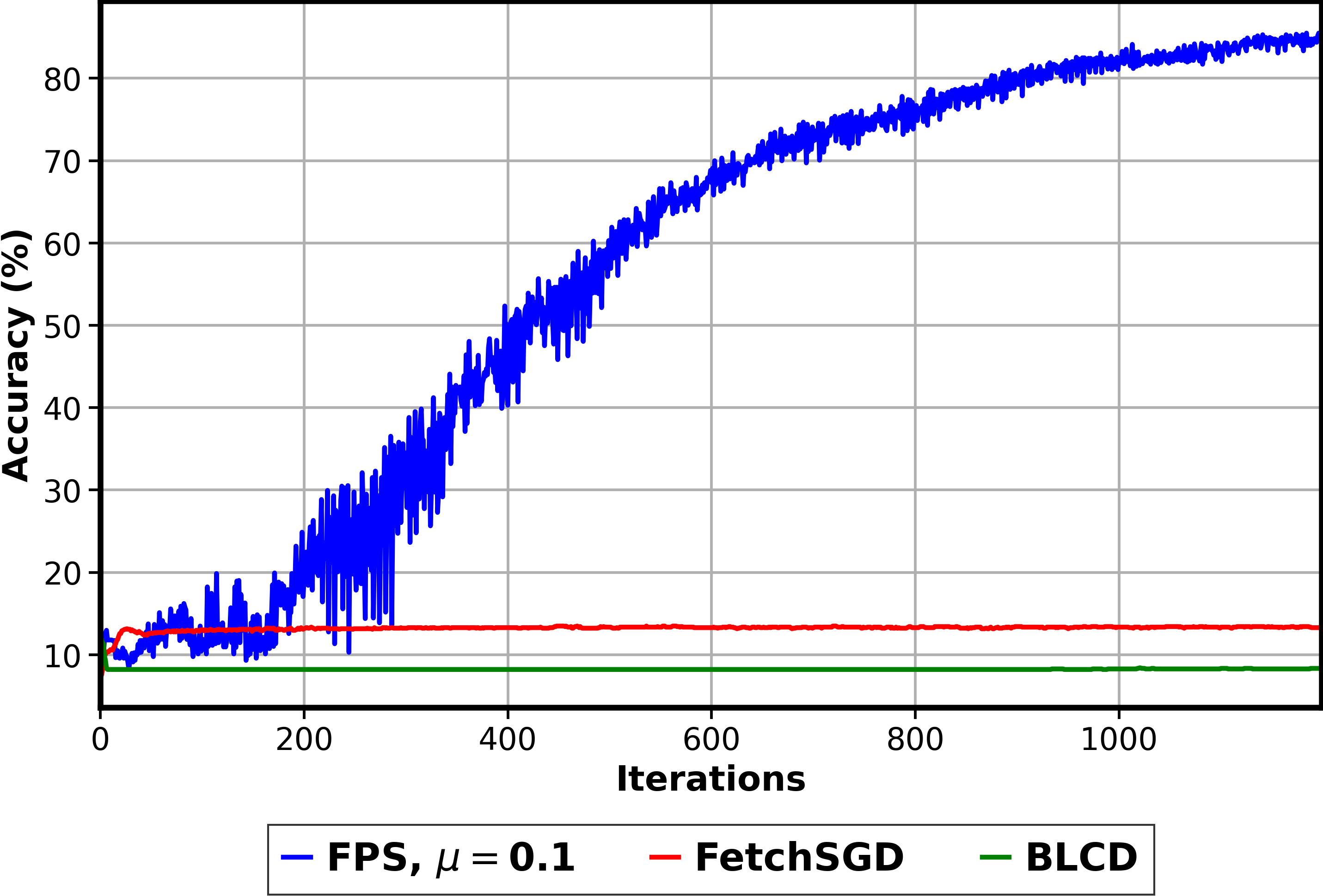}
        \caption{}
        \label{fig:mnist_non_iid_extreme_noise_free}
    \end{subfigure}
        \vspace{0.5cm}
    \begin{subfigure}[b]{0.4\textwidth}
        \includegraphics[width=\textwidth]{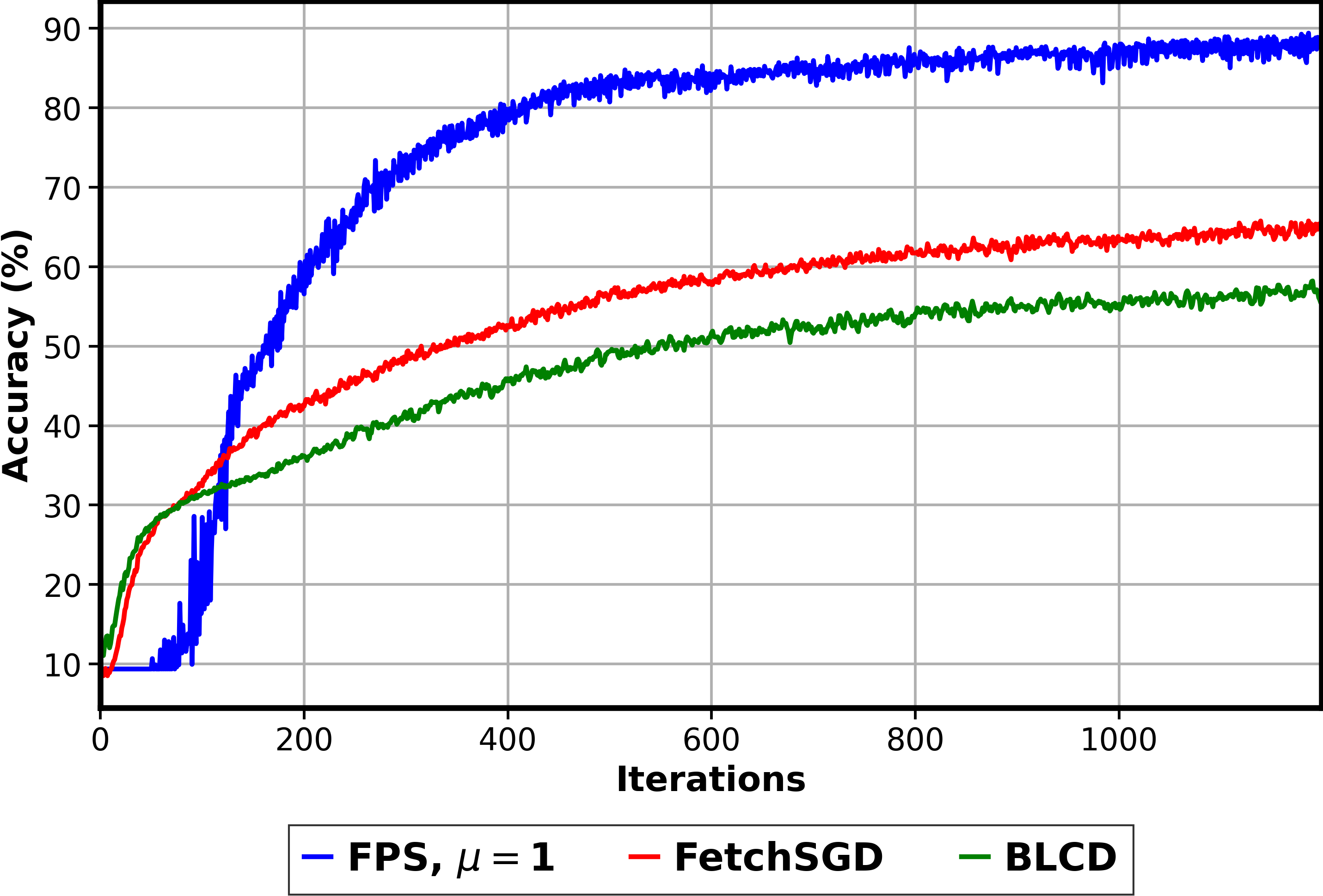}
        \caption{}
        \label{fig:mnist_non_iid_0.1_noise_free}
    \end{subfigure}
    \hfill
    \begin{subfigure}[b]{0.4\textwidth}
        \includegraphics[width=\textwidth]{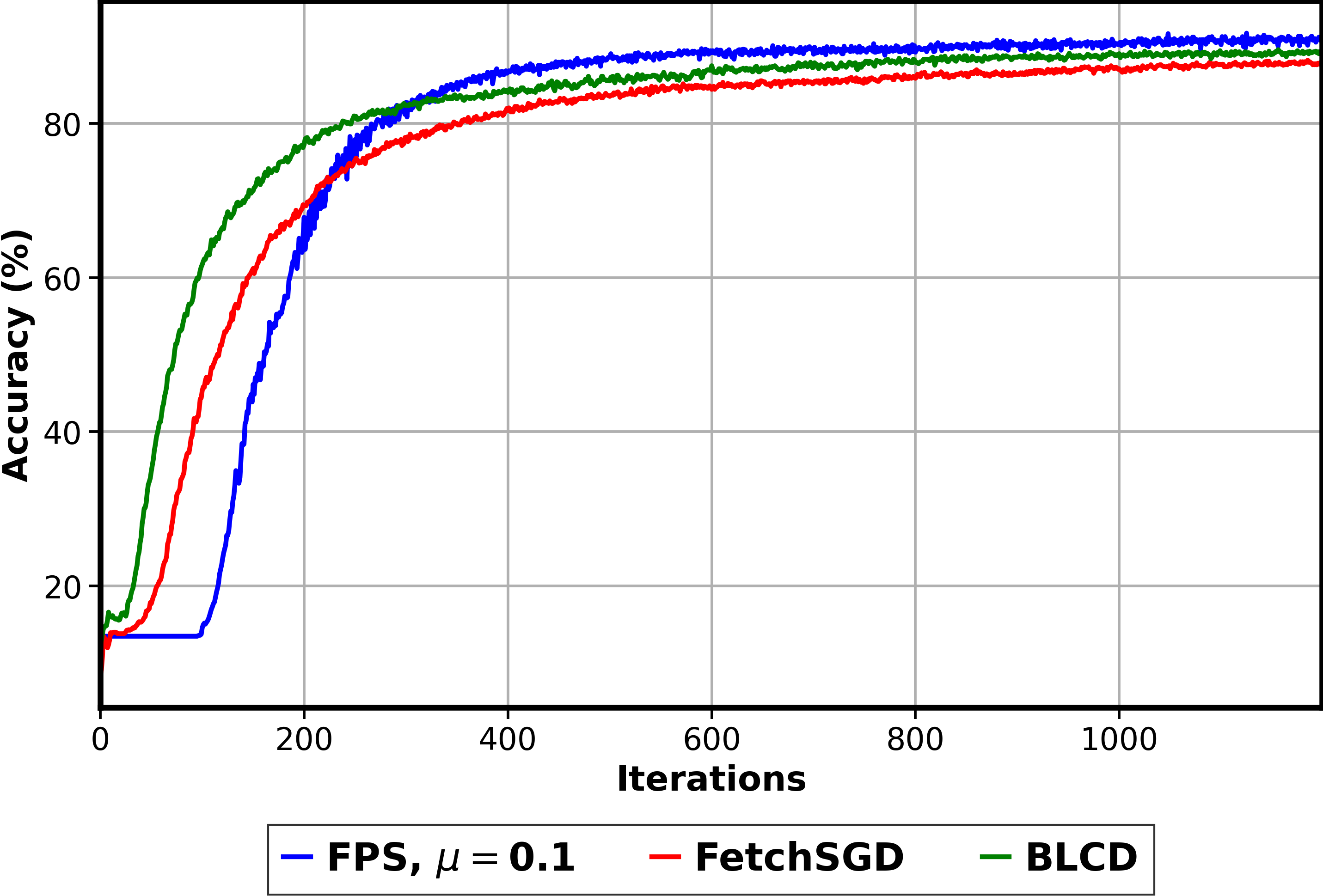}
        \caption{}
        \label{fig:mnist_non_iid_1_noise_free}
    \end{subfigure}
    \caption{Plotting test accuracy for FPS, BLCD, FetchSGD on MNIST dataset under noise-free channel conditions. The figures correspond to different data partitioning strategies: (a) Scenario 1 (b) Scenario 2 (c) Scenario 3 (d) Scenario 4.}
    \label{fig:mnist_noise_free}
\end{figure*}
\subsection{Choosing Hyperparameters}
\label{app:hyperparameter}
There are two hyperparameters that we consider in the main paper that require further discussion. The first one is the choice of proximal parameter, $\mu$. A large value of $\mu$ will cause the future iterates to be close to the initialization iterate and  a low value of $\mu$ may cause the model to diverge. Therefore, the value of proximal parameter must be chosen carefully. In our experiments, we choose the best value of this proximal parameter from a set of values $\{0,  0.01, 0.1, 1\}$. For the two real-world data sets (KDD10 and KDD12) across different data partitioning strategies the best values of $\mu$ are 0.01 and 1 respectively. Note that picking the best value of $\mu$ right away is difficult due to varying statistical heterogeneity and different datasets. An interesting line of work could be finding the ideal choice of proximal parameter automatically. However, another interesting heuristic technique proposed in \cite{Li2018} adaptively tunes $\mu$. For instance, increase $\mu$ when the loss increases and vice versa. We have not examined the effects of such a heuristic in our experiments. 

Another hyperparameter that we choose prior to the start of our experiments is number of local updates $E$ performed by each edge device.  We choose a uniform $E = 5$ across all edge devices. Choosing a large value of E implies allowing large amounts of work done by edge devices and this can cause the model to diverge when the data is distributed in a non-IID manner. However, to mitigate this we have a proximal term which does not allow the local updates performed by the edge devices in this period to drift far away. However, the choice of an appropriate value of $E$ might be challenging problem in itself as it depends on device constraints and data distribution across all devices.  
\section{Gradient Compressibility} 
\label{app:grad_compress}
The idea that the computed stochastic gradients are compressible or approximately sparse is central to employ efficient compression techniques. In the main paper we formulate mathematically the approximately sparse behaviour of the computed gradients. This needs to be empirically validated as well. We consider the scenario where the data is distributed in an IID manner across devices. We run a federated learning algorithm where there is no bandwidth limitation i.e., high-dimensional gradient vectors are communicated. We consider noise-free channels and the updates are communicated to the central server at every iteration. The loss function has no proximal term appended to it. This naive setup will help us understand the true behaviour of computed stochastic gradients. We run this vanilla FL algorithm for 200 iterations and at the end of it we report $\sim 90\%$ accuracy on both real world datasets (KDD10 and KDD12). 

The number of features in the datasets KDD10 and KDD12 are 20,216,830 and 54,686,452 respectively. In Figures \ref{fig:kdd10_grad}(a) and \ref{fig:kdd12_grad}(a), we plot the absolute value of gradient coordinates computed at a particular edge device for the datasets KDD10 and KDD12 respectively. This plot is captured across three time instants, at iteration 25, 75 and 150. We see that in both figures, the absolute value of coordinates of the local gradient vector sorted in decreasing order are approximately sparse or follow a power law distribution. 
\begin{figure*}[!h]
    \centering
    \begin{subfigure}[b]{0.24\textwidth}
        \includegraphics[width=\textwidth]{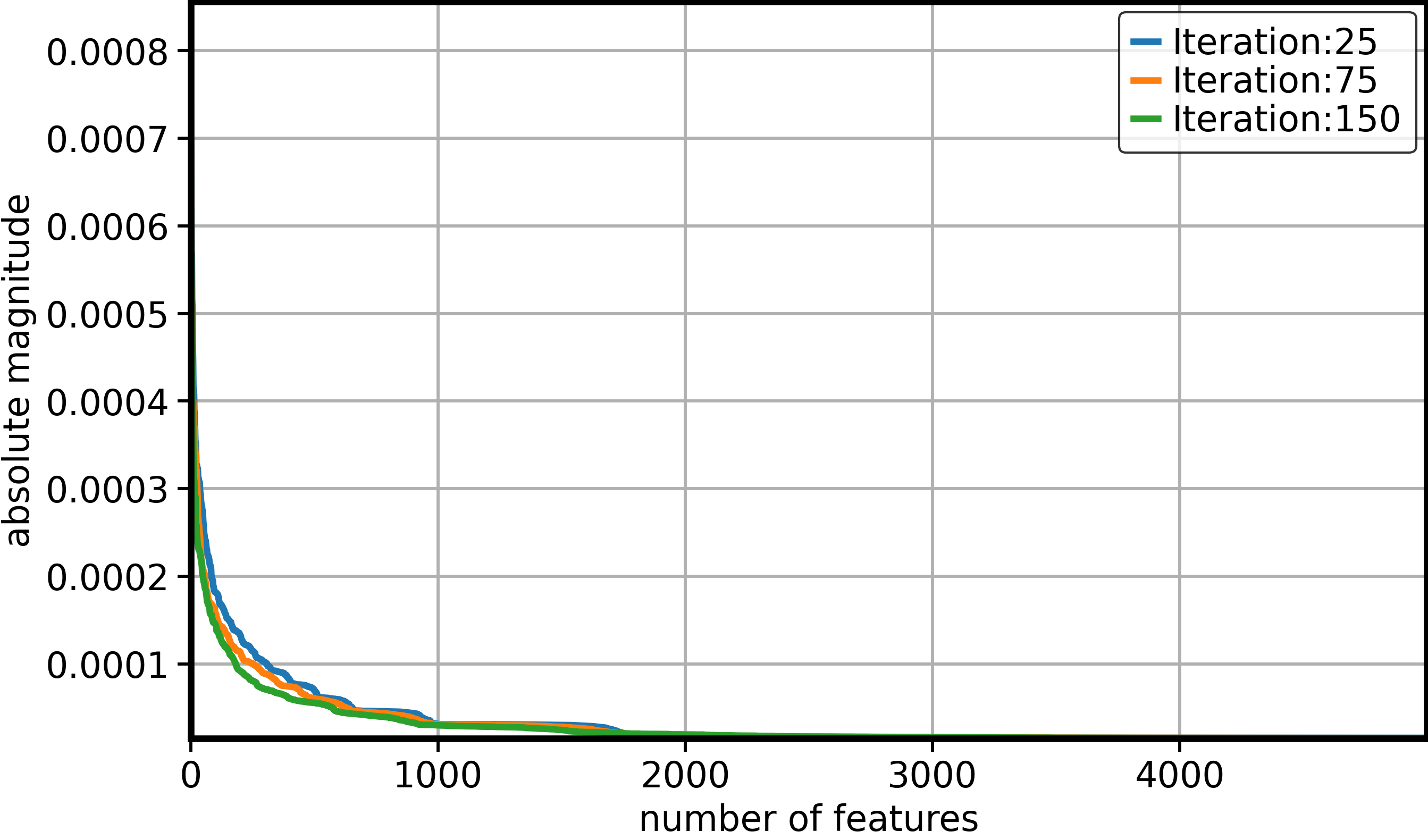}
        \caption{}
        \label{fig:kdd10_1}
    \end{subfigure}
    \hfill
    \begin{subfigure}[b]{0.24\textwidth}
        \includegraphics[width=\textwidth]{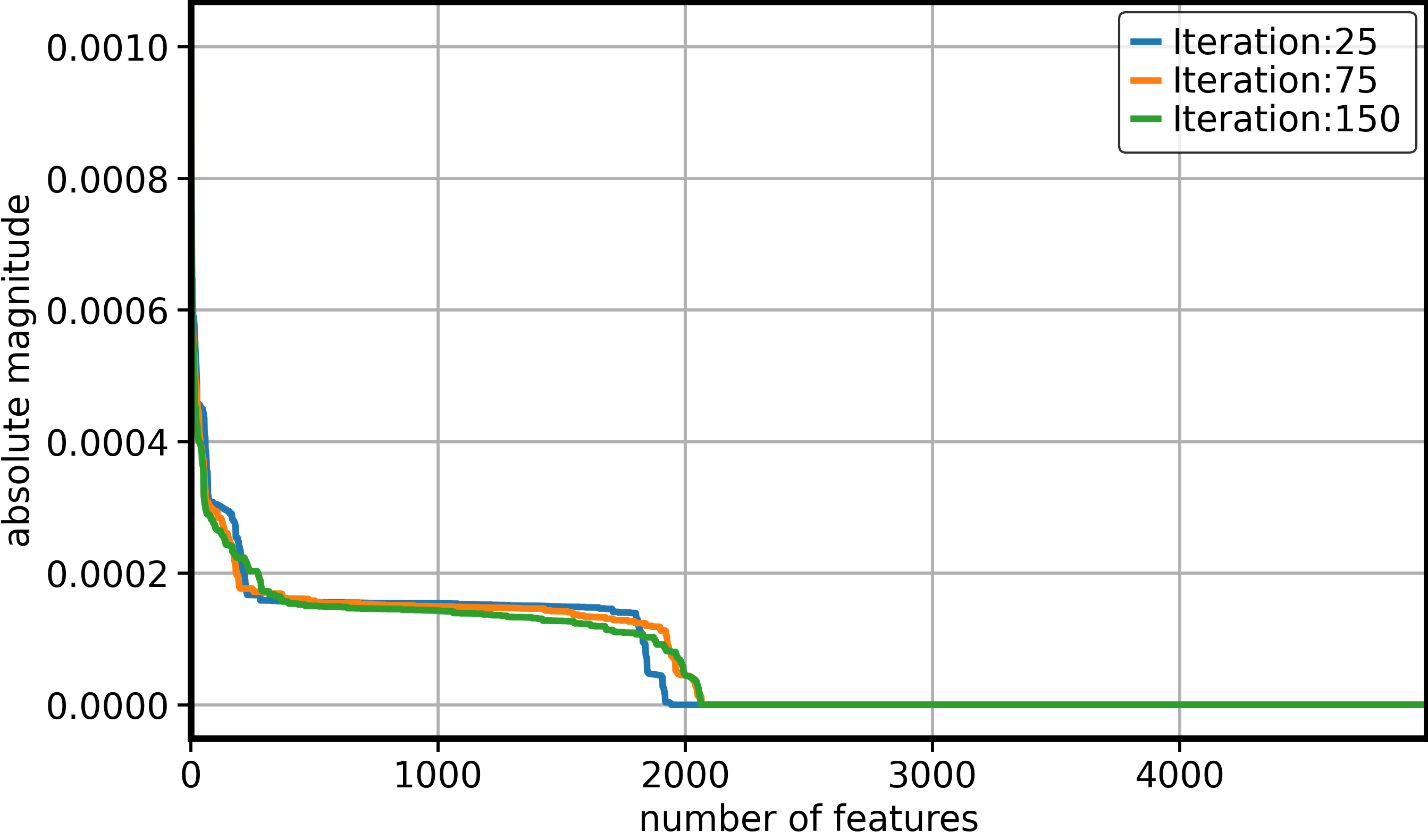}
        \caption{}
        \label{fig:kdd10_2}
    \end{subfigure}
        \hfill
    \begin{subfigure}[b]{0.24\textwidth}
        \includegraphics[width=\textwidth]{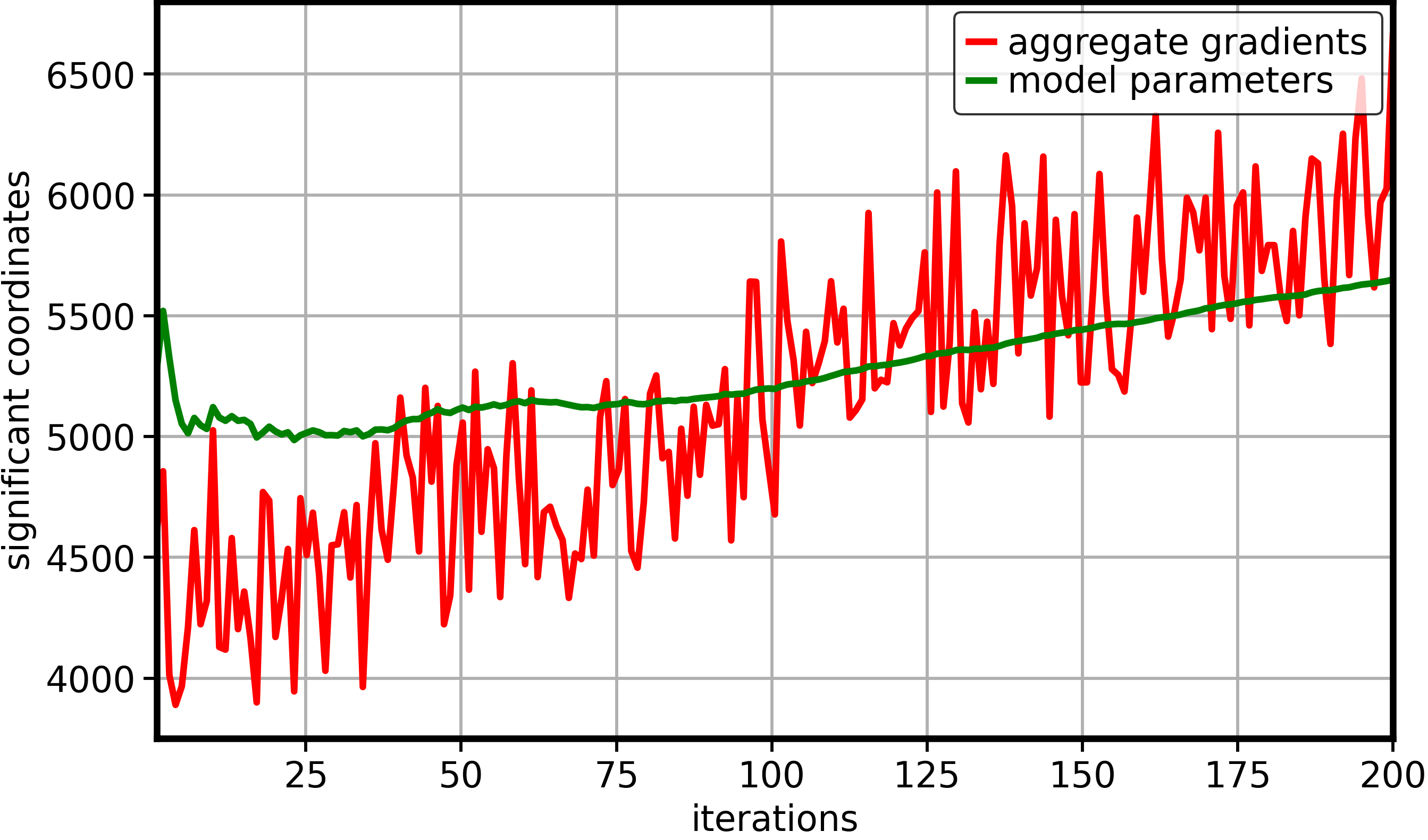}
        \caption{}
        \label{fig:kdd10_3}
    \end{subfigure}
    \hfill
    \begin{subfigure}[b]{0.24\textwidth}
        \includegraphics[width=\textwidth]{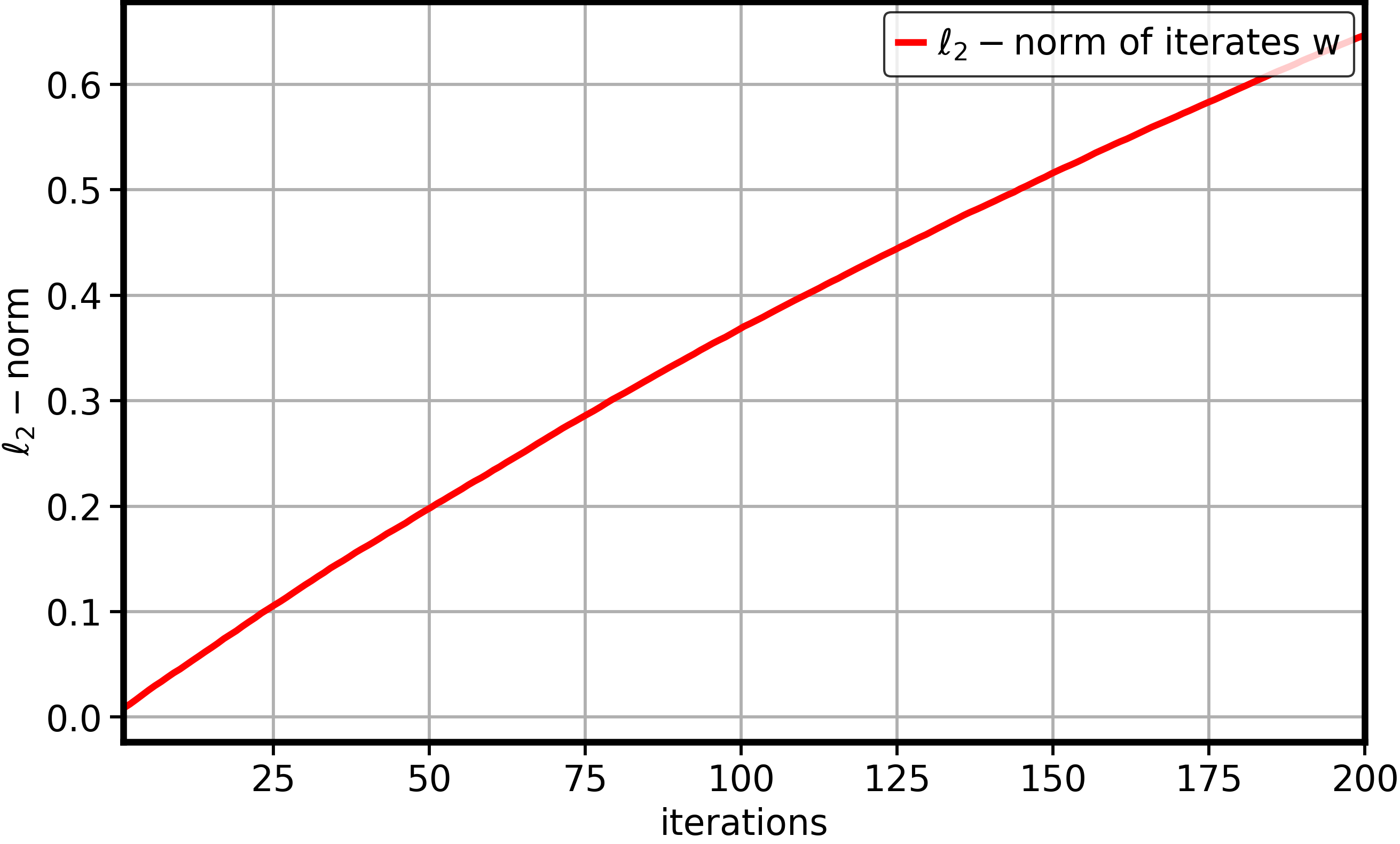}
        \caption{}
        \label{fig:kdd10_4}
    \end{subfigure}
    \caption{ KDD 10 Dataset (a) sorted stochastic gradient at a single edge device  (b) sorted aggregated stochastic gradient at the central server (c) significant coordinates of aggregated gradient vector and iterates at the central server (d) $\ell_2-$ norm of iterates.}
    \label{fig:kdd10_grad}
\end{figure*}
\begin{figure*}[!h]
    \centering
    \begin{subfigure}[b]{0.24\textwidth}
        \includegraphics[width=\textwidth]{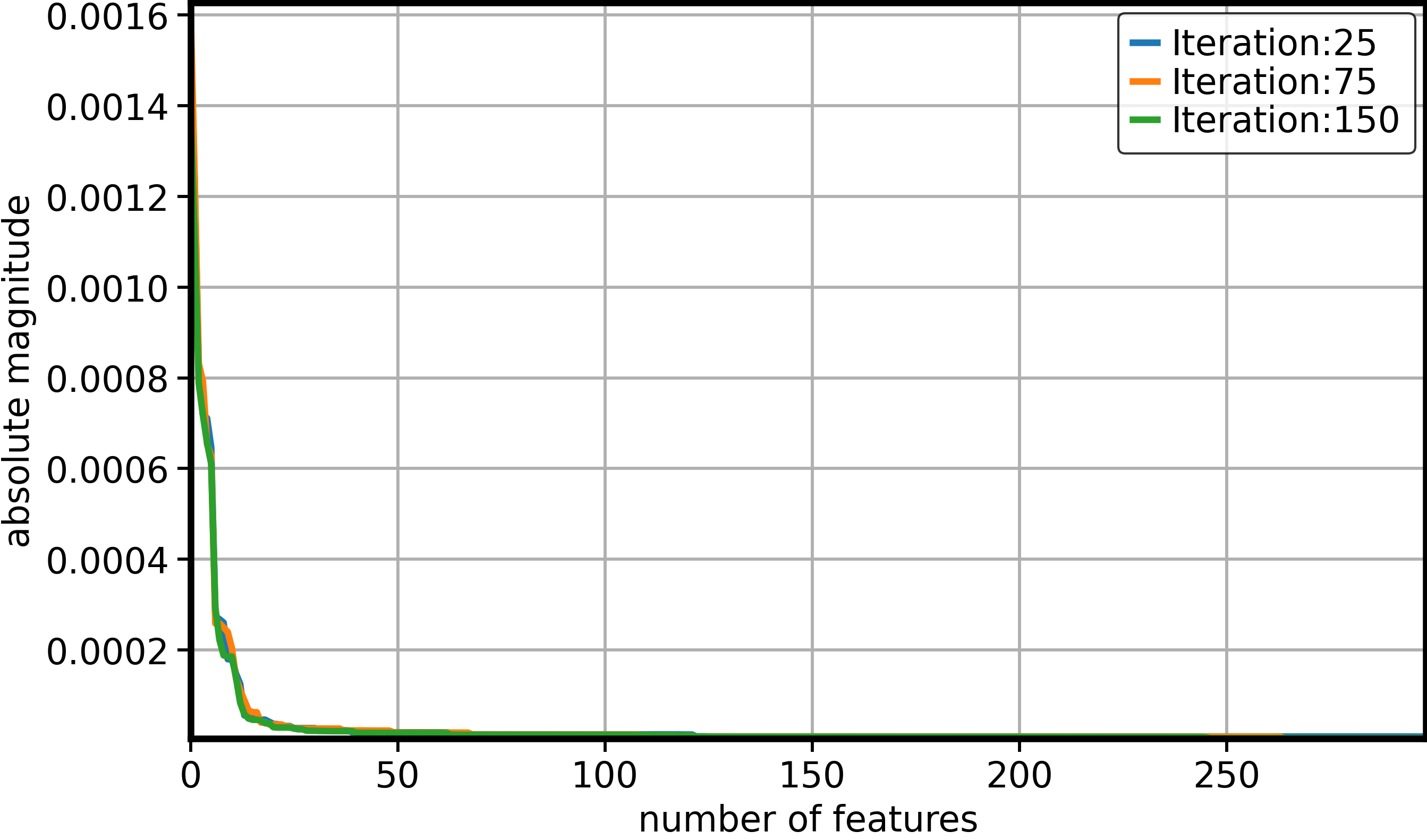}
        \caption{}
        \label{fig:kdd12_1}
    \end{subfigure}
    \hfill
    \begin{subfigure}[b]{0.24\textwidth}
        \includegraphics[width=\textwidth]{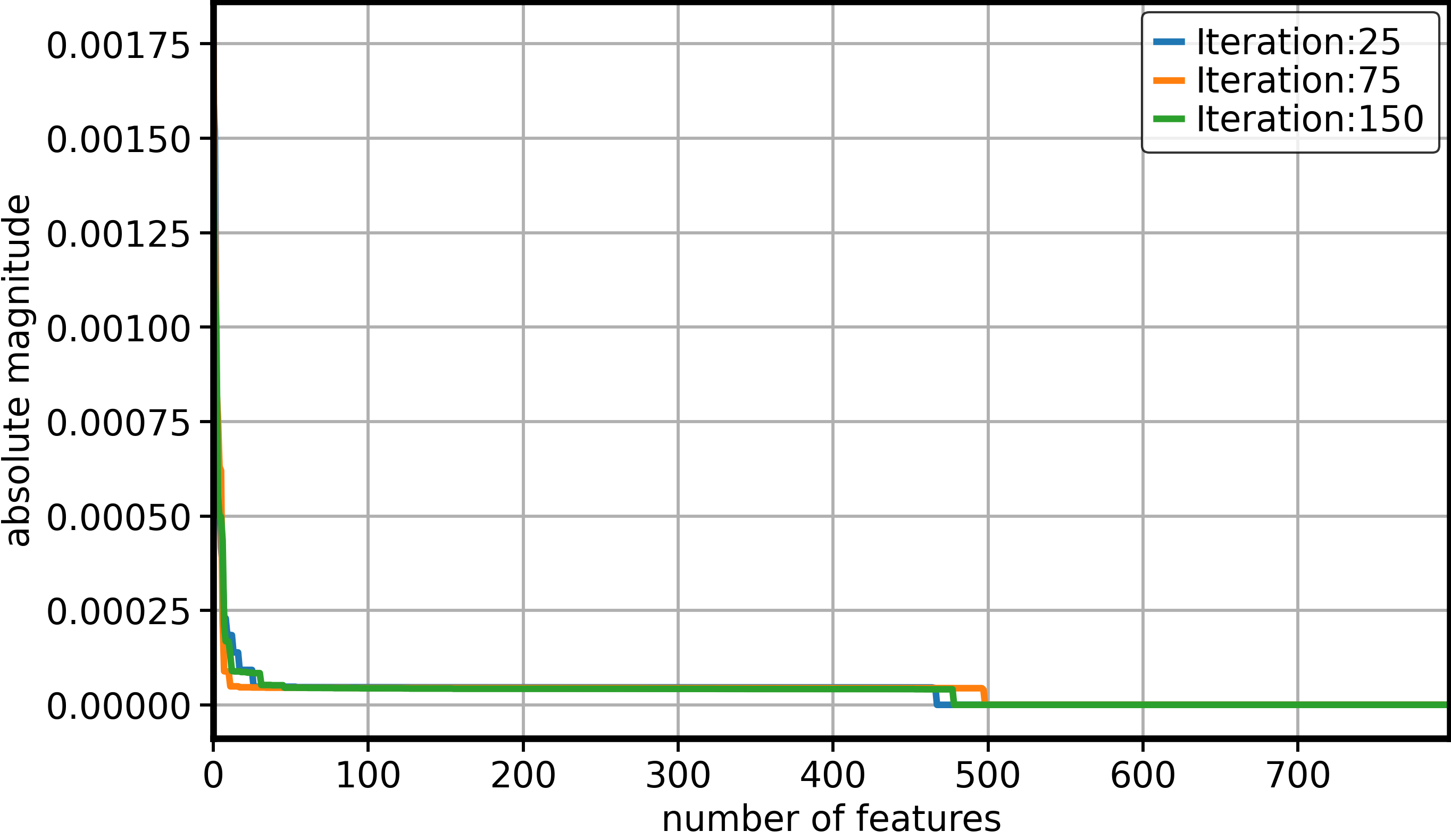}
        \caption{}
        \label{fig:kdd12_2}
    \end{subfigure}
        \hfill
    \begin{subfigure}[b]{0.24\textwidth}
        \includegraphics[width=\textwidth]{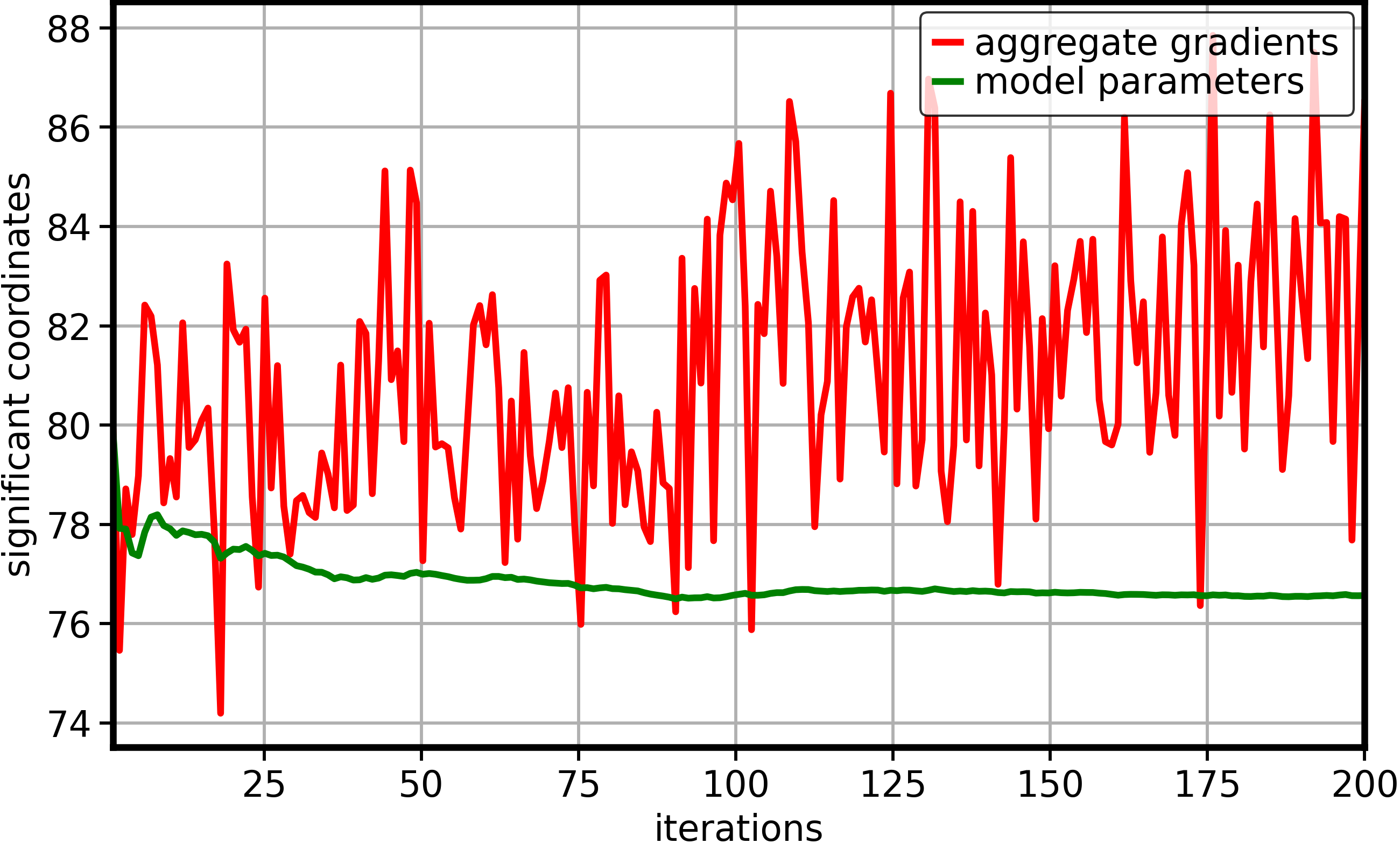}
        \caption{}
        \label{fig:kdd12_3}
    \end{subfigure}
    \hfill
    \begin{subfigure}[b]{0.24\textwidth}
        \includegraphics[width=\textwidth]{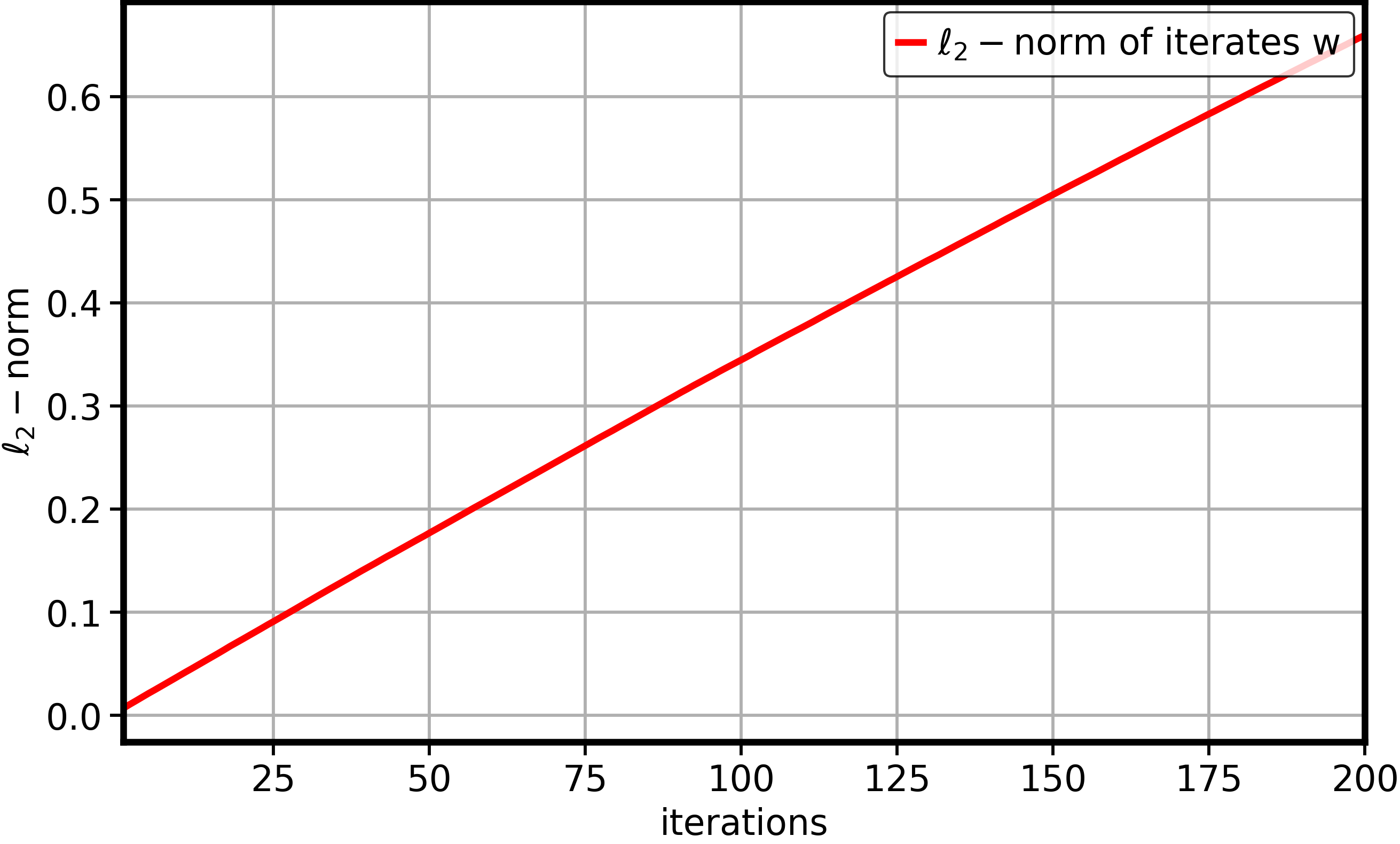}
        \caption{}
        \label{fig:kdd12_4}
    \end{subfigure}
    \caption{ KDD 12 Dataset (a) sorted stochastic gradient at a single edge device  (b) sorted aggregated stochastic gradient at the central server (c) significant coordinates of aggregated gradient vector and iterates at the central server (d) $\ell_2-$ norm of iterates.}
    \label{fig:kdd12_grad}
\end{figure*}
Similarly in Figures \ref{fig:kdd10_grad}(b) and \ref{fig:kdd12_grad}(b) we plot the absolute value of coordinates of  the aggregated gradient vector received at the central server sorted in decreasing order. This plot is captured across three time instants, at iteration 25, 75 and 150. We observe a similar approximately sparse or power law behaviour for aggregated gradient vectors. If we approximate the number of significant coordinates in computed gradient vectors just by visual inspection of the plots, it is  less than $3000$. This is  far less than the ambient dimension of the datasets we are operating on. 

However, a stronger notion of significant coordinates needs to be used. To this extent we use an alternative measure called {\it soft} sparsity defined in \cite{Lopes2016a}: 
		\begin{align}
			sp(\mathbf{x}) = \frac{\| \mathbf{x} \|^2_1}{\| \mathbf{x} \|^2_2}
		\end{align}
Soft-sparsity represents the number of significant coordinates in a vector. 		
Let $\bg$ and $\bw$ denote the aggregated gradient and the model parameter vector respectively. For KDD10 dataset, the number of significant coordinates for the aggregated gradient vector $sp(\bg)$ and the model parameter vector $sp(\bw)$ are  $\sim 5000$, which is much smaller than the ambient dimension. Similarly, for KDD12 dataset, the the number of significant coordinates for the aggregated gradient vector $sp(\bg)$ are $\sim 85$ and the model parameter vector $sp(\bw)$ are $\sim 75$. This can be seen in Figures  \ref{fig:kdd10_grad}(c) and \ref{fig:kdd12_grad}(c). 

Additionally, we show that the $\ell_2-$norm of the iterates at every iteration received at the central server does not explode and can be uniformly bounded above by a constant. This can be seen in Figures  \ref{fig:kdd10_grad}(d) and \ref{fig:kdd12_grad}(d) for datasets KDD10 and KDD12 respectively. \\
\section{Dealing with Bias}
The vanilla stochastic gradient descent has been well studied in presence of unbiased gradient updates \cite{bottou2018a}. Recently, biased gradient updates have been considered in SGD, for instance, in large-scale machine learning systems techniques  sparsification, quantization have been used to mitigate the issue of communication bottleneck. Such compression techniques produce biased gradient updates. There is a growing line of work on how different error accumulation and feedback schemes can mitigate the issue of bias and speed up convergence of SGD and distributed learning algorithms \cite{Karimireddy2019a, Stich2018}. More recent work on error feedback can be found in  \cite{gorbunov2020linearly,qian2021error}.While this is not the focus of our paper, we are more interested in understanding how bias plays a role in theoretical convergence analysis of SGD. To this extent, we turn towards the body of literature that has dealt with modeling bias into the stochastic gradient structure. Our main motivation to have a more general stochastic gradient structure and mild conditions on bias and noise comes from the work in \cite{Aja2020}. Additional works that have considered similar assumptions are \cite{stich2019unified, hu2021analysis,bottou2010large}    We believe utilizing the assumptions from this line of work into distributed optimization literature (for our paper, FL to be precise) can help us analyze algorithms on a broader scale.  

\end{document}